\theoremstyle{plain}
\newtheorem{theorem}{Theorem}[section]
\newtheorem{proposition}[theorem]{Proposition}
\newtheorem{lemma}[theorem]{Lemma}
\newtheorem{corollary}[theorem]{Corollary}
\theoremstyle{definition}
\newtheorem{assumption}{Assumption}
\theoremstyle{remark}
\newtheorem{remark}[theorem]{Remark}
\newcommand{\mmd}{\operatorname{MMD}}
\newcommand{\N}{\mathbb{N}}
\newcommand{\R}{\mathbb{R}}
\DeclareMathOperator*{\argmin}{arg\,min}
\DeclareMathOperator{\KL}{KL}
\DeclareMathOperator{\TV}{TV}
\DeclareMathOperator\var{var}
\newcommand\mc[1]{\mathcal{#1}}
\newcommand{\D}{\mathrm{d}}
\DeclarePairedDelimiter{\norm}{\lVert}{\rVert}
\DeclareMathOperator\E{\mathbb{E}}
\DeclareMathOperator\one{\mathbbm{1}}
\newcommand{\eps}{\varepsilon}
\DeclareMathOperator*{\argmax}{arg\,max}
\newcommand{\bb}{\mathbb}
\newcommand{\ellp}{{\ell'}}
\def\eqdef{\triangleq}
\renewcommand{\sf}{\mathsf}
\renewcommand{\var}{\operatorname{var}}
\newcommand{\cov}{\operatorname{cov}}
\newcommand{\coef}{\operatorname{Coef}}
\renewcommand{\one}{\mathbbm{1}}
\renewcommand{\cal}{\mathcal}
\renewcommand{\P}{\bb{P}}
\newcommand{\LF}{\sf{LFHT}}
\newcommand{\mLF}{\sf{mLFHT}}
\newcommand{\sref}[2]{\hyperref[#1]{#2 \ref*{#1}}}
\newcommand\numberthis{\addtocounter{equation}{1}\tag{\theequation}}
\newcommand{\Sch}{\textup{Scheff\'e}}
\def\eqdef{\triangleq}
\renewcommand{\sf}{\mathsf}
\renewcommand{\var}{\operatorname{var}}
\renewcommand{\one}{\mathbbm{1}}
\renewcommand{\cal}{\mathcal}
\renewcommand{\P}{\bb{P}}
\def\eqdef{\triangleq}
\renewcommand{\sf}{\mathsf}
\renewcommand{\var}{\operatorname{var}}
\renewcommand{\one}{\mathbbm{1}}
\renewcommand{\cal}{\mathcal}
\renewcommand{\P}{\bb{P}}
\renewcommand{\eps}{\epsilon}
\newenvironment{breakablealgorithm}
  {
   \begin{center}
     \refstepcounter{algorithm}
     \hrule height.8pt depth0pt \kern2pt
     \renewcommand{\caption}[2][\relax]{
       {\raggedright\textbf{\ALG@name~\thealgorithm} ##2\par}%
       \ifx\relax##1\relax 
         \addcontentsline{loa}{algorithm}{\protect\numberline{\thealgorithm}##2}%
       \else 
         \addcontentsline{loa}{algorithm}{\protect\numberline{\thealgorithm}##1}%
       \fi
       \kern2pt\hrule\kern2pt
     }
  }{
     \kern2pt\hrule\relax
   \end{center}
  }
\newcommand*{\addFileDependency}[1]{
  \typeout{(#1)}
  \@addtofilelist{#1}
  \IfFileExists{#1}{}{\typeout{No file #1.}}
}
\newcommand*{\myexternaldocument}[1]{
    \externaldocument{#1}
    \addFileDependency{#1.tex}
    \addFileDependency{#1.aux}
}
\title{Kernel-Based Tests for \\ Likelihood-Free Hypothesis Testing}
\author{%
  Patrik R\'obert Gerber\thanks{Equal contribution.} \\
  Department of Mathematics, MIT \\
  Cambridge, MA 02139 \\
  \texttt{prgerber@mit.edu} 
  \\
  \And Tianze Jiang\footnotemark[1]  \\ Department of Mathematics, MIT \\
  Cambridge, MA 02139 \\
  \texttt{tjiang@mit.edu} 
  \\
  \And Yury Polyanskiy\footnotemark[1]  \\ Department of EECS, MIT \\ Cambridge, MA 02139 \\ \texttt{yp@mit.edu} 
  \\ 
  \And Rui Sun\footnotemark[1]  \\ Department of Mathematics, MIT \\ Cambridge, MA 02139 \\ \texttt{eruisun@mit.edu}
}
\begin{document}
\doparttoc
\part{}
\maketitle

\begin{abstract}
Given $n$ observations from two balanced classes, consider the task of labeling an additional $m$ inputs that are known to all belong to \emph{one} of the two classes. 
Special cases of this problem are well-known: with complete
knowledge of class distributions ($n=\infty$) the
problem is solved optimally by the likelihood-ratio test; when
$m=1$ it corresponds to binary classification; and when $m\approx n$ it is equivalent to two-sample testing. The intermediate settings occur in the field of likelihood-free inference, where labeled samples are obtained by running forward simulations and the unlabeled sample is collected experimentally. In recent work it was discovered that there is a fundamental trade-off
between $m$ and $n$: increasing the data sample $m$ reduces the amount $n$ of training/simulation
data needed. In this work we (a) introduce a generalization where unlabeled samples 
come from a mixture of the two classes -- a case often encountered in practice; (b) study the minimax sample complexity for non-parametric classes of densities under \textit{maximum mean
discrepancy} (MMD) separation; and (c) investigate the empirical performance of kernels parameterized by neural networks on two tasks: detection
of the Higgs boson and detection of planted DDPM generated images amidst
CIFAR-10 images. For both problems we confirm the existence of the theoretically predicted asymmetric $m$ vs $n$ trade-off.

\end{abstract}

\section{Likelihood-Free Inference}


The goal of likelihood-free inference (LFI) \cite{diggle1984monte,gutmann2018likelihood,brehmer2020mining,cranmer2020frontier}, also
called simulation-based inference (SBI), is to perform statistical inference in a setting where the
data generating process is a black-box, but can be simulated. Given the ability to generate
samples $X_\theta \sim P_\theta^{\otimes n}$ for any parameter $\theta$, and given real-world data
$Z \sim P_{\theta^\star}^{\otimes m}$, we want to use our simulations to learn about the truth
$\theta^\star$. LFI is particularly relevant in areas of science where we have precise but complex
laws of nature, for which we can do (stochastic) forward simulations, but
can not directly compute the (distribution) density $P_\theta$. The Bayesian community 
approached the problem under the name of Approximate Bayesian Computation (ABC) \cite{csillery2010approximate,sisson2018handbook,beaumont2019approximate}. More recent ML-based methods where regressors and
classifiers are used to summarize data, select regions of interest, approximate likelihoods or
likelihood-ratios \cite{izbicki2014high,jiang2017learning,papamakarios2019sequential,dalmasso2020confidence,thomas2022likelihood} have also emerged for this challenge. 

Despite empirical advances, the theoretical study of frequentist LFI is still in its infancy. We focus on the nonparametric and non-asymptotic setting, which we justify as follows. For
applications where tight error control is critical one might be reluctant to rely on asymptotics.
More broadly, the non-asymptotic regime can uncover new phenomena and provide insights for algorithm design. Further, parametric models are clearly at odds with the black-box assumption. Recently,
\cite{RY22SBI} proposed likelihood-free hypothesis testing \eqref{eqn:LFHT def} as a simplified model
and found minimax optimal tests for a range of nonparametric distribution classes, thereby identifying   a
\textit{fundamental simulation-experimentation trade-off} between the number of simulated
observations $n$ and the size of the experimental data sample $m$. Here we extend~\cite{RY22SBI}, and prior related work \cite{huang2012classification,huang2012hypothesis,kelly2012classification,kelly2010universal,gerber2023minimax}, to a new setting designed to model experimental setups more truthfully and derive sample complexity (upper and lower bounds) for kernel-based tests over nonparametric classes.

While minimax optimal, the algorithms of \cite{RY22SBI,gerber2023minimax} are impractical as they rely on discretizing
the observations on a regular grid. Thus, both in our theory as well as experiments we turn to kernel methods which provide an empirically more powerful set of
algorithms that have shown success in nonparametric testing
\cite{gretton2009fast,gretton2012kernel,jitkrittum2018informative,gretton2012optimal,
liu2020learning}. 

\textbf{Contributions}\quad  Our contributions are twofold. \textit{Theoretically}, we 
introduce \emph{mixed likelihood-free hypothesis testing} \eqref{eqn:lfht mix def}, which is a generalization of \eqref{eqn:LFHT def} and provides a better model of applications such as the search for new
physics \cite{cowan2011asymptotic,lista2017statistical}. We propose a robust kernel-based test and derive both upper and lower bounds on its minimax sample complexity over a large nonparametric class of densities, generalizing multiple results in \cite{kelly2010universal,huang2012classification,huang2012hypothesis,kelly2012classification,li2019optimality,RY22SBI,gerber2023minimax}. Although the simulation-experimentation ($m$ vs $n$) trade-off has been proven in the minimax sense (that is, for some worst-case data distribution), it is not clear whether it actually occurs in real data. Our second contribution is the \textit{empirical} confirmation of the existence of an asymmetric trade-off, cf. Figure \ref{fig:trade off}. To this end we construct state-of-the-art tests building on ideas of \cite{sutherland2016generative,liu2020learning} on learning good kernels from the data. We execute this program in two settings: the Higgs boson discovery \cite{baldi2014searching}, and detecting diffusion \cite{ho2020denoising} generated images planted in the CIFAR-10 \cite{krizhevsky2009learning} dataset.

\subsection{$\LF$ and the Simulation-Experimentation Trade-off}\label{sec:lfht}
Suppose we have i.i.d. samples $X,Y$ each of size $n$ from two unknown distributions $P_X,P_Y$ on a measurable space $\cal X$, as well as a third i.i.d. sample $Z \sim P_Z$ of size $m$. In the context of LFI, we may think of the samples $X,Y$ as being generated by our simulator, and $Z$ being the data collected in the real world. The problem we refer to as likelihood-free hypothesis testing is the task of deciding between the hypotheses
\begin{equation}\label{eqn:LFHT def}\tag{$\LF$}
    H_0: P_Z=P_X \quad\text{versus}\quad H_1: P_Z = P_Y. 
\end{equation}
This problem originates in \cite{gutman1989asymptotically,ziv1988classification},
where authors study the exponents of error decay for finite $\cal X$ and fixed $P_X,P_Y$ as $n
\sim m \to \infty$; more recently \cite{kelly2010universal,huang2012classification,huang2012hypothesis,kelly2012classification,acharya2012competitive,li2019optimality,RY22SBI,gerber2023minimax} it is studied in the non-asymptotic regime. Assuming that $P_X,P_Y$ belong to a known nonparametric class of distributions $\cal P$ and are
guaranteed to be $\epsilon$-separated with respect to total variation (TV) distance (i.e. $\TV(P_X,P_Y)\geq\epsilon$), \cite{RY22SBI} characterizes the sample sizes $n$ and $m$ required for the sum
of type-I and type-II errors to be small, as
a function of $\epsilon$ and for several different $\cal P$'s. Their results show, for three settings of $\cal P$, that $(i)$ testing \eqref{eqn:LFHT def} at vanishing error is possible even when $n$
is not large enough to estimate $P_X$ and $P_Y$ within total variation distance $\cal O(\epsilon)$, and that $(ii)$ to achieve a fixed level of error, say $\alpha$, one can
\textit{trade off}
$m$ vs. $n$ along a curve of the form $\{\min\{n, \sqrt{mn}\} \gtrsim n_\sf{TS}(\alpha, \epsilon, \cal P), m\gtrsim \log(1/\alpha)/\epsilon^2\}$. Here $n_\sf{TS}$ denotes the minimax sample complexity of two-sample testing over $\cal P$, i.e. the minimum number of observations $n$ needed from $P_X,P_Y \in \cal P$ to distinguish  the cases $\TV(P_X,P_Y)\geq\epsilon$ versus $P_X=P_Y$. Here $\gtrsim$ suppresses dependence on constants and untracked parameters. 

It is unclear, however, whether predictions drawn from minimax sample complexities over specified distribution classes can be observed in real-world data. Without the theory, a natural expectation is that the error contour $\{(m,n):\exists\text{ a test with total error $\leq\alpha$}\}$ would look similar to that of minimax two-sample testing with unequal sample size, namely $\{(m,n):\min\{n, m\}\gtrsim n_\sf{TS}(\alpha, \epsilon, \cal P)\}$, i.e. $n$ and $m$ simply need to be above a certain threshold simultaneously (as is the case for e.g. two-sample testing over smooth densities \cite{arias2018remember,li2019optimality}). However, from Figures \ref{fig:toy trade off} and \ref{fig:trade off} we see that there is indeed a non-trivial trade-off between $n$ and $m$: the contours are not always parallel to the axes and aren't symmetric about the line $m=n$. The importance of Fig.~\ref{fig:trade off} is in demonstrating that said trade-off is not a kink of a theory that arises due to some esoteric worst-case data distribution, but is instead a real effect observed in state-of-the-art LFI algorithms ran on actual data. We remark that the $n$ used in this plot is the total number of simulated samples (most of which are used for choosing a neural-network parameterized kernel) and are not just the $n$ occuring in \Cref{thm: MMDUpper,thm:lower bds} which apply to a \textit{fixed} kernel. See \Cref{sec:learning kernels} for details on sample division.

\subsection{Mixed Likelihood-Free Hypothesis Testing}
\label{sec:new physics}
A prominent application of likelihood-free inference lies in the field of particle physics. Scientists run sophisticated experiments in the hope of finding a new particle or phenomenon. Often said phenomenon can be predicted from theory, and thus can be simulated, as was the case for the Higgs boson whose existence was verified after nearly $50$ years at the Large Hadron Collider (LHC) \cite{chatrchyan2012observation,adam2015higgs}. 

Suppose we have $n$ simulations from the \emph{background} distribution $P_X$ and the \emph{signal} distribution $P_Y$. Further, we also have $m$ (real-world) datapoints from $P_Z=(1-\nu)P_X+\nu P_Y$, i.e. the observed data is a mixture between the background and signal distributions with rate parameter $\nu$. The goal of physicists is to construct confidence intervals for $\nu$, and a \emph{discovery} corresponds to a $5\sigma$ confidence interval that excludes $\nu=0$. We model this problem by testing
\begin{equation}\label{eqn:lfht mix def}\tag{$\sf{mLFHT}$}
    H_0:\nu=0 \quad\text{versus}\quad H_1:\nu \geq \delta
\end{equation}
for fixed (usually predicted) $\delta>0$. See the rigorous definition of \eqref{eqn:lfht mix def} in \Cref{sec:rates}. In particular, a discovery can be claimed if $H_0$ is rejected.

\begin{wrapfigure}{R}{0.4\textwidth}
    \vspace{-20mm}
    \begin{center}
    \includegraphics[width=0.45\textwidth]{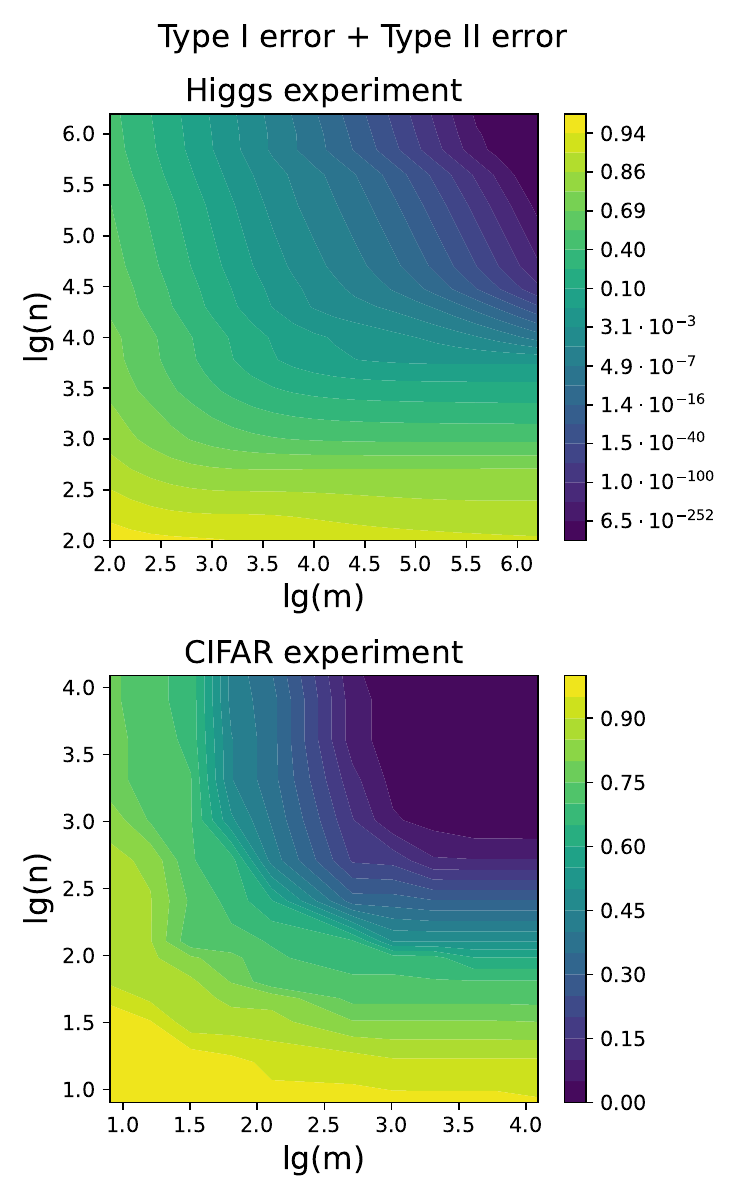}
    \vspace{-5mm}
    \caption{\small
    $n$ versus $m$ trade-off for the Higgs and CIFAR experiments using our test in \Cref{sec:test_stat}. Error probabilities are estimated by normal approximation for Higgs and simulated for CIFAR.}
    \end{center}
    \label{fig:trade off}
    \vspace{-10mm}
\end{wrapfigure}

\section{The Likelihood-Free Test Statistic}\label{sec:test_stat}

This section introduces the testing procedure based on Maximum Mean Discrepancy (MMD) that we study throughout the paper both theoretically and empirically. First, we introduce the necessary background on MMD in Section \ref{sec: MMDintro}. Then, we define our test statistics in Section \ref{sec:the statistic}.

\subsection{Kernel Embeddings and MMD}\label{sec: MMDintro}
Given a set $\cal X$, we call the function $K:\cal X^2 \to \R$ a kernel if the $n\times n$ matrix with $ij$'th entry $K(x_i,x_j)$ is symmetric positive semidefinite for all choices of $x_1,\dots,x_n \in \cal X$ and $n\geq1$. There is a unique reproducing kernel Hilbert space (RKHS) $\cal H_K$ associated to $K$. $\cal H_K$ consists of functions $\cal X \mapsto \R$ and satisfies the reproducing property $\langle K(x,\cdot), f\rangle_{\cal H_K} = f(x)$ for all $f \in \cal H_k$ and $x \in \cal X$, in particular $K(x,\cdot) \in \cal H_K$. Given a probability measure $P$ on $\cal X$, define its kernel embedding $\theta_P$ as
\begin{equation}\label{eqn:def embedding}
    \theta_P \eqdef \E_{X\sim P} K(X,\cdot) = \int_{\cal X} K(x,\cdot) P(\D x). 
\end{equation}
Given the kernel embeddings of two probability measures $P,Q$, we can measure their distance in the RKHS by $\mmd(P,Q) \eqdef \|\theta_P-\theta_Q\|_{\cal H_K}$, where MMD stands for maximum mean discrepancy. MMD has a closed form thanks to the reproducing property and linearity:
\begin{align*}
    \mmd^2(P,Q) &= \E\Big[K(X,X')+K(Y,Y')-2K(X,Y)\Big] 
\end{align*}
where $(X,X',Y,Y')\sim P^{\otimes 2}\otimes Q^{\otimes 2}$. In particular, if $P,Q$ are empirical measures based on observations, we can evaluate the MMD exactly, which is crucial in practice. Yet another attractive property of MMD is that (under mild integrability conditions) it is an integral probability metric (IPM) where the supremum is over the unit ball of the RKHS $\cal H_K$. See e.g. \cite{10.7551/mitpress/4175.001.0001,muandet2017kernel} for references. The following result is a consequence of the fact that self-adjoint compact operators are diagonalizable.

\begin{theorem}[Hilbert–Schmidt]\label{thm:HS}
Suppose that $K \in L^2(\mu\otimes\mu)$ is symmetric. Then there exists a sequence $(\lambda_j)_{j\geq 1} \in \ell^2$ and an orthonormal basis $\{e_j\}_{j\geq1}$ of $L^2(\mu)$ such that $ K(x,y) = \sum_{j\geq1}\lambda_j e_j(x)e_j(y) $ for all $j\geq1$, 
    where convergence is in $L^2(\mu\otimes\mu)$. 
\end{theorem}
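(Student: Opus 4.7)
The plan is to prove this as a direct consequence of the spectral theorem for compact self-adjoint operators applied to the integral operator associated to $K$. Define $T_K \colon L^2(\mu) \to L^2(\mu)$ by $(T_K f)(x) = \int K(x,y) f(y)\, \mu(\D y)$. First I would verify that $T_K$ is well-defined and bounded: by Cauchy--Schwarz and Fubini, $\|T_K f\|_{L^2(\mu)} \leq \|K\|_{L^2(\mu \otimes \mu)} \|f\|_{L^2(\mu)}$, so $T_K$ is bounded with operator norm controlled by the Hilbert--Schmidt norm $\|K\|_{L^2(\mu \otimes \mu)}$. Second, $T_K$ is self-adjoint because $K$ is symmetric. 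Third, $T_K$ is Hilbert--Schmidt (hence compact): fixing any orthonormal basis $\{f_i\}$ of $L^2(\mu)$, Parseval applied to $K(x,\cdot) \in L^2(\mu)$ for $\mu$-a.e.\ $x$ and then integrated in $x$ gives $\sum_i \|T_K f_i\|_{L^2(\mu)}^2 = \|K\|_{L^2(\mu \otimes \mu)}^2 < \infty$.

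Next I would apply the spectral theorem for compact self-adjoint operators on a Hilbert space to obtain an orthonormal system $\{e_j\}_{j \geq 1}$ of eigenfunctions of $T_K$ with real eigenvalues $\lambda_j$ satisfying $\lambda_j \to 0$, such that $T_K e_j = \lambda_j e_j$, and the $e_j$'s span $(\ker T_K)^\perp$. I would extend $\{e_j\}$ by an orthonormal basis of $\ker T_K$, assigning those vectors eigenvalue $0$, to obtain a full orthonormal basis $\{e_j\}_{j \geq 1}$ of $L^2(\mu)$ as required.

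Then I would identify the expansion coefficients of $K$ in the tensor product basis $\{e_i \otimes e_j\}_{i,j \geq 1}$, which is an orthonormal basis of $L^2(\mu \otimes \mu)$. A Fubini computation yields
\begin{equation*}
\langle K, e_i \otimes e_j \rangle_{L^2(\mu \otimes \mu)} = \int e_i(x) (T_K e_j)(x)\, \mu(\D x) = \lambda_j \langle e_i, e_j\rangle_{L^2(\mu)} = \lambda_j \delta_{ij},
\end{equation*}
so the Parseval expansion of $K$ in this basis reduces to the diagonal sum $K = \sum_{j \geq 1} \lambda_j\, e_j \otimes e_j$ with convergence in $L^2(\mu \otimes \mu)$. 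Finally, Parseval gives $\sum_j \lambda_j^2 = \|K\|_{L^2(\mu \otimes \mu)}^2 < \infty$, so $(\lambda_j) \in \ell^2$.

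The only subtle step is confirming that the off-diagonal coefficients vanish and that one genuinely has an orthonormal basis of $L^2(\mu)$ (not merely of $(\ker T_K)^\perp$); both are handled uniformly by the extension to $\ker T_K$ with zero eigenvalues, which keeps the diagonal sum unchanged. The rest is bookkeeping once the compact self-adjoint spectral theorem is invoked, so I do not anticipate a serious obstacle.
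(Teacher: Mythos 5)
Your proposal is correct and follows exactly the route the paper indicates: the paper states only that the result "is a consequence of the fact that self-adjoint compact operators are diagonalizable," and your argument is precisely the standard fleshing-out of that remark — show $T_K$ is Hilbert--Schmidt hence compact and self-adjoint, apply the spectral theorem, extend eigenfunctions to a full basis, and read off the diagonal expansion from the tensor-product Parseval coefficients. There is no substantive difference from what the paper intends, only a difference in how much detail is written out.
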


\begin{assumption}
    Unless specified otherwise, we implicitly assume a choice of a non-negative measure $\mu$ and kernel $K$ for which the conditions of \Cref{thm:HS} hold. Note that $\lambda_j\geq 0$ and depend on $\mu$. 
\end{assumption}

\textbf{Removing the Bias}\label{sec:diagonal}\quad
In our proofs we work with the kernel embedding of empirical measures for which we need to modify the inner product $\langle\cdot,\cdot\rangle_{\cal H_K}$ (and thus $\mmd$) slightly by removing the diagonal terms. Namely, given i.i.d. samples $X,Y$ of size $n, m$ respectively and corresponding empirical measures $\widehat P_X, \widehat P_Y$, we define
\begin{equation}\label{eqn:mmd_u}
\begin{aligned}
    \mmd_u^2(\widehat P_X, \widehat P_Y) \eqdef \sum_{i \neq j} \frac{K(X_i,X_j)}{n(n-1)} +  \sum_{i\neq j} \frac{K(Y_i,Y_j)}{m(m-1)} - 2 \sum_{i,j} \frac{K(X_i,Y_j)}{mn}. 
\end{aligned}
\end{equation}
We also write $\langle \theta_{\widehat P_X}, \theta_{\widehat P_X}\rangle_{u,\cal H_K} \eqdef \|\theta_{\widehat P_X}\|_{u,\cal H_K}^2 \eqdef \frac{1}{n(n-1)}\sum_{i\neq j} K(X_i,X_j)$ and extend linearly. The $u$ stands for unbiased, since $\E \mmd_u^2(\widehat P_X, \widehat P_Y) = \mmd^2(P_X,P_Y) \neq \E \mmd^2(\widehat P_X, \widehat P_Y)$ in general.



\subsection{Test Statistic}\label{sec:the statistic}
With Section \ref{sec: MMDintro} behind us, we are in a position to define the test statistic that we use to tackle \eqref{eqn:lfht mix def}. Suppose that we have samples $X,Y,Z$ of sizes $n,n,m$ from the probability measures $P_X,P_Y,P_Z$. Write $\widehat P_X$ for the empirical measure of sample $X$, and analogously for $Y,Z$. The core of our test statistic for \eqref{eqn:lfht mix def} is the following:
\begin{align*}
    T(X,Y,Z) &\eqdef \langle \theta_{\widehat P_Z}, \theta_{\widehat P_Y}-\theta_{\widehat P_X} \rangle_{u, \cal H_K} = \frac{1}{nm}\sum_{i=1}^n\sum_{j=1}^m \Big\{K(Z_j,Y_i)-K(Z_j,X_i)\Big\}.  \numberthis\label{eq: teststats}
\end{align*}
Note that $T$ is of the additive form  
$\frac{1}{m}\sum_{j=1}^m f(Z_i)$ where $f(z)\eqdef \theta_{\widehat P_Y}(z)-\theta_{\widehat P_X}(z)$ can be interpreted as the \emph{witness function} of \cite{gretton2012kernel, jitkrittum2018informative}.
Given some $\pi \in [0,1]$ (taken to be half the predicted signal rate $\delta/2$ in our proofs), the output of our test is
\begin{equation}
    \Psi_{\pi} = \one\Big\{T(X,Y,Z) \geq \gamma(X,Y,\pi)\Big\}, 
    \label{eq: test} \text{ where }\gamma(X,Y,\pi) = \pi \mmd^2_u(\widehat P_X, \widehat P_Y) + T(X,Y,X). 
\end{equation}
The threshold $\gamma$ gives $\Psi_\pi$ a natural geometric interpretation: it checks whether the projection of $\theta_{\widehat P_Z}-\theta_{\widehat P_X}$ onto the vector $\theta_{\widehat P_Y}-\theta_{\widehat P_X}$ falls further than $\pi$ along the segment joining $\theta_{\widehat P_X}$ to $\theta_{\widehat P_Y}$ (up to deviations due to the omitted diagonal terms, see \Cref{sec:diagonal}).

Setting $\delta=1$ in \eqref{eqn:lfht mix def} recovers \eqref{eqn:LFHT def}, and the corresponding test output is $\Psi_{\delta/2}=\Psi_{1/2}=1$ if and only if $\mmd_u(\widehat P_Z, \widehat P_X) \geq \mmd_u(\widehat P_Z, \widehat P_Y)$. This very statistic (i.e. $\mmd_u(\widehat P_Z,\widehat P_X)-\mmd_u(\widehat P_X,\widehat P_Y)$) has been considered in the past for relative goodness-of-fit testing \cite{bounliphone2015test} where it's asymptotic properties are established. In the non-asymptotic setting, if MMD is replaced by the $L^2$-distance we recover the test statistic studied by \cite{huang2012classification,kelly2012classification,RY22SBI}. However, we are the first to introduce $\Psi_\delta$ for $\delta\neq1$ and to study MMD-based tests for (m)LFHT in a non-asymptotic setting. 

\textbf{Variance cancellation}\quad At first sight it may seem more natural to the reader to threshold the
distance $\mmd_u(\widehat P_Z, \widehat P_X)$, resulting in rejection if, say, $\mmd_u(\widehat P_Z, \widehat P_X) \geq \mmd_u(\widehat
P_X, \widehat P_Y)\delta/2$. The geometric meaning of this would be similar to the one
outlined above. However, there is a crucial difference: \eqref{eqn:LFHT def} (the case $\delta=1$) is possible with very little experimental data $m$ due to the
\textit{cancellation of variance}. More precisely, the statistic $\mmd^2(\widehat P_Z, \widehat
P_X)$ contains the term $\frac{1}{m(m-1)}\sum_{i\neq j}K(Z_i,Z_j)$ --- whose variance is
prohibitively large and would inflate the $m$ required for reliable testing --- but this can be
canceled by subtracting $\mmd^2(\widehat P_Z, \widehat P_Y)$. Our statistic
$T(X,Y,Z)-\gamma(X,Y,\pi)$ simply generalizes this idea to \eqref{eqn:lfht mix def}.


\section{Minimax Rates of Testing}\label{sec:rates}

\subsection{Upper Bounds on the Minimax Sample Complexity of \texorpdfstring{\eqref{eqn:lfht mix def}}{mLFHT}}\label{sec:upper bounds}
Let us start by reintroducing \eqref{eqn:lfht mix def} in a rigorous fashion. Given $C,\epsilon,R \geq 0$, let $\cal P_\mu(C,\epsilon,R)$ denote the set of triples $(P_X,P_Y,P_Z)$ of distributions such that the following three conditions hold:
\begin{enumerate}
    \vspace{-2mm}\item[$(i)$]  $P_X, P_Y$ and $P_Z$ have $\mu$-densities bounded by $C$, \label{item:mlfht1}
    \vspace{-1mm}\item[$(ii)$] $\mmd(P_X,P_Y) \geq \epsilon$, and\label{item:mlfht2}
    \vspace{-1mm}\item[$(iii)$] $\mmd(P_Z, (1-\nu) P_X+\nu P_Y) \leq R\cdot\mmd(P_X,P_Y)$\label{item:mlfht3}, 
\end{enumerate}
\vspace{-1mm}
where we define $\nu = \nu(P_X,P_Y,P_Z) = \arg\min_{\nu'\in\R}\mmd(P_Z,(1-\nu')P_X + \nu'P_Y)$. For some $\delta > 0$, consider the two hypotheses 
\begin{equation}
\begin{aligned}\label{eqn:mlfht rigorous def}
    &H_0(C,\epsilon, \delta, R): (P_X,P_Y,P_Z) \in \cal P_\mu(C,\epsilon,R) \text{ and } \nu=0 \\
    &H_1(C,\epsilon, \delta, R): (P_X,P_Y,P_Z) \in \cal P_\mu(C,\epsilon,R) \text{ and } \nu\geq\delta, 
\end{aligned}
\end{equation}
which we regard as subsets of probability measures. Notice that $R$ controls the level of mis-specification in the direction that is orthogonal to the line connecting the kernel embeddings of $P_X$ and $P_Y$. Setting $R=0$ simply asserts that $P_Z$ is guaranteed to be a mixture of $P_X$ and $P_Y$, as is the case for prior works on LFHT. Before presenting our main result on the minimax sample complexity of mLFHT, let us define one final piece of terminology. We say that a test $\Psi$, which takes some data as input and takes values in $\{0,1\}$, has total error probability less than $\alpha$ for the problem of testing $H_0$ vs $H_1$ if 
\begin{equation}\label{eqn:tot err prob}
    \sup_{P \in H_0} P(\Psi = 1) + \sup_{Q \in H_1} Q(\Psi=0) \leq \alpha. 
\end{equation}

\begin{theorem}\label{thm: MMDUpper}
Suppose we observe three i.i.d. samples $X,Y,Z$ from distributions $P_X,P_Y,P_Z$ composed of $n,n,m$ observations respectively and let $C \in (0,\infty)$ and $R,\epsilon \geq 0$ and $\delta \in (0,1)$. There exists a universal constant $c>0$ such that $\Psi_{\delta/2}$ defined \Cref{sec:the statistic} tests $H_0$ vs $H_1$, as defined in \eqref{eqn:mlfht rigorous def}, at total error $\alpha$ provided
    \begin{align*}
        \min\{m,n\} \geq c \frac{C\|\lambda\|_\infty\log(1/\alpha)}{\left(\epsilon\delta/(1+R)\right)^2} \qquad\text{ and }\qquad
        \min\{n, \sqrt{nm}\} \geq c \frac{C\|\lambda\|_2\log(1/\alpha)}{\delta\epsilon^2}. 
    \end{align*} 
\end{theorem}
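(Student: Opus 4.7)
The plan is to analyze the statistic $W \eqdef T(X,Y,Z) - \gamma(X,Y,\delta/2)$ via a separation-of-means argument followed by Chebyshev's inequality, and to obtain the $\log(1/\alpha)$ dependence by a standard boosting step: first I would prove that the test attains some fixed total error (say $1/4$) under the same sample-size conditions \emph{without} the $\log(1/\alpha)$ factor, and then reduce total error to $\alpha$ by taking the majority vote of $O(\log(1/\alpha))$ independent copies.

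The first step is the mean computation. Using $\E\,\mmd_u^2(\widehat P_X,\widehat P_Y) = \mmd^2(P_X,P_Y)$, $\E T(X,Y,Z) = \langle \theta_{P_Z},\theta_{P_Y}-\theta_{P_X}\rangle_{\cal H_K}$, and (thanks to diagonal removal in the unbiased inner product) $\E T(X,Y,X) = \langle \theta_{P_X},\theta_{P_Y}-\theta_{P_X}\rangle_{\cal H_K}$, one obtains
\[ \E W \;=\; \langle \theta_{P_Z}-\theta_{P_X},\,\theta_{P_Y}-\theta_{P_X}\rangle_{\cal H_K} - (\delta/2)\,\mmd^2(P_X,P_Y) \;=\; (\nu - \delta/2)\,\mmd^2(P_X,P_Y), \]
where the final equality is the projection characterisation of $\nu$. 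Hence $\E W \leq -(\delta/2)\epsilon^2$ under $H_0$ and $\E W \geq (\delta/2)\epsilon^2$ under $H_1$, giving a signed gap of at least $\delta\epsilon^2$. Note that the $R$-mis-specification of $P_Z$ drops out of $\E W$ because it lies (in $\cal H_K$) orthogonal to $\theta_{P_Y}-\theta_{P_X}$ by the very definition of $\nu$.

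The second and most delicate step is to bound $\var(W)$. I would decompose by the sources of randomness and apply the Hilbert--Schmidt expansion $K(x,y) = \sum_j \lambda_j e_j(x) e_j(y)$ to each piece. Conditioning on $X,Y$ gives $\var(T(X,Y,Z)\mid X,Y) = \tfrac{1}{m}\var_{Z\sim P_Z}(f(Z))$ with witness $f = \theta_{\widehat P_Y}-\theta_{\widehat P_X}$; the density bound $C$ together with the spectral decomposition reduces this in expectation to a bound of order $\tfrac{C}{m}\bigl((1+R)^2\mmd^2(P_X,P_Y) + \|\lambda\|_2^2/n\bigr)$. The residual $\var_{X,Y}(\E[T\mid X,Y])$, along with the analogous expansions of $T(X,Y,X)$ and $\mmd_u^2(\widehat P_X,\widehat P_Y)$, contribute further terms of order $1/n$, $1/n^2$, and $1/(nm)$. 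Grouping by order, pieces of order $1/\min\{m,n\}$ arise from single-sample empirical averages of bounded functions and carry a factor $C\|\lambda\|_\infty$ times the $(1+R)^2\mmd^2(P_X,P_Y)$ signal, while pieces of order $1/\min\{n^2,nm\}$ come from independent bilinear cross-terms and carry a factor $C^2\|\lambda\|_2^2$.

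Plugging into Chebyshev's inequality with the gap $\delta\epsilon^2$ demands $\var(W)\lesssim\delta^2\epsilon^4$. The $1/\min\{m,n\}$ piece then forces $\min\{m,n\}\gtrsim C\|\lambda\|_\infty(1+R)^2/(\delta\epsilon)^2$, while the $1/\min\{n^2,nm\}$ piece forces $\min\{n,\sqrt{nm}\}\gtrsim C\|\lambda\|_2/(\delta\epsilon^2)$; the $\log(1/\alpha)$ factor is inserted by the boosting step. The main obstacle is the variance bookkeeping in step two — tracking which terms produce $\|\lambda\|_2$ versus $\|\lambda\|_\infty$, and where exactly the $(1+R)$ factor enters — together with understanding the asymmetry $\min\{n,\sqrt{nm}\}$ rather than $\min\{n,m\}$: this stems from the signal in $\E W$ being a bilinear form in the $Y$- and $Z$-empirical measures, whose fluctuation along the relevant direction is of order $1/\sqrt{nm}$ rather than $1/\sqrt{\max\{n,m\}}$.
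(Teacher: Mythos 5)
Your proposal matches the paper's proof in all essentials: you compute $\E W = (\nu - \delta/2)\,\mmd^2(P_X,P_Y)$ using the orthogonality $\langle\theta_r,\theta_{P_Y}-\theta_{P_X}\rangle_{\cal H_K}=0$ built into the definition of $\nu$, bound $\var(W)$ via the Hilbert--Schmidt expansion into a $\frac{C\|\lambda\|_\infty(1+R^2)\mmd^2(P_X,P_Y)}{\min\{m,n\}}$ contribution plus a $\frac{C^2\|\lambda\|_2^2}{\min\{n^2,nm\}}$ contribution, apply Chebyshev at constant error, and then boost to error $\alpha$ by majority vote over $\Theta(\log(1/\alpha))$ independent splits. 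The paper organises the variance bookkeeping by directly expanding the fifteen (co)variance terms of its five U-statistic blocks (and your intermediate expression drops a $\|\lambda\|_\infty$ and a $C$ factor that the final answer retains), rather than via the law of total variance conditioning on $(X,Y)$, but this is a cosmetic reorganisation of the same spectral calculation and lands on the same sample-size conditions.
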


\begin{wrapfigure}{R}{0.4\textwidth}
    \vspace{-6mm}
    \begin{center}
    \includegraphics[width=0.4\textwidth]{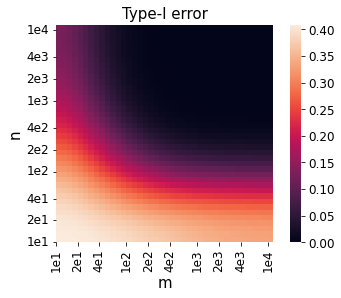}
    \vspace{-5mm}
    \caption{\small
    $n$ versus $m$ trade-off for the toy experiment, verifying \Cref{thm: MMDUpper}. Probabilities estimated over $10^4$ runs, and smoothed using Gaussian noise. }\label{fig:toy trade off}
    \end{center}
    \vspace{-5mm}
\end{wrapfigure}

Note that \Cref{thm: MMDUpper} does \emph{not} place assumptions on the distributions $P_X,P_Y$ beyond bounded density with respect to the base measure $\mu$. This is different from usual results in statistics, where prior specification of distribution classes is crucial. On the other hand, instead of standard distances such as $L^p$, we assume separation with respect to MMD and the latter is potentially harder to interpret than, say, $L^1$ i.e. total variation. We do point out that our \Cref{thm: MMDUpper} can be used to derive results in the classical setting; we discuss this further in Section \ref{ssec:connections}. 

In an appropriate regime of the parameters, the sufficient sample complexity in \Cref{thm: MMDUpper} exhibits a trade-off of the form $\min\{n, \sqrt{m n}\} \gtrsim \|\lambda\|_2\log(1/\alpha)/(\delta\epsilon^2)$ between the number of simulation samples $n$ and real observations $m$. This trade-off is shown in Figure \ref{fig:toy trade off} using data from a toy problem. The trade-off is clearly asymmetric and the relationship $m\cdot n\geq\textit{const.}$ also seems to appear. In this toy problem we set $R=0,\delta=1, \epsilon=.3,k=100$ and $P_X=P_Z, P_Y$ are distributions on $\{1,2,\dots,k\}$ with $P_X(i)=(1+\epsilon\cdot(2\cdot\one\{i\text{ odd}\}-1))/k = 2/k-P_Y(i)$ for all $i=1,2,\dots,k$. The kernel we take is $K(x,y)=\sum_{i=1}^k\one\{x=y=i\}$ and $\mu$ is simply the counting measure; the resulting MMD is simply the $L^2$-distance on pmfs. 

Figure \ref{fig:trade off} illustrates a larger scale experiment using real data using a trained kernel. Note that we plot the \emph{total} number $n$ of simulation samples, including those used for \emph{training} the kernel itself (see \cref{sec:learning kernels}); which ensures that Figure \ref{fig:trade off} gives a realistic picture of data requirements. However, due to the dependence between the kernel and the data, \Cref{thm: MMDUpper} no longer applies. Nevertheless, we observe a trade-off similar to Figure \ref{fig:toy trade off}.

\vspace{-1mm}
\subsection{Lower Bounds on the Minimax Sample Complexity of \texorpdfstring{\eqref{eqn:lfht mix def}}{mLFHT}}\label{sec:lower bounds}


In this section we prove a minimax lower bound on the sample complexity of mLFHT, giving a partial converse to \Cref{thm: MMDUpper}. Before we can state this results, we must make some technical definitions. Given $J\geq2$, let $\|\lambda\|_{2J}^2 \eqdef \sum_{j=2}^J \lambda_j^2$ and define 
\begin{equation*}
    J^\star_{\epsilon} \eqdef \max\Big\{J: \sup\limits_{\eta_j=\pm1} \Big\|\sum_{j=2}^J \eta_j\sqrt{\lambda_j}e_j\Big\|_\infty\leq \frac{\|\lambda\|_{2J}}{2\epsilon}\Big\}.
\end{equation*}

\begin{theorem}[Lower Bounds for mLFHT]\label{thm:lower bds}
Suppose that $\int_{\cal X}K(x,y)\mu(\D x)\equiv\lambda_1$, $\mu(\cal X)=1$ and $\sup_{x\in\cal X}K(x,x)\leq1$. There exists a universal constant $c>0$ such that any test of $H_0$ vs $H_1$, as defined in \eqref{eqn:mlfht rigorous def}, with total error at most $\alpha$ must use a number $(n,m)$ of observations that satisfy 
\begin{align*}
    m \geq c\frac{\lambda_2\log(1/\alpha)}{\epsilon^2 \delta^2} \quad\text{and}\quad
    n \geq c\frac{\|\lambda\|_{2J^\star_\epsilon}\sqrt{\log(1/\alpha)}}{\epsilon^2} \quad\text{and}\quad \delta m+ \sqrt{mn} &\geq c\frac{\|\lambda\|_{2J^\star_\epsilon}\sqrt{\log(1/\alpha)}}{\epsilon^2\delta}. 
\end{align*}
\end{theorem}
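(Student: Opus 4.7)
I apply Le Cam's two-point method à la Ingster. Fix $P_X = \mu$, and for $J \in \{2, \ldots, J^\star_\epsilon\}$ let $c_j \eqdef \sqrt{\lambda_j}\, \epsilon/\|\lambda\|_{2J}$, $g_\eta \eqdef \sum_{j=2}^J \eta_j c_j e_j$ with $\eta \in \{\pm 1\}^{J-1}$ uniform, and $P_{Y,\eta} \eqdef \mu(1+g_\eta)$. Introduce the priors
\[\pi_0:\; (P_X, P_Y, P_Z) = (\mu, P_{Y,\eta}, \mu), \qquad \pi_1:\; (P_X, P_Y, P_Z) = (\mu, P_{Y,\eta}, \mu(1+\delta g_\eta)),\]
so that under $\pi_0$ the mixing parameter $\nu = 0 \in H_0$ and under $\pi_1$, $\nu = \delta \in H_1$. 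The choice of $c_j$ forces $\mathrm{MMD}^2(P_X, P_{Y,\eta}) = \sum_j \lambda_j c_j^2 = \epsilon^2$; taking $J \leq J^\star_\epsilon$ ensures $\|g_\eta\|_\infty \leq 1/2$, so all densities lie in $[1/2, 3/2]$, while $R=0$ is automatic because $P_Z$ is exactly a convex combination of $P_X, P_{Y,\eta}$. Letting $\nu_i$ denote the mixture law of $(X,Y,Z)$ under $\pi_i$, any test has total error at least $1-\mathrm{TV}(\nu_0,\nu_1)$; since $X \sim \mu^{\otimes n}$ under both priors, this $X$-marginal factor cancels and we may work with the $(Y,Z)$-law only.

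\textbf{The $m$ lower bound.} Specialize to $J=2$, giving a deterministic prior and reducing the problem to a simple-vs-simple test in the $Z$-sample: $\mu^{\otimes m}$ versus $(\mu(1+\delta c_2 e_2))^{\otimes m}$, where $c_2 = \epsilon/\sqrt{\lambda_2}$. By tensorization of KL and Bretagnolle-Huber, together with $\mathrm{KL}(\mu(1+h),\mu) \leq \int h^2 d\mu = \delta^2\epsilon^2/\lambda_2$, the total error is at least $\tfrac12 \exp(-m\delta^2\epsilon^2/\lambda_2)$. Requiring this to exceed $\alpha$ yields $m \lesssim \lambda_2 \log(1/\alpha)/(\delta^2\epsilon^2)$.

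\textbf{The $n$ and joint bounds.} Use the full fuzzy prior with $J = J^\star_\epsilon$. Conditioning on $Y$, the likelihood ratio is $L(Y,Z) = \mathbb{E}_{\eta \mid Y}\bigl[\prod_j(1+\delta g_\eta(Z_j))\bigr]$, and a direct calculation gives
\[\chi^2(\nu_1,\nu_0) + 1 = \mathbb{E}_{\eta, \eta' \text{ iid}}\!\left[w(\eta,\eta')\,(1+\delta^2 X_{\eta,\eta'})^m\right], \qquad X_{\eta,\eta'} \eqdef \sum_j \eta_j\eta'_j c_j^2,\]
where $w(\eta,\eta') = \int p_\eta(Y)p_{\eta'}(Y)/q_Y(Y)\,dY$ and $q_Y = \mathbb{E}_\eta p_\eta$. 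Controlling the posterior normalizer $q_Y$ (using that $q_Y$ concentrates near $1$ for typical $Y$, since $\|g_\eta\|_\infty \leq 1/2$, combined with a Cauchy-Schwarz estimate in the $Y$-integral) yields $w(\eta,\eta') \lesssim (1+X_{\eta,\eta'})^n$, hence
\[\chi^2(\nu_1,\nu_0) \lesssim \mathbb{E}_{\eta,\eta'}\!\left[(1+X_{\eta,\eta'})^n(1+\delta^2 X_{\eta,\eta'})^m\right] - 1.\]
Since $X_{\eta,\eta'}$ is a centered Rademacher chaos with variance $\sum c_j^4 = \epsilon^4/\|\lambda\|_{2J^\star_\epsilon}^2$ and sub-Gaussian tails, Taylor-expanding the product (odd-order $X$-terms vanish by symmetry) and collecting the quadratic contribution gives $\chi^2 \lesssim (n^2 + nm\delta^2 + m^2\delta^4)\epsilon^4/\|\lambda\|_{2J^\star_\epsilon}^2$. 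Combining this with the sharper TV-$\chi^2$ bound $\mathrm{TV} \leq \sqrt{\chi^2/(1+\chi^2)}$ and the exponential mgf estimate $\chi^2 \leq \exp(O(\text{polynomial}))-1$ for the chaos, the requirement of total error at least $\alpha$ forces $n^2 + nm\delta^2 + m^2\delta^4 \lesssim \log(1/\alpha)\|\lambda\|_{2J^\star_\epsilon}^2/\epsilon^4$. Each of the three summands gives one necessary condition: $n \lesssim \|\lambda\|_{2J^\star_\epsilon}\sqrt{\log(1/\alpha)}/\epsilon^2$ (the $n$-only bound), and $\delta\sqrt{nm} + \delta^2 m \lesssim \|\lambda\|_{2J^\star_\epsilon}\sqrt{\log(1/\alpha)}/\epsilon^2$, which upon dividing by $\delta$ yields the joint bound.

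\textbf{Main obstacle.} The chief hurdle is upper-bounding the weights $w(\eta,\eta')$ by $(1+X_{\eta,\eta'})^n$ up to absolute constants, because $\nu_0$ is itself a mixture over $\eta$ through $Y$ and naive bounds on $1/q_Y$ blow up like $2^n$. Two standard remedies are (i) truncating the prior to a subset of $\eta$'s on which $p_\eta$ concentrates, losing only constant probability, or (ii) a Cauchy-Schwarz bound on the $Y$-integral combined with $\mathbb{E}_{\mu^n}[q_Y^{-1}]$ estimates; the choice of $c_j$ and the fact that $\|g_\eta\|_\infty \leq 1/2$ are essential here. A secondary subtlety is that chi-squared bounds typically only yield $\sqrt{\log(1/\alpha)}$ rather than $\log(1/\alpha)$; this is consistent with the statement and is recovered by the refined TV-$\chi^2$ inequality paired with the exponential mgf of the Rademacher chaos, rather than the loose Pinsker estimate $\mathrm{TV} \leq \sqrt{\chi^2}/2$.
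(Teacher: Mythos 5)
Your $m$ lower bound (specialize to $J=2$, deterministic prior, tensorize KL) is fine and essentially matches the paper's argument (which uses $\chi^2$ instead of KL). The trouble is in the rest.

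\textbf{You cannot extract three independent lower bounds from a single $\chi^2$ inequality.} Your only non-trivial construction after the $m$-bound is the single pair $\pi_0,\pi_1$ differing only in the $Z$-marginal. From $\chi^2 \lesssim (n^2 + nm\delta^2 + m^2\delta^4)\epsilon^4/\|\lambda\|_{2J^\star_\epsilon}^2$, the contrapositive of the lower-bound machinery gives a \emph{single} necessary condition $n^2 + nm\delta^2 + m^2\delta^4 \gtrsim L$ (with $L = \log(1/\alpha)\|\lambda\|_{2J^\star_\epsilon}^2/\epsilon^4$). That is a disjunction — at least one of the three terms must be large — not a conjunction. You then claim ``each summand gives one necessary condition,'' concluding both $n \gtrsim \sqrt L$ and $\delta m + \sqrt{mn} \gtrsim \sqrt L/\delta$; this is a non sequitur. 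For example $n=0,\ \delta^2 m = \sqrt{L}$ satisfies your sum bound but violates the claimed $n$-bound. The theorem genuinely requires three separate constructions (or at least two beyond the $m$-bound).

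\textbf{The $n$ lower bound requires a construction where the $Z$-marginal is the \emph{same} under both hypotheses.} The paper achieves this by \emph{swapping} the roles of $X$ and $Y$: under $H_0$ take $(P_X,P_Y,P_Z)=(f_0,f_\eta,f_0)$ (so $\nu=0$) and under $H_1$ take $(f_\eta,f_0,f_0)$ (so $\nu=1\geq\delta$). The $Z$ sample is useless, forcing the test to rely entirely on the $n$ labeled samples; a Ingster-type $\chi^2$ computation gives $\chi^2 \leq \exp(n^2\epsilon^4/\|\lambda\|_{2J}^2)-1$, hence $n \gtrsim \|\lambda\|_{2J^\star_\epsilon}\sqrt{\log(1/\alpha)}/\epsilon^2$ no matter how large $m$ is. Your construction cannot yield this because taking $m\to\infty$ makes it trivially distinguishable.

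\textbf{Your $n^2$ term is a sign the weight bound is too loose — and the obstacle you flagged is exactly where the real work is.} The paper's bound for the joint construction is $\chi^2 \leq \exp(c\,\delta^2\epsilon^4(\delta^2 m^2 + mn)/\|\lambda\|_{2J}^2)-1$, with \emph{no} $n^2$ term — and correctly so, since with $m=0$ the hypotheses are identical and $\chi^2$ must vanish. The $n^2$ term appears in your bound because the Cauchy–Schwarz estimate $w(\eta,\eta') \lesssim (1+X_{\eta,\eta'})^n$ is uncapped. The paper's coordinate-by-coordinate ``peeling'' argument (\eqref{eqn:lower peeling claim}) bounds the posterior probability $\P(\eta_j\eta'_j=1\,|\,\eta_{-j},\eta'_{-j}) \leq \tfrac14(1+e^{8\epsilon^2 n \rho_j^2 \wedge \ln 3})$, where the hard cap at $\ln 3$ (because a probability is $\leq 1$) is precisely what kills the $n^2$ growth. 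Without such a cap your argument produces a genuinely weaker conclusion than the theorem, and neither remedy (i) nor (ii) you sketch obviously recovers it. So the gaps are: (a) the missing swap construction for the standalone $n$-bound, (b) the logical error of reading three separate conditions off one sum, and (c) the unproven, too-loose control of $w(\eta,\eta')$ which, even if granted, would still not yield the stated theorem.
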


\begin{remark}
    Recall that the eigenvalues $\lambda$ \textit{depend on the choice of $\mu$}, so that by choosing a different base measure $\mu$ one can optimize the lower bound. However, since $P_X,P_Y,P_Z$ are assumed to have bounded density with respect to $\mu$, this appears rather involved. 
\end{remark}

\begin{remark}
The requirements $\sup_{x\in\cal X}K(x,x)\leq1$ and $\mu(\cal X)=1$ are is essentially without loss of generality, as $\mu$ and $K$ can be rescaled. The condition $\int_{\cal X}K(x,y)\mu(\D x)\equiv\lambda_1$ implies that the top eigenfunction $e_1$ is equal to a constant or equivalently, that $y\mapsto K(x,y)\mu(\D x)$ defines a Markov kernel up to a normalizing constant. 
\end{remark}

\subsection{Tightness of \Cref{thm: MMDUpper,thm:lower bds}}\label{ssec:tightness}

\textbf{Dependence on $\pmb{\|\lambda\|_2}$}\quad An apparent weakness of \Cref{thm:lower bds} is its reliance on the unknown value $J^\star_\epsilon$, which depends on the specifics of the kernel $K$ and base measure $\mu$. Determining it is potentially highly nontrivial even for simple kernels. Slightly weakening \Cref{thm:lower bds} we obtain the following corollary, which shows that the dependence on $\|\lambda\|_2$ is tight, at least for small $\epsilon$. 
\begin{corollary}\label{cor:lower bd}
    Suppose $J \geq 2$ is such that $\sum_{j=2}^J\lambda_j^2 \geq c^2 \|\lambda\|_2^2$ for some $c\leq1$. Then $\|\lambda\|_{2J^\star_\epsilon}$ can be replaced by $c\|\lambda\|_2$ in \Cref{thm:lower bds} whenever $\epsilon \leq c\|\lambda\|_2/(2\sqrt{J-1})$. 
\end{corollary}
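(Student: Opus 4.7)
The plan is to reduce the statement to a single claim, namely that $J \leq J^\star_\epsilon$ under the hypotheses of the corollary. Once this is established, monotonicity of partial sums gives
$$\|\lambda\|_{2J^\star_\epsilon}^2 \;=\; \sum_{j=2}^{J^\star_\epsilon}\lambda_j^2 \;\geq\; \sum_{j=2}^{J}\lambda_j^2 \;\geq\; c^2\|\lambda\|_2^2,$$
so $\|\lambda\|_{2J^\star_\epsilon} \geq c\|\lambda\|_2$. Substituting this weaker lower bound into the three inequalities of \Cref{thm:lower bds} (i.e.\ replacing $\|\lambda\|_{2J^\star_\epsilon}$ by the smaller $c\|\lambda\|_2$ on the right-hand sides) produces exactly the weakened statement asserted by the corollary.

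To verify that $J$ itself lies in the admissible set defining $J^\star_\epsilon$, I would bound the Rademacher-type quantity
$$\sup_{\eta \in \{\pm 1\}^{J-1}} \Big\|\sum_{j=2}^{J} \eta_j \sqrt{\lambda_j}\, e_j\Big\|_\infty$$
using only the hypotheses $\sup_x K(x,x) \leq 1$ and Mercer's expansion $K(x,x) = \sum_{j\geq 1}\lambda_j e_j(x)^2$ from \Cref{thm:HS}. A single application of Cauchy--Schwarz pointwise gives, uniformly in $\eta$ and $\mu$-almost every $x$,
$$\Big|\sum_{j=2}^{J} \eta_j \sqrt{\lambda_j}\, e_j(x)\Big|^2 \;\leq\; (J-1)\sum_{j=2}^{J} \lambda_j e_j(x)^2 \;\leq\; (J-1)\,K(x,x) \;\leq\; J-1.$$
Combining this with the hypothesis $\epsilon \leq c\|\lambda\|_2/(2\sqrt{J-1})$ and the already-observed bound $\|\lambda\|_{2J} \geq c\|\lambda\|_2$ yields
$$\sqrt{J-1} \;\leq\; \frac{c\|\lambda\|_2}{2\epsilon} \;\leq\; \frac{\|\lambda\|_{2J}}{2\epsilon},$$
which is precisely the inequality whose validity defines membership in the maximum defining $J^\star_\epsilon$. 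Hence $J \leq J^\star_\epsilon$, completing the argument.

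The only genuinely subtle point is passing from the $L^2(\mu\otimes\mu)$ convergence provided by \Cref{thm:HS} to a pointwise (or $\mu$-essential) inequality $\sum_j \lambda_j e_j(x)^2 \leq K(x,x) \leq 1$; I would handle this by interpreting $\|\cdot\|_\infty$ in the definition of $J^\star_\epsilon$ as the $\mu$-essential supremum, which is consistent with the way eigenfunctions enter throughout the paper. Every other step is a one-line manipulation, and no new probabilistic machinery beyond \Cref{thm:lower bds} is required.
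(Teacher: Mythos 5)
Your argument is correct and follows essentially the same route as the paper: reduce to showing $J \leq J^\star_\epsilon$ (which is exactly the content of the paper's Lemma~\ref{lem:J bound}), note $\epsilon \le c\|\lambda\|_2/(2\sqrt{J-1}) \le \|\lambda\|_{2J}/(2\sqrt{J-1})$, and then use monotonicity of $J \mapsto \|\lambda\|_{2J}$. Your pointwise Cauchy--Schwarz bound $\bigl|\sum_{j=2}^J \eta_j\sqrt{\lambda_j}e_j(x)\bigr|^2 \le (J-1)\sum_{j=2}^J\lambda_j e_j(x)^2 \le (J-1)K(x,x) \le J-1$ is mathematically the same estimate the paper obtains by writing $\|h\|_\infty = \sup_x\langle K(x,\cdot),h\rangle_{\mathcal H_K} \le \sup_x\sqrt{K(x,x)}\,\|h\|_{\mathcal H_K}$ and computing $\|h\|_{\mathcal H_K}$ (the two incur the same implicit Mercer-type regularity beyond raw $L^2(\mu\otimes\mu)$ convergence), so the two proofs coincide up to whether the RKHS reproducing-kernel inequality is invoked explicitly.
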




\textbf{Dependence on $\pmb{R}$ and $\pmb{\alpha}$}\quad Due to the general nature of our lower bound constructions, it is difficult to capture the dependence on the misspecification parameter $R$. As for the probability of error $\alpha$, based on related work \cite{diakonikolas2021optimal} we expect the gap of size $\sqrt{\log(1/\alpha)}$ to be a shortcoming of \Cref{thm: MMDUpper} and not the lower bound. Closing this gap may require a different approach, however, as tests based on empirical $L^2$ distances are known to have a sub-optimal concentration \cite{huang2012classification}. 

\textbf{Dependence on $\pmb{\delta}$}\quad The correct dependence on the signal rate $\delta$ is the most important question left open by our theoretical results. Any method requiring $n$ larger than a function of $\delta$ irrespective of $m$ (as in \Cref{thm: MMDUpper}) is provably sub-optimal because taking $m\gtrsim1/(\delta\epsilon)^2$ and $n$ large enough to estimate both $P_X,P_Y$ to within accuracy $\epsilon/10$  always suffices to reach a fixed level of total error. 

\vspace{-1mm}
\subsection{Relation to Prior Results}\label{ssec:connections}

\vspace{-1mm}
In this section we discuss some connections of \Cref{thm: MMDUpper} to prior work. Specifically, we discuss how \Cref{thm: MMDUpper} recovers some known results in the literature \cite{arias2018remember,li2019optimality, RY22SBI} that are \emph{minimax optimal}. Details omitted in this section are included in Appendix \ref{sec:extra examples}. 

\textbf{Binary Hypothesis Testing}\label{sec: Binary}\quad
Suppose the two distributions $P_X,P_Y$ are \textit{known}, we are given $m$ i.i.d. observations $Z_1,\dots,Z_m \sim P_Z$ and our task is to decide between the hypotheses $H_0: P_X=P_Z$ versus $H_1: P_Y=P_Z$. Then, we may take $n=\infty, R=0, \delta=1$ in Theorem \ref{thm: MMDUpper} to conclude that 
\begin{equation*}
    m\geq c\cdot\frac{C \|\lambda\|_\infty\log(1/\alpha)}{\eps^2}
\end{equation*} 
observations suffice to perform the test at total error $\alpha$. 

\textbf{Two-Sample Testing}\quad 
Suppose we have two i.i.d. samples $X$ and $Y$, both of size $n$, from unknown distributions $P_X,P_Y$ respectively and our task is to decide between $H_0:P_X=P_Y$ against $H_1:\mmd(P_X,P_Y) \geq \epsilon$. We split our $Y$ sample in half resulting in $Y^{(1)}$ and $Y^{(2)}$ and form the statistic $\Psi_\sf{TS} \eqdef \Psi_{1/2}(X,Y^{(1)}, Y^{(2)}) - \Psi_{1/2}(Y^{(1)}, X, Y^{(2)})$, where $\Psi_{1/2}$ is defined in Section \ref{sec:the statistic}. Then $|\E \Psi_\sf{TS}|$ is equal to $0$ under the null hypothesis and is at least $1-2\alpha_1$ under the alternative, where $\alpha_1$ is the target total error probability of $\Psi_{1/2}$. Taking $\alpha_1=5\%$, by repeated sample splitting and majority voting we may amplify the success probability to $\alpha$ provided 
\begin{equation}\label{eqn:TS rate}
    n \geq c' \frac{C\|\lambda\|_2\log(1/\alpha)}{\epsilon^2}, 
\end{equation}
where $c'>0$ is universal (see Appendix for details). The upper bound \eqref{eqn:TS rate} partly recovers \cite[Theorem 3 and 5]{li2019optimality} where authors show that thresholding the MMD with Gaussian kernel $G_\sigma(x,y)=\sigma^{-d}\exp(-\|x-y\|^2/\sigma^2)$ achieves the minimax optimal sample complexity $n\asymp\epsilon^{-(2\beta+d/2)/\beta}$ for the problem of two-sample testing over the class $\cal P_{\beta,d}$ of $d$-dimensional $(\beta,2)$-Sobolev-smooth distributions (defined in Appendix \ref{sec:appendix sobolev example}) under $\epsilon$-$L^2$-separation. For this, taking $\sigma \asymp \epsilon^{1/\beta}$ ensures that $\|P-Q\|_{L^2}\lesssim\mmd(P, Q)$ over $P, Q\in \cal P_{\beta,d}$. Taking e.g. $\D\mu(x)=\exp(-\|x\|_2^2)\D x$ as the base measure, \eqref{eqn:TS rate} recovers the claimed sample complexity since $\|\lambda\|_2^2 = \int G^2_\sigma(x,y)\D \mu(x)\D \mu(y) =\cal O(\sigma^{-d})$ hiding dimension dependent constants. Our result requires a bounded density with respect to a Gaussian.

\textbf{Likelihood-Free Hypothesis Testing}\quad By taking $\alpha\asymp R \asymp \delta = \Theta(1)$ in Theorem \ref{thm: MMDUpper} we can recover many of the results in \cite{kelly2010universal,kelly2012classification,RY22SBI}. When $\cal X$ is finite, we can take the kernel $K(x,y) = \sum_{z \in \cal X} \one\{x=y=z\}$ in Theorem \ref{thm: MMDUpper} to obtain the results for bounded discrete distributions (defined in Appendix \ref{sec:appendix discrete example}) which state that under $\epsilon$-TV-separation the minimax optimal sample complexity is given by $m \gtrsim 1/\epsilon^2; \min\{n, \sqrt{nm}\} \gtrsim \sqrt{|\cal X|}/\epsilon^2$. A similar kernel recovers the optimal result for the class of $\beta$-H\"older smooth densities on the hypercube $[0,1]^d$ (see Appendix \ref{sec:appendix holder example}).

\textbf{Curse of Dimensionality}\quad Using the Gaussian kernel $G_\sigma$ as for two-sample testing above, one can conclude by Theorem \ref{thm: MMDUpper} that the required number of samples for \eqref{eqn:lfht mix def} over the class $\cal P_{\beta,d}$ under $\epsilon$-$L^2$-separation grows at most like $\Omega\left((\frac{c}{\epsilon})^{\Omega(d)}\frac{1}{\delta^2}\right)$ for some $c>1$, instead of the expected $\Omega\left((\frac{c}{\epsilon\delta})^{\Omega(d)}\right)$. This may be interpreted as theoretical support for the success of LFI in practice where signal and background can be rather different (cf. \cite[Figures 2-3]{baldi2014searching}) and the difficulty of the problem stems from the rate of signal events being small (i.e. $\epsilon \approx 1$ but $\delta\ll1$).

\vspace{-2mm}
\section{Learning Kernels from Data}\label{sec:learning kernels}

\vspace{-1mm}
Given a \emph{fixed} kernel $K$, our \Cref{thm: MMDUpper,thm:lower bds} show that the sample complexity is heavily dependent on the separation $\epsilon$ under the given $\mmd$ as well as the spectrum $\lambda=\lambda(\mu,K)$ of the kernel. Thus, to have good test performance we need to use a kernel $K$ that is well-adapted to the problem at hand. In practice, however, instead of using a fixed kernel it would be only natural to use part of the simulation sample to try to \emph{learn} a good kernel. 

In Sections \ref{sec:learning kernels} and \ref{sec: empirics} we report experimental results after \emph{training} a kernel parameterized by a neural network on part of the simulation data. In particular, due to the dependence between the data and the kernel, \Cref{thm: MMDUpper} doesn't directly apply. Our main contribution here is showing the existence of an asymmetric simulation-experimentation trade-off (cf. Figure \ref{fig:trade off} and also \Cref{sec:lfht}) even in this realistic setting. Figure \ref{fig:trade off} plots the \emph{total} number $n$ of simulations used, including those used for training, so as to provide a realistic view of the amount of data used. The experiments also illustrate that the (trained-)kernel-based statistic of \Cref{sec:the statistic} achieves state-of-the-art performance.


 \vspace{-2mm}
\subsection{Proposed Training Algorithm}
Consider splitting the data into three parts: $(X^\sf{tr}, Y^\sf{tr})$ is used for training (optimizing) the kernel; $(X^\sf{ev}, Y^\sf{ev})$ is used to evaluate our test statistic at test time; and $(X^\sf{cal}, Y^\sf{cal})$ is used for calibrating the distribution of the test statistic under the null hypothesis. We write $n_{s}=|X^{s}|=|Y^{s}|$ for $s\in\{\sf{tr}, \sf{ev}, \sf{cal}\}$.
Given the training data $X^\sf{tr}, Y^\sf{tr}$ with empirical measures $\widehat P_{X^\sf{tr}},\widehat P_{Y^\sf{tr}}$, we maximize the objective in 
\begin{align}
\label{eq:J}
    \widehat{J}(X^\sf{tr},Y^\sf{tr}; K)=\frac{\mmd_u^2(\widehat P_{X^\sf{tr}},\widehat P_{Y^\sf{tr}};K)}{\widehat{\sigma}(X^\sf{tr},Y^\sf{tr};K)},
\end{align}
which was introduced in \cite{sutherland2016generative}. Here $\widehat{\sigma}^2$ is an estimator of the variance of $\mmd^2_u(\widehat P_{X^\sf{tr}},\widehat P_{Y^\sf{tr}};K)$ and is defined in \Cref{sec:sigma defn}. 

Intuitively, the objective $J$ aims to separate $P_X$ from $P_Y$ while keeping variance low. For a heuristic justification of its use for \eqref{eqn:lfht mix def} see 
Appendix. 

In \Cref{alg:learn_deep_kernel} we describe the training and testing procedure, which produces unbiased $p$-values for \eqref{eqn:lfht mix def} when there is no misspecification ($R=0$ in \Cref{thm: MMDUpper}). During training, we use the Adam optimizer \cite{kingma2014adam} with stochastic batches. 

\begin{proposition}\label{prop: consistent}
When there is no misspecification ($R=0$ in \Cref{thm: MMDUpper}), Algorithm 1 outputs an unbiased estimate of the $p$-value that is consistent as $\min\{n_\sf{cal}, k\} \to\infty$. 
\end{proposition}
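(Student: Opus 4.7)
My plan is to condition on the training and evaluation splits $(X^\sf{tr},Y^\sf{tr},X^\sf{ev},Y^\sf{ev})$, which fixes the learned kernel $\hat K$ and the witness function $f(z)=\theta_{\widehat P_{Y^\sf{ev}}}(z)-\theta_{\widehat P_{X^\sf{ev}}}(z)$, so that the observed statistic reduces to $T^\star=\frac{1}{m}\sum_{j=1}^m f(Z_j)$. With $R=0$, the null hypothesis pins down $P_Z=P_X$, so the true (one-sided) p-value conditional on $T^\star$ equals $p^\star=\Pr(T'\geq T^\star\mid T^\star)$, where $T'$ is an independent copy of the statistic formed from $m$ fresh i.i.d.\ $P_X$-samples. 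Algorithm~1 estimates $p^\star$ by generating $k$ calibration statistics $T_1,\dots,T_k$ by applying the same $f$ to $k$ independent size-$m$ resamples of the (independent) calibration split $X^\sf{cal}$, and returning an empirical tail probability $\hat p=\frac{1}{k}\sum_{i=1}^k\one\{T_i\geq T^\star\}$. The independence of the three splits is essential, since it makes $f$, $T^\star$, and $X^\sf{cal}$ mutually independent under $H_0$.

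\textbf{Unbiasedness.} Conditionally on $(X^\sf{ev},Y^\sf{ev},X^\sf{cal})$ and $T^\star$, the $T_i$ are i.i.d.\ with CDF $F_{n_\sf{cal}}$ equal to the bootstrap null law, so $\mathbb{E}[\one\{T_1\geq T^\star\}\mid T^\star,X^\sf{cal}]=1-F_{n_\sf{cal}}(T^\star-)$. Taking a further expectation over $X^\sf{cal}\sim P_X^{\otimes n_\sf{cal}}$ and invoking the moment identity $\mathbb{E}_{X^\sf{cal}}\bigl[\widehat P_{X^\sf{cal}}^{\otimes m}\bigr]=P_X^{\otimes m}$ (which holds because each of the $m$ with-replacement draws has marginal $P_X$, and the statistic $T$ is a symmetric function of its $m$ inputs) collapses the bootstrap expectation to the true null expectation, giving $\mathbb{E}[\hat p\mid T^\star]=p^\star$. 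Unbiasedness then follows by one more application of the tower law. (In the plus-one variant of $\hat p$ the bias is $O(1/(k+1))$ in the conservative direction, which is consistent with the proposition.)

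\textbf{Consistency and main obstacle.} I would decompose the deviation as $\hat p-p^\star=\bigl(\hat p-(1-F_{n_\sf{cal}}(T^\star-))\bigr)+\bigl((1-F_{n_\sf{cal}}(T^\star-))-p^\star\bigr)$. The first (Monte Carlo) term vanishes uniformly in $T^\star$ as $k\to\infty$ by Glivenko--Cantelli applied to the i.i.d.\ calibration sample. The second (bootstrap) term vanishes as $n_\sf{cal}\to\infty$ because $f$ is bounded (being a fixed continuous function of a bounded kernel evaluated against a fixed empirical measure), so $\widehat P_{X^\sf{cal}}^{\otimes m}\Rightarrow P_X^{\otimes m}$ by the strong law of large numbers and the continuous mapping theorem, hence $F_{n_\sf{cal}}\Rightarrow F_\mathrm{null}$. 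The main obstacle will be handling a possible atom of $F_\mathrm{null}$ at the observed value $T^\star$: pointwise convergence of a CDF at a jump point does not follow from weak convergence. I would resolve this by noting that under $H_0$ the marginal laws of $T^\star$ and $T_i$ coincide in the limit, so the random variable $T^\star$ almost surely lands at a continuity point of $F_\mathrm{null}$ when the distribution of $f(W)$ is continuous, and by falling back to the plus-one p-value (which is a valid conservative upper bound) at atoms. Combining the two legs and letting $\min\{n_\sf{cal},k\}\to\infty$ then yields $\hat p\to p^\star$ in probability.
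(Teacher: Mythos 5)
Your unbiasedness argument has a genuine gap. You frame the calibration resamples as bootstrap (with-replacement) draws from $\widehat P_{X^\sf{cal}}$ and invoke the identity $\mathbb{E}_{X^\sf{cal}}\bigl[\widehat P_{X^\sf{cal}}^{\otimes m}\bigr]=P_X^{\otimes m}$, crediting the symmetry of $T$. That identity is false: for $m=2$, $\mathbb{E}_{X^\sf{cal}}\bigl[\widehat P_{X^\sf{cal}}^{\otimes 2}(A\times B)\bigr]=\tfrac{1}{n_\sf{cal}}P_X(A\cap B)+\tfrac{n_\sf{cal}-1}{n_\sf{cal}}P_X(A)P_X(B)$, which differs from $P_X(A)P_X(B)$ whenever $P_X(A\cap B)\neq P_X(A)P_X(B)$; symmetry of $T$ does not remove the diagonal repeat mass. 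Concretely, with $P_X=\mathrm{Unif}\{-1,1\}$, $n_\sf{cal}=1$, $m=2$, $T(z_1,z_2)=z_1z_2$, the bootstrap statistic is identically $1$ while $\mathbb{E}[T]$ under $P_X^{\otimes 2}$ is $0$. What actually makes Algorithm~\ref{alg:learn_deep_kernel} unbiased is that Phase~2 samples the $m$ calibration points \emph{without replacement} from $X^\sf{cal}$: a without-replacement subsample of size $m$ from an i.i.d.\ collection is itself exactly i.i.d.\ $P_X^{\otimes m}$, so each $T_r$ has precisely the null distribution of $T(\widetilde Z)$ (conditionally on $\mathcal D$), and $\mathbb{E}[\one\{T_r>\widehat T\}\mid \widehat T]=p^\star$ with no bootstrap-bias term at all. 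This is the paper's argument; the ``bootstrap error'' you introduce in the second leg of your decomposition does not actually exist.

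On consistency, your route (Glivenko--Cantelli for the empirical CDF of the $T_r$'s, plus convergence of the resampling law to the null law as $n_\sf{cal}\to\infty$) is a legitimate alternative to the paper's much terser argument, which notes only that $T_1,\dots,T_k$ are each exactly null-distributed but not independent (they share $X^\sf{cal}$), and that $\mathrm{cov}(T_{r_1},T_{r_2})\to 0$ as $n_\sf{cal}\to\infty$, so the average of $k$ indicators concentrates. Once you correct the unbiasedness step, your argument simplifies: the second term of your decomposition should be recast not as a bootstrap-versus-null CDF gap (which is zero in mean) but as the dependence-across-$r$ effect induced by the shared finite pool $X^\sf{cal}$, which is precisely the covariance the paper controls. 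The atom/continuity-point discussion is a reasonable concern in general, but the paper's variance-based argument sidesteps it entirely, and switching to the plus-one estimator mid-proof changes the object being analyzed, so that fallback should at least be flagged as a modification of the algorithm rather than a proof step.
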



\begin{algorithm}[!t]
\footnotesize
\caption{mLFHT with a learned deep kernel}
\label{alg:learn_deep_kernel}
\begin{algorithmic}
\STATE \textbf{Input:} $(X^\sf{tr}$, $X^\sf{ev}, X^\sf{cal})$, $(Y^\sf{tr}, Y^\sf{ev}, Y^\sf{cal}$); parametrized kernel $K_\omega$; hyperparameters and initialization.

\vspace{0.5mm}
\STATE \textbf{\# Phase 1:}\textit{ Kernel training (optimization) on $X^\sf{tr}$ and $Y^\sf{tr}$.\hfill
}
\vspace{0.5mm}




\STATE\quad $\omega \gets \arg\max_{\omega}^{\textnormal{Optimizer}} \widehat{J} (X^\sf{tr}, Y^\sf{tr}; K_\omega)$; \hfill \textit{\# maximize objective as defined in \eqref{eq:J}}
\vspace{0.5mm}
\STATE \textbf{\# Phase 2:}\textit{ Distributional calibration of test statistic (under null hypothesis).}
\vspace{0.5mm}

\FOR{$r = 1, 2, \dots, k$}
\STATE $Z^{\sf{cal},r}\gets $ sample $m$ points without replacement from $X^\sf{cal}$;
\STATE $\mathit{T}_r \gets
\frac{1}{n_{\sf{ev}}m}\sum_{i,j} \Big( K_{\omega}(Z^{\sf{cal},r}_i,Y^\sf{ev}_j)-K_{\omega}(Z^{\sf{cal},r}_i,X^\sf{ev}_j)\Big)$;
\ENDFOR

\vspace{0.5mm}
\STATE \textbf{\# Phase 3:}\textit{ Inference with input $Z$}.
\vspace{0.5mm}
\STATE \quad $\widehat T\gets\frac{1}{n_{\sf{ev}}m}\sum_{i,j} \Big( K_{\omega}(Z_i,Y^\sf{ev}_j)-K_{\omega}(Z_i,X^\sf{ev}_j) \Big)$;

\STATE \textbf{Output: } Estimated $p$-value: $\frac{1}{k}\sum_{i=1}^k\one\{\widehat T < T_i\}$.

\end{algorithmic}

\end{algorithm}
\vspace{-2mm}

\textbf{Time complexity}\quad  \Cref{alg:learn_deep_kernel} runs in three separate stages: training, calibration, and inference. The first two take $O\left(\#\text{epochs}\cdot B^2+kn_{\sf{ev}}m\right)$ total time, where $B$ is the batch size,
whereas Phase 3 takes only $O(n_\sf{ev}m)$ time, which is generally much faster especially if $n_\sf{ev}<\!\!<n_\sf{tr}$.

\textbf{Sample usage}\quad Empirically, data splitting in \cref{alg:learn_deep_kernel} can have non-trivial effects on performance. Instead of training the kernel on only a fraction of the data ($\{X^{\sf{tr}}, Y^\sf{tr}\} \cap \{X^{\sf{ev}}, Y^\sf{ev}\} =\varnothing$), we discovered that taking $ \{X^{\sf{ev}},Y^{\sf{ev}}\}\subseteq\{X^{\sf{tr}},Y^{\sf{tr}}\}$ results in more efficient use of data. The stronger condition $\{X^{\sf{ev}},Y^{\sf{ev}}\}=\{X^{\sf{tr}},Y^{\sf{tr}}\}$ can also be applied; we take $\subseteq$ to reduce time complexity. We do, however, crucially require $X^\sf{cal}, Y^\sf{cal}$ in Phase 2 to be independently sampled (``held-out'') for consistent $p$-value estimation. Finally, we remark also that splitting this way is only valid in the context of \Cref{alg:learn_deep_kernel}. For the test \eqref{eq: test} using the data-dependent threshold $\gamma$, one needs $\{X^{\sf{tr}}, Y^\sf{tr}\} \cap \{X^{\sf{ev}}, Y^\sf{ev}\} =\varnothing$ to estimate $\gamma$.

\vspace{-2mm}
\subsection{Classifier-Based Tests and Other Benchmarks}\label{sec: Classifier}
Let $\phi:\cal X \to [0,1]$ be a classifier, assigning small values to $P_X$ and high values to $P_Y$ by minimizing cross-entropy loss of a classifier net. There are two natural test statistics based on $\phi$: 

\textbf{$\Sch$'s Test.} The first idea, attributed to $\Sch$ in folklore \cite[Section 6]{devroye2012combinatorial}, is to take the statistic $T(Z)=\frac1m \sum_{i=1}^m\one\{\phi(Z_i)>t\}$ where $t$ is some (learn-able) threshold. 


\textbf{Approximate Neyman-Pearson / Logit Methods.}
If $\phi$ is trained to perfection, then $\phi(z)=\P(P_X|z)$ would be the likelihood and $\phi(z)/(1-\phi(z))$ would equal precisely the likelihood ratio between $P_Y$ and $P_X$ at $z$. This motivates the use of $T(Z) = \frac1m \sum_{i=1}^m \log(\phi(Z_i)/(1-\phi(Z_i)))$. See also \cite{cheng2019classification}. 

Let us list the testing procedures that we benchmark against each other in our experiments.
\begin{enumerate}
    \vspace{-2mm}\item \textbf{MMD-M}: The $\mmd$ statistic \eqref{eq: teststats} using $K$ with the \underline{m}ixing architecture
    \begin{align*}
        K(x, y)=[(1-\tau)G_{\sigma}(\varphi_\omega(x), \varphi_\omega(y))+\tau]
        \cdot G_{\sigma_0}(x+\varphi'_{\omega'}(x), y+\varphi'_{\omega'}(y)). 
    \end{align*}
    Here $G_{\sigma}$ is the Gaussian kernel with variance $\sigma^2$; $\varphi_\omega, \varphi'_{\omega'}$ are NN's (with parameters $\omega,\omega'$), and $\sigma,\sigma_0,\tau,\omega,\omega'$ are trained.
    
    \vspace{-1mm}\item \textbf{MMD-G}: The $\mmd$ statistic \eqref{eq: teststats} using the \underline{G}aussian kernel architecture $K(x, y)=G_{\sigma}(\varphi_\omega(x), \varphi_\omega(y))$ where $\varphi_\omega$ is the feature mapping parametrized by a trained network and $\sigma$ is a trainable parameter.
    
    \vspace{-1mm}\item \textbf{MMD-O}: The $\mmd$ statistic \eqref{eq: teststats} using the Gaussian Kernel $K(x, y)=G_{\sigma}(x, y)$ with \underline{o}ptimized bandwidth $\sigma$. First proposed in \cite{bounliphone2015test, liu2020learning}.

    \vspace{-1mm}\item \textbf{UME}: An interpretable model comparison algorithm proposed by \cite{jitkrittum2018informative}, which evaluates the kernel mean embedding on a chosen ``witness set''.

    \vspace{-1mm}\item \textbf{SCHE}, \textbf{LBI}: \underline{Sche}ff\'e's test and \underline{L}ogit \underline{B}ased \underline{I}nference methods \cite{cheng2019classification}, based on a binary classifier network $\phi$ trained via cross-entropy, introduced above.

    \vspace{-1mm}\item \textbf{RFM}: \underline{R}ecursive \underline{F}eature \underline{M}achines, a recently proposed kernel learning algorithm by \cite{radhakrishnan2022feature}.
\end{enumerate}

\vspace{-1mm}
\subsection{Additive Statistics and the Thresholding Trick}\label{sec:threshold trick}

\vspace{-1mm}
Given a function $f$ (usually obtained by training) and test data $Z=(Z_1,\dots,Z_m)$, we call a test additive if its output is obtained by thresholding $T_f(Z) \eqdef \frac1m \sum_{i =1}^m f(Z_i)$. We point out that all of \textbf{MMD-M/G/O}, \textbf{SCHE}, \textbf{LBI}, \textbf{UME}, \textbf{RFM} are of this form, see the Appendix for further details. Similarly to \cite{liu2020learning}, we observe that any such statistic can be realized by our kernel-based approach. 

\begin{proposition}
\label{prop:exists_kernel}
The kernel-based statistic defined in \eqref{eq: teststats} with the kernel $K(x, y)=f(x)f(y)$ is equal to $T_f$ up to a multiplicative and additive constant independent of $Z$.
\end{proposition}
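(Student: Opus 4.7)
The plan is to prove the identity by a direct calculation: substitute the rank-one kernel $K(x,y) = f(x)f(y)$ into the definition of the statistic and factor the resulting double sum. First, I would note that $K(x,y) = f(x)f(y)$ is indeed a valid symmetric positive semidefinite kernel, since for any points $x_1,\dots,x_n \in \cal X$ the Gram matrix factors as $vv^\T$ with $v = (f(x_1),\dots,f(x_n))^\T$, and thus the statistic in \eqref{eq: teststats} is well-defined for this choice.

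Next, I would plug $K$ into \eqref{eq: teststats} and separate variables. Since the summand factorizes as $K(Z_j, Y_i) - K(Z_j, X_i) = f(Z_j)\bigl(f(Y_i) - f(X_i)\bigr)$, the double sum decouples and one obtains
\begin{equation*}
    T(X,Y,Z) = \frac{1}{nm}\sum_{i=1}^n\sum_{j=1}^m f(Z_j)\bigl(f(Y_i)-f(X_i)\bigr) = \underbrace{\left(\frac{1}{n}\sum_{i=1}^n\bigl(f(Y_i) - f(X_i)\bigr)\right)}_{=:\,a(X,Y)} \cdot \underbrace{\frac{1}{m}\sum_{j=1}^m f(Z_j)}_{=\,T_f(Z)}.
\end{equation*}
This yields $T(X,Y,Z) = a(X,Y) \cdot T_f(Z)$, where the prefactor $a(X,Y)$ depends only on $X$ and $Y$ but not on $Z$. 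Thus the claim holds with multiplicative constant $a(X,Y)$ and additive constant $b(X,Y) = 0$ (both independent of $Z$), which is even stronger than what the proposition requires.

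There is no real obstacle here; the content is just that rank-one kernels collapse the kernel-based statistic to its witness-function form evaluated on $Z$. If desired, one could also remark that the sign of $a(X,Y)$ matches $\sign(\widehat \Ex_Y f - \widehat \Ex_X f)$, which is the natural orientation used in the classifier-based tests mentioned in Section \ref{sec: Classifier}. This is what makes the reduction useful: the thresholding-trick interpretation shows that all additive statistics $T_f$ arising from classifiers or witness functions can be put on a common footing with the MMD-based statistic, and hence the theoretical guarantees derived for \eqref{eq: teststats} transfer (up to the choice of the fixed function $f$).
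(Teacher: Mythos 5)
Your proof is correct and takes essentially the same route as the paper: both simply substitute the rank-one kernel $K(x,y)=f(x)f(y)$ into \eqref{eq: teststats}, factor the resulting double sum, and identify the prefactor $a(X,Y)=\frac{1}{n}\sum_{i}\bigl(f(Y_i)-f(X_i)\bigr)$ (the paper calls it $C(X,Y)$) together with a zero additive constant. The only cosmetic difference is that the paper first rewrites $T$ in witness-function form $T=\frac1m\sum_j f_K(Z_j)$ before factoring $f_K$, whereas you factor the double sum directly; the content is identical.
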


\vspace{-1mm}
Motivated by the $\Sch$'s test, instead of directly thresholding the additive statistic $T_f(Z)$, we found empirically that replacing $f$ by $f_t(x) \eqdef \one\{f(x) > t\}$ can yield improved power. We set $t$ by maximizing an estimate of the significance under the null using a normal approximation, i.e. by solving $t_\sf{opt} \eqdef \argmax_t \frac{T_{f_t}(Y^{\sf{opt}})-T_{f_t}(X^{\sf{opt}})}{\sqrt{T_{f_t}(X^{\sf{opt}})(1-T_{f_t}(X^{\sf{opt}}))}}$, where $X^{\sf{opt}}, Y^{\sf{opt}}$ satisfy $\{X^{\sf{opt}}, Y^{\sf{opt}}\}\cap(\{X^{\sf{cal}}, Y^{\sf{cal}}\}\cup\{X^{\sf{ev}}, Y^{\sf{ev}}\})=\varnothing$. This trick improves the performance of our tester on the Higgs dataset in \Cref{sec: Higgs} but not for the image detection problem in \Cref{sec:image source}.

\vspace{-3mm}
\section{Experiments}\label{sec: empirics}
Our code can be found at \url{https://github.com/Sr-11/LFI}.
\begin{figure}[!t]
   \centering
    
   \includegraphics[scale=0.52]{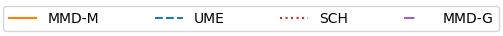}
   \hspace{-3mm} \includegraphics[scale=0.36]{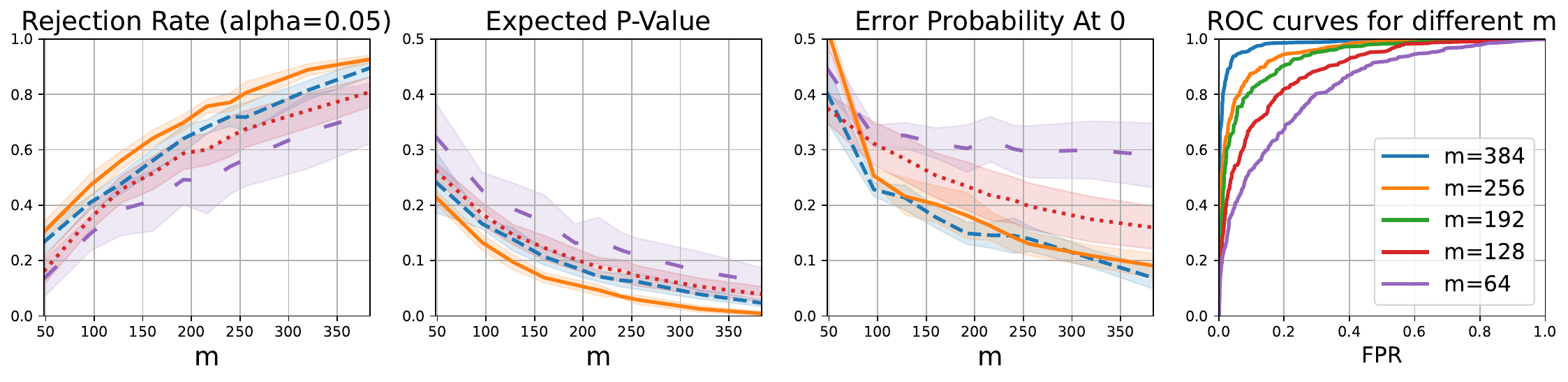}
   \vspace{-3mm}
    \caption{\small Empirical performance on \eqref{eqn:cifar hypotheses} for the CIFAR detection problem when $n_\sf{tr}
        =1920$. Plots from left to right are as follows. (a) rejection rate under the alternative if test rejects whenever the estimated $p$-value is smaller than $5\%$; (b) expected $p$-value \cite{EPV} under the alternative; (c) the average of type-I and II error probabilities when thresholded at 0 (different from \eqref{eq: test}, see Appendix); and (d) ROC curves for different $m$ using MMD-M and \Cref{alg:learn_deep_kernel}. Shaded
    area shows the standard deviation across $10$ independent runs. Missing benchmarks (thresholded MMD, MMD-O, LBI, RFM) are weaker; see Appendix for full plot.}
    \label{fig:cifar main}
    \vspace{-5mm}
\end{figure}
\vspace{-1mm}

\vspace{-1mm}
\subsection{Image Source Detection}\label{sec:image source}
\vspace{-1mm}
Our first empirical study looks at the task of detecting whether images come from the CIFAR-10 \cite{krizhevsky2020cifar} dataset or a SOTA generative model (DDPM) \cite{ ho2020denoising, von-platen-etal-2022-diffusers}. While source detection is on its own interesting, it turns out that detecting whether a group of images comes from the generative model versus the real dataset can be too ``easy'' (see experiments in \cite{jitkrittum2018informative}). Therefore, we consider a \emph{mixed alternative}, where the alternative hypothesis is not simply the generative model but CIFAR with planted DDPM images. Namely, our $n$ labeled images come from the following distributions:
\begin{equation}\label{eqn:cifar hypotheses}
P_X=\text{CIFAR}, \quad\text{and}\quad P_Y=\frac{1}{3}\cdot \text{DDPM}+\frac{2}{3}\cdot \text{CIFAR}.
\end{equation}
The goal is to test whether the $m$ unlabeled observations $Z$ have been corrupted with $\rho$ or more fraction of DDPM images (versus uncorrupted CIFAR); this corresponds to \eqref{eqn:LFHT def} (or equivalently \eqref{eqn:lfht mix def} with $\delta=1$).
\Cref{fig:cifar main} shows the performance of our approach with this mixed alternative. 

\textbf{Network Architecture}\quad 
With a standard deep CNN, the difference is only at the final layer: for the kernel-based tests it is a feature output; for classifiers, we add an extra linear layer to logits.

We see from Figure \ref{fig:cifar main} that our kernel-based test outperforms other benchmarks at a fixed training set size $n_{tr}$. One potential cause is that $\mmd$ has an ``optimization'' subroutine (which it solves in closed form) as it is an IPM. This additional layer of optimization may lead to better performance at small sample sizes. The thresholding trick does not seem to improve power empirically. We omit several benchmarks from this figure for graphic presentation and they do not exhibit good separating power; see the Appendix for the complete results. The bottom plot of Figure \ref{fig:trade off} shows $m$ and $n$ on $\log$-scale against the total probability of error, exhibiting the simulation-experimentation trade-off.

\vspace{-3mm}
\subsection{Higgs-Boson Discovery} \label{sec: Higgs} 

\begin{wrapfigure}{R}{0.5\textwidth}
    \vspace{-8mm}
    \begin{center}
    \vspace{-5mm}
    \includegraphics[width=0.5\textwidth]{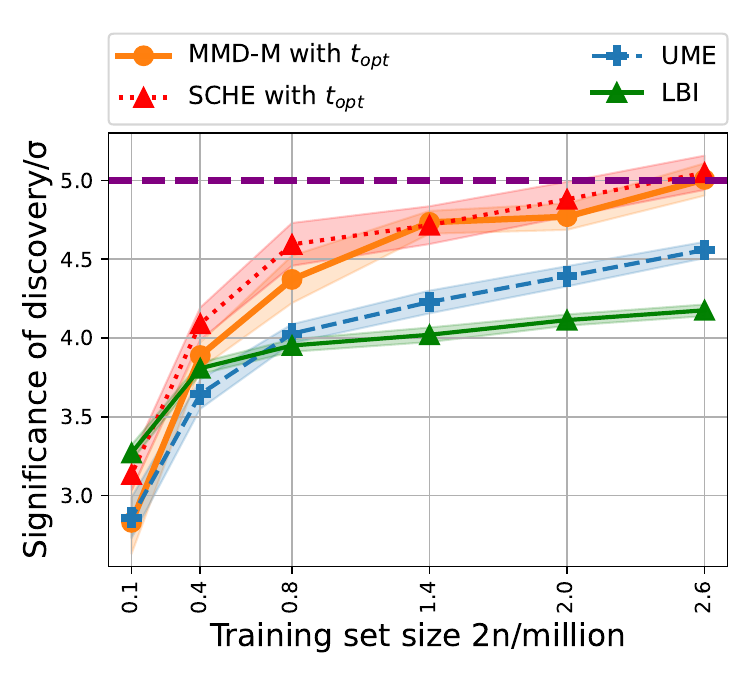}
    \vspace{-3mm}
    \caption{\small
    Expected significance of discovery on a mixture of 1000 backgrounds and
    100 signals in the Higgs experiment. Shaded area shows the standard deviation over $10$ independent runs. See Appendix for full plot including missing benchmarks. }
    \label{fig:higgs_p-n}
    \end{center}
\end{wrapfigure}

\vspace{-1mm}
The 2012 announcement of the Higgs boson's discovery by the ATLAS and CMS experiments \cite{aad2012observation,chatrchyan2012observation} marked a significant milestone in physics. The statistical problem inherent in the experiment is well-modeled by \eqref{eqn:lfht mix def}, using a signal rate $\delta$ predicted by theory and misspecification parameter $R=0$ (as was assumed in the original discovery). We consider our algorithm's power against past studies in the physics literature
\cite{baldi2014searching} as measured by the \emph{significance of discovery}. We note an important distinction from \cref{alg:learn_deep_kernel} in this application.

\textbf{Estimating the Significance}\quad In physics, the threshold for claiming a ``discovery'' is usually at a significance of $5\sigma$, corresponding
to a $p$-value of $2.87\times 10^{-7}$. Approximately $n_\sf{cal}\sim (2.87)^{-1} \times 10^7$ samples would be necessary for \cref{alg:learn_deep_kernel} to reach such a precision. Fortunately the distribution of the test statistic is approximated by a Gaussian customarily. We adopt this approach for our experiment hereby assuming that $m$ is large enough for the CLT to apply. We use the ``expected significance of discovery'' as our metric \cite{baldi2014searching} which, for the additive statistic $T_f=\frac1m\sum_{i=1}^m f(Z_i)$, is given by $\frac{\delta(T_f(Y^\sf{cal})-T_f(X^\sf{cal}))}{\sqrt{\widehat{\var}(f(X^{\sf{cal}}))/m}}$. If the thresholding trick (\Cref{sec:threshold trick}) is applied we use the more precise Binomial tail, in which case the significance is estimated by $-\Phi^{-1}(\P(\textup{Bin}(m,
T_{f_{t_\sf{opt}}}(X^\sf{cal})) \geq T_{f_{t_\sf{opt}}}(Z)))$, where $\Phi$ is the standard normal CDF. 

\textbf{Newtowk Architecture}\quad 
The architecture is a 6-layer feedforward net similar for all tests (kernel-based and classifiers) except for the last layer. We leave further details to the Appendix. 

As can be seen in \Cref{fig:higgs_p-n},
$\Sch$'s test and MMD-M with threshold $t_\sf{opt}$ are the best methods, achieving similar performance as the algorithm of \cite{baldi2014searching}; reaching the significance level of $5\sigma$ on 2.6 million simulated datapoints and a test sample made up of a mixture of 1000 backgrounds and 100 signals. The top plot of Figure \ref{fig:trade off} shows $m$ and $n$ on $\log$-scale against the total probability of error through performing the test \eqref{eq: test}, exhibiting the asymmetric simulation-experimentation trade-off. 

\vspace{-4mm}
\section{Conclusion}
\vspace{-3mm}
In this paper, we introduced \eqref{eqn:lfht mix def} as a theoretical model of real-world likelihood-free signal detection problems arising in science. We proposed a kernel-based test statistic and analyzed its minimax sample complexity, obtaining both upper (\Cref{thm: MMDUpper}) and lower bounds (\Cref{thm:lower bds}) in terms of multiple problem parameters, and discussed their tightness (\Cref{ssec:tightness}) and connections to prior work (\Cref{ssec:connections}). On the empirical side, we described a method for training a parametrized kernel and proposed a consistent $p$-value estimate (\Cref{alg:learn_deep_kernel} and \Cref{prop: consistent}). We examined the performance of our method in two experiments and found that parametrized kernels achieve state-of-the-art performance compared to relevant benchmarks from the literature. Moreover, we confirmed experimentally the existence of the asymmetric simulation-experimentation trade-off (Figure \ref{fig:trade off}) which is suggested by minimax analysis. We defer further special cases of \Cref{thm: MMDUpper}, all relevant proofs and experimental details to the Appendix.

\begin{ack}
    YP was supported in part by the National Science
    Foundation under Grant No CCF-2131115. Research was sponsored by the United States Air Force Research Laboratory and the Department of the Air Force Artificial Intelligence Accelerator and was accomplished under Cooperative Agreement Number FA8750-19-2-1000. The views and conclusions contained in this document are those of the authors and should not be interpreted as representing the official policies, either expressed or implied, of the Department of the Air Force or the U.S. Government. The U.S. Government is authorized to reproduce and distribute reprints for Government purposes notwithstanding any copyright notation herein.
\end{ack}

\bibliography{ref.bib}

\appendix\onecolumn
\addtocontents{toc}{\protect\setcounter{tocdepth}{2}}
\renewcommand{\contentsname}{Table of Contents}
\newpage
\addcontentsline{toc}{section}{Appendix}
\part{Appendix}
\parttoc


\section{Notation}
We use $A\gtrsim B, A \lesssim B, A \asymp B$ to denote $A = \Omega(B), B = \Omega(A)$ and $A = \Theta(B)$ respectively, where the hidden constants depend on untracked parameters multiplicatively.\footnote{For example, the first equation in \eqref{eq:Pdb upper} means that there exists a constant $c$ independent of $\alpha, k, \eps, \delta, R$, such that $\min\{m,n\}\geq c\frac{\log(1/\alpha)(1+R)^2}{k\epsilon^2\delta^2}$.}

We write $\TV,\KL,\chi^2$ for total-variation, KL-divergence and $\chi^2$-divergence, respectively. We write $D(P_{Y|X}\|Q_{Y|X}|P_X) = \E_{X \sim P_X} D(P_{Y|X} \| Q_{Y|X})$ as the \emph{conditional divergence} for any probability measures $P,Q$ on two variables $X,Y$ and divergence $D \in \{\TV,\KL,\chi^2\}$. 

We write $\ell^p$ for the usual $\ell^p$ sequence space and $L^p$ for the usual $L^p$ space with respect to the Lebesgue measure. Both the $\ell^p$ norm and the $L^p$ norm are written as $\|\cdot\|_p$ if no ambiguity arises.

For real numbers $a,b\in\R$ we also write $\max\{a,b\}$ as $a\vee b$ and $\min\{a,b\}$ as $a\wedge b$.

We use $\vec{1}_d$ to denote an $d$-dimensional all $1$'s vector.

For an integer $k\in\mathbb{Z}^+$, we write $[k]$ as a short notation for the set $\{1,2,\dots,k\}$.

In the proofs of \Cref{thm: MMDUpper} and \Cref{thm:lower bds}, we use $\stackrel{!}{=}$ for an equality that we are trying to prove.

\section{Applications of Theorem \ref{thm: MMDUpper}}\label{sec:extra examples}
Usually, minimax rates of testing are proven under separation assumptions using more traditional
measures of distance such as $L^p$, where $p\in[1,\infty]$. In this section we show one example of how
\Cref{thm: MMDUpper} can be used to recover known results, and also obtain some novel results under $L^2$-separation and $L^1$-separation.

\subsection{Bounded Discrete Distributions Under \texorpdfstring{$L^2/L^1$}{L2-L1}-Separation}\label{sec:appendix discrete example}

\textbf{Sample Complexity Upper Bounds}\quad Let $\cal P_{\sf{Db}}(k, C)$ be the set of all discrete distributions $P$ supported on $[k]=\{1,2,\dots,k\}$ satisfying $\max_{1\leq i\leq k} p(i) \leq C/k$, where $p$ is the probability mass function of $P$ (here $\sum_{i=1}^k p(k)=1$). For distributions $P_X,P_Y,P_Z$ we shall write $p_X,p_Y,p_Z$ as their probability mass functions, respectively.

Let us apply \Cref{thm: MMDUpper} with underlying space $\cal X = [k]$ and measure $\mu = \frac1k \sum_{i=1}^k \delta_i$. Take the kernel $K(x,y) =\one\{x=y\}= \sum_{i=1}^k \one\{x=y=i\}$, and note that for any two distributions $P_X,P_Y$ we have 
$$\mmd^2(P_X,P_Y)=\E\Big[K(X,X')+K(Y,Y')-2K(X,Y)\Big]=\sum_i|p_X(i)-p_Y(i)|^2$$
where $(X,X',Y,Y')\sim P_X^{\otimes2}\otimes  P_Y^{\otimes2}$. So the corresponding MMD is the $\ell^2$-distance on probability mass functions. Note also that $K=\sum_{i=1}^k \frac{1}{k} \left(\sqrt{k}\one\{x=i\}\right)\left(\sqrt{k}\one\{y=i\}\right)$, where $\left\{\sqrt{k}\one\{x=i\}\right\}_{i=1}^k$ forms an orthonormal basis of $L^2(\mu)$. So $K$ has only one nonzero eigenvalue, namely 
$$\lambda_1=\lambda_2=\ldots=\lambda_k=1/k,$$
of multiplicity $k$. Suppose that we observe samples $X,Y,Z$ of size $n,n,m$ from $P_X,P_Y,P_Z \in \cal P_\sf{Db}(k,C)$, where $\mmd(P_X,P_Y) = \sqrt{\sum_i|p_X(i)-p_Y(i)|^2} \geq \epsilon$.
Plugging into Theorem \ref{thm: MMDUpper} shows that:

\begin{proposition}
For any two $P_X, P_Y\in \cal P_{\sf{Db}}(k, C)$, if the $\ell^2$-distance between $p_X, p_Y$ is at least $\epsilon$, then testing \eqref{eqn:lfht mix def} is possible at total error $\alpha$ using $n$ simulation samples and $m$ real data samples provided that
\begin{equation}\label{eq:Pdb upper}
    \begin{aligned}
    \min\{m,n\} &\gtrsim \frac{C \|\lambda\|_\infty \log(1/\alpha)(1+R)^2}{\delta^2\epsilon^2} \asymp \frac{\log(1/\alpha)(1+R)^2}{k\epsilon^2\delta^2},\\
    \min\{n,\sqrt{mn}\} &\gtrsim \frac{C \|\lambda\|_2 \sqrt{\log(1/\alpha)}}{\epsilon^2\delta} \asymp \frac{\sqrt{\log(1/\alpha)}}{\sqrt{k}\epsilon^2\delta}.
    \end{aligned}
\end{equation}
where $R$ is defined as in the assumption $(iii)$ of \Cref{item:mlfht3}.
\end{proposition}


We can convert the above results to measure separation with respect to total variation (recall $\TV(p,q) = \frac12\sum_i|p(i)-q(i)|=\frac12 \|p-q\|_1$) using the AM-QM inequality $\|p_X-p_Y\|_1 \leq \sqrt k\|p_X-p_Y\|_2$. Then, taking $R\asymp\alpha\asymp\delta=\Theta(1)$ recovers the minimax optimal results of \cite{kelly2010universal,kelly2012classification,RY22SBI}, for $\LF$ over the class $\cal P_\sf{Db}$. Note that analogous results for two-sample testing follow from the above using the reduction presented in \Cref{ssec:connections}.

\textbf{Sample Complexity Lower Bounds}\quad Recall the definition of $J^\star_\epsilon$ and note that $\|\lambda\|_{2,J}^2=\frac{\min(J-1,k)}{k^2}$ for all $J \geq 2$. By \Cref{cor:lower bd} we see that $J^\star_\epsilon \gtrsim k$ as soon as $\epsilon \lesssim 1/k$. Thus, for $\epsilon \lesssim 1/k$ the necessity of
\begin{equation}\label{eqn:P_db lower}
    m\gtrsim\frac{\log(1/\alpha)}{k\epsilon^2\delta^2},\, n\gtrsim \frac{\sqrt{\log(1/\alpha)}}{\sqrt{k}\epsilon^2}\,\text{ and }\, m+\sqrt{mn} \gtrsim \frac{\sqrt{\log(1/\alpha)}}{\sqrt k\epsilon^2\delta}
\end{equation}
follows by \Cref{thm:lower bds}. Here it is crucial to note that when $\delta=\Theta(1)$, we have
\begin{align*}
    m+\sqrt{mn} \gtrsim \frac{\sqrt{\log(1/\alpha)}}{\sqrt k\epsilon^2} \text{ and } n\gtrsim \frac{\sqrt{\log(1/\alpha)}}{\sqrt{k}\epsilon^2}
    \iff
    \sqrt{mn} \gtrsim \frac{\sqrt{\log(1/\alpha)}}{\sqrt k\epsilon^2} \text{ and } n\gtrsim \frac{\sqrt{\log(1/\alpha)}}{\sqrt{k}\epsilon^2}
\end{align*}
and hence the upper bound \eqref{eq:Pdb upper} meets with the lower bound \eqref{eqn:P_db lower} provided $R\asymp \delta=\Theta(1)$. Once again, setting $R\asymp \delta\asymp\alpha=\Theta(1)$ we the optimal lower bounds recovering the results of \cite{RY22SBI} (in the regime $\epsilon \lesssim 1/k$). In short we can also recover the following result for $\LF$.
\begin{proposition}[{{\cite[Theorem 1, adapted]{RY22SBI}}}]
    On the class $P_{\sf{Db}}(k, C)$, using $n$ simulation samples and $m$ real data samples, if
    \begin{align}\label{eq:Pdb3}
        n\gtrsim \frac{1}{\sqrt{k}\epsilon^2},\quad m\gtrsim \frac{1}{k\epsilon^2}, \quad \sqrt{mn}\gtrsim\frac{1}{\sqrt{k}\eps^2},
    \end{align}
    then  for any two distributions $P_X, P_Y \in \cal P_{\sf{Db}}(k, C)$ with $\|p_X-p_Y\|_2\geq\eps$, testing \eqref{eqn:LFHT def} is possible with a total error of $1\%$. 
    Conversely, to ensure the existence of a procedure that can test \eqref{eqn:LFHT def} with a total error of $1\%$ for any $P_X, P_Y \in \cal P_{\sf{Db}}(k, C)$ with $\|p_X-p_Y\|_2\geq\eps$, the number of observations $(n,m)$ must satisfy 
    \begin{align}\label{eq:Pdb4}
        n\gtrsim \frac{1}{\sqrt{k}\epsilon^2},\quad m\gtrsim \frac{1}{k\epsilon^2},\quad \sqrt{mn}\gtrsim\frac{1}{\sqrt{k}\eps^2}.
    \end{align}
    The implied constants in \eqref{eq:Pdb3} and \eqref{eq:Pdb4} do not depend on $k$ and $\eps$, but may differ.
\end{proposition}


\subsection{\texorpdfstring{$\beta$-H\"older Smooth Densities on $[0,1]^d$  Under $L^2/L^1$-Separation}{beta-Holder Smooth Densities on [0,1]^d  Under L^2/L^1-Separation}}\label{sec:appendix holder example}
\textbf{Sample Complexity Upper Bounds}\quad Let $\cal P_{\sf{H}}(\beta, d, C)$ be the set of all distributions on $[0,1]^d$ with $\beta$-H\"older smooth Lebesgue-density $p$ satisfying $\|p\|_{\cal C^\beta} \leq C$ for some constant $C>1$, where
\begin{align*}
&\|p\|_{\cal C^\beta} \eqdef \max_{0\leq |\alpha| \leq \lceil\beta-1\rceil} \|f^{(\alpha)}\|_\infty + 
\sup_{x\neq y \in [0,1]^d, |\alpha|=\lceil\beta-1\rceil} \frac{|f^{(\alpha)}(x)-f^{(\alpha)}(y)|}{\|x-y\|^{\beta-\lceil\beta-1\rceil}_2}, 
\end{align*}
where $\lceil \beta-1\rceil$ is the largest integer strictly smaller than $\beta$ and $|\alpha| =\sum_i \alpha_i$ is the norm of a multi-index $\alpha \in \N^d$. Abusing notation, we also use $\cal P_{\sf{H}}(\beta, d, C)$ to denote the set of all corresponding density functions.

We take $K(x,y) = \sum_j \one\{x ,y \in B_j\}$, where $\{B_j\}_{j\in[\kappa]^d}$ is the $j$'th cell of the regular grid of size $\kappa^d$ on $[0,1]^d$, i.e., $B_j = [(j-\vec{1}_d)/\kappa, j/\kappa]$ for $j \in [\kappa]^d$. Clearly there are $\kappa^d$ nonzero eigenvalues, each equal to $1$. The following approximation result is due to Ingster \citep{IngsterGoF}, see also \cite[Lemma 7.2]{arias2018remember}. 
\begin{lemma}\label{lem: smoothdensity}
    Let $f,g \in \cal P_\sf{H}(\beta, d, C)$ with $\|f-g\|_2 \geq \epsilon$. Then, there exist constants $c,c'$ independent of $\epsilon$ such that for any $\kappa\geq c\epsilon^{-1/\beta}$, 
    \begin{equation*}
        \mmd(f,g) \geq c' \|f-g\|_2. 
    \end{equation*}
\end{lemma}

Now, suppose that we have samples $X,Y,Z$ of size $n,n,m$ from $P_X,P_Y,P_Z \in \cal P_\sf{H}(\beta, d, C)$ with densities $p_X,p_Y,p_Z$ such that $\|p_X-p_Y\|_2 \geq \epsilon$. Then, \Cref{thm: MMDUpper} combined with \Cref{lem: smoothdensity} and the choice $\kappa \asymp \epsilon^{-1/\beta}$ shows that
\begin{proposition}
    Testing \eqref{eqn:lfht mix def} on  $\cal P_{\sf{H}}(\beta, d, C)$ at total error $\alpha$ using $n$ simulation and $m$ real data samples is possible provided
\begin{align*}
    \min\{m,n\}&\gtrsim \frac{C \|\lambda\|_\infty \log(1/\alpha)(1+R)^2}{\delta^2\epsilon^2} \asymp \frac{\log(1/\alpha)(1+R)^2}{\delta^2\epsilon^2}, \\
    \min\{n, \sqrt{nm}\} &\gtrsim \frac{C \|\lambda\|_2 \sqrt{\log(1/\alpha)}}{\epsilon^2\delta}  \asymp \frac{\sqrt{\log(1/\alpha)}}{\epsilon^{(2\beta+d/2)/\beta}\delta},
\end{align*}
where $\epsilon$ is an $L^2$-distance lower bound between $P_X, P_Y$ and $R$ is defined as in the assumption $(iii)$ of \Cref{item:mlfht3}.
\end{proposition}

Setting $R\asymp\alpha \asymp \delta = \Theta(1)$ recovers the optimal results of \cite{RY22SBI} for the class $\cal P_\sf{H}$. Once again, identical results under $L^1$ separation follow from Jensen's inequality $\|\cdot\|_{L^1([0,1]^d)} \leq \|\cdot\|_{L^2([0,1]^d)}$. Note that analogous results for two-sample testing follow from the above using the reduction presented in \Cref{ssec:connections}.

\textbf{Sample Complexity Lower Bounds}\quad 
The kernel defined in the previous paragraph is not suitable for constructing lower bounds over the class $\cal P_\sf{H}$ because its eigenfunctions do not necessarily lie in $\cal P_\sf{H}$. It would be possible to consider a different kernel that is more adapted to this problem/class but we do not pursue this here. 


\subsection{\texorpdfstring{$(\beta,2)$-Sobolev Smooth Densities on $\R^d$ Under $L^2$-Separation}{(beta,2)-Sobolev Smooth Densities on R^d Under L^2-Separation} }\label{sec:appendix sobolev example}

\textbf{Sample Complexity Upper Bounds}\quad Let $\cal P_\sf{S}(\beta, d, C)$ be the class of distributions that are supported on $\R^d$ and whose Lebesgue density $p$ satisfies $\|p\|_{\beta,2} \leq C$, where
\begin{equation}
    \|p\|_{\beta,2} \eqdef\left\|(1+\|\cdot\|)^\beta\cal F[p]\right\|_2
\end{equation}
and $\cal F$ denotes the Fourier transform. Again, abusing notation, we write $\cal P_\sf{S}(\beta, d, C)$ both as the set of distributions and the set of density functions. 

We take the Gaussian kernel $G_\sigma(x,y) = \sigma^{-d}\exp(-\|x-y\|_2^2/\sigma^2)$ on $\cal X = \R^d$ with base measure $\D\mu(x)=\exp(-x^2)\D x$. In \cite{li2019optimality} the authors showed that the two-sample test that thresholds the Gaussian MMD with appropriately chosen variance $\sigma^2$ achieves the minimax optimal sample complexity over $\cal P_\sf{S}$, when separation is measured by $L^2$. A key ingredient in their proof is the following inequality. 
\begin{lemma}[{{\cite[Lemma 5]{li2019optimality}}}]\label{lem:gauss kernel approx}
Let $f,g \in \cal P_\sf{S}(\beta, d, C)$ with $\|f-g\|_2 \geq \epsilon$. Then, there exist constants $c,c'$ independent of $\epsilon$ such that for any $\sigma \leq c\, \epsilon^{1/\beta}$, we have
\begin{equation*}
    \mmd(f,g) \geq c'\|f-g\|_2. 
\end{equation*}
\end{lemma}

Now, suppose that we have samples $X,Y,Z$ of sizes $n,n,m$ from $P_X,P_Y,P_Z \in \cal P_\sf{S}(\beta, d, C)$ for some constant $C$ with densities $p_X,p_Y,p_Y$ satisfying $\|p_X-p_Y\|_2\geq\epsilon$. 

Note that the heat-semigroup is an $L^2$-contraction ($\|\lambda\|_\infty\leq1$) and that $$\|\lambda\|_2^2 = \int G_\sigma(x,y)^2 \D\mu(x)\D\mu(y)\asymp\sigma^{-d}$$ up to constants depending on the dimension. \Cref{thm: MMDUpper} combined with \Cref{lem:gauss kernel approx} and a choice $\sigma\asymp\epsilon^{1/\beta}$ yields the following result.
\begin{proposition}
    Testing \eqref{eqn:lfht mix def} over the class $\cal P_\sf{S}$ with total error $\alpha$ is possible provided
\begin{align*}
    \min\{m,n\}&\gtrsim \frac{C \|\lambda\|_\infty \log(1/\alpha)(1+R)^2}{\delta^2\epsilon^2} \asymp \frac{\log(1/\alpha)(1+R)^2}{\delta^2\epsilon^2} \\
    \min\{n, \sqrt{nm}\} &\gtrsim \frac{C \|\lambda\|_2 \sqrt{\log(1/\alpha)}}{\epsilon^2\delta}  \asymp \frac{\sqrt{\log(1/\alpha)}}{\epsilon^{(2\beta+d/2)/\beta}\delta}, 
\end{align*}
where $\epsilon$ is the lower bound on the $L^2$-distance between $P_X, P_Y$ and $R$ is defined as in the assumption $(iii)$ of \Cref{item:mlfht3}.
\end{proposition}
Taking $R\asymp\delta\asymp\alpha=\Theta(1)$ above, we obtain new results for $\LF$ and using the reduction from two-sample testing given in \Cref{ssec:connections} we partly recover \cite[Theorem 5]{li2019optimality}. Only partly, because the above requires bounded density with respect to our base measure $\D\mu(x)=\exp(-x^2)\D x$. 

\textbf{Sample Complexity Lower Bounds}\quad 
Note that our lower bound \Cref{thm:lower bds} doesn't apply because the top eigenfunction of the Gaussian kernel is not constant. Once again, a more careful choice of the base measure (or kernel) might lead to a more suitable argument for the lower bound. We leave such pursuit as open.

\section{Black-box Boosting of Success Probability}\label{sec:boosting}

In this section we briefly describe how upper bounds on the minimax sample complexity in the constant error probability regime ($\alpha=\Theta(1)$) can be used to obtain the dependence $\log(1/\alpha)$ in the small error probability regime ($\alpha=o(1)$). We will argue abstractly in a way that applies to the setting of \Cref{thm: MMDUpper}.

Suppose that from some distributions $P_1,P_2,\dots,P_k$ we take samples $X^1,X^2,\dots,X^k$ of size $n_1,n_2,\dots,n_k$ respectively and are able to decide between two hypotheses $H_0$ and $H_1$ (fixed but arbitrary) with total error probability at most $1/3$. Call this test as $\Psi(X^1,\dots,X^k) \in \{0,1\}$, so that 
\begin{equation*}
    \P(\Psi(X^1,\dots,X^k) = 0 | H_0) \geq 2/3 \qquad\text{and}\qquad \P(\Psi(X^1,\dots,X^k)=1|H_1) \geq 2/3. 
\end{equation*}
Now, to each an error of $o(1)$, instead, we take $18n_1\log(2/\alpha), \dots, 18 n_k\log(2/\alpha)$ observations from $P_1$ through $P_k$, and split each sample into $18\log(2/\alpha)$ equal sized batches $\{X^{i,j}\}_{i \in [k], j \in [18\log(2/\alpha)]}$. Here $18\log(2/\alpha)$ is assumed to be an integer without loss of generality. 
The split samples form $18\log(2/\alpha)$ independent binary random variables $$A_j \eqdef  \Psi(X^{1,j}, \dots, X^{k,j})$$ for $j=1,2,\dots,18\log(2/\alpha)$. We claim that the majority voting test
\begin{equation*}
    \Psi_\alpha(\{X^{i,j}\}_{i,j}) = \begin{cases} 1 &\text{if } \bar A \geq 1/2 \\ 0 &\text{otherwise}\end{cases}
\end{equation*}
tests $H_0$ against $H_1$ with total probability of error at most $\alpha$, where $$\bar A \eqdef \frac{1}{18\log(2/\alpha)} \sum_{j=1}^{18\log(2/\alpha)} A_j.$$ Indeed, by Hoeffding's inequality, we have
\begin{align*}
    \P\left(\left.\bar A \geq 1/2 \right| H_0\right) &\leq \alpha/2 \\
    \P\left(\left.\bar A \leq 1/2 \right| H_1\right) &\leq \alpha/2.
\end{align*}
Therefore, in the remainder of our upper bound proofs, we only focus on achieving a constant probability of error ($\alpha = \Theta(1)$) as the logarithmic dependence follows by the above. 
\begin{remark}
    As mentioned in the discussion succeeding \Cref{cor:lower bd}, we do conjecture the \emph{tight} dependence in the upper bound to be $\sqrt{\log(\alpha^{-1})}$ instead of $\log(\alpha^{-1})$ shown by this method.
\end{remark}

\section{\texorpdfstring{ Proof of \Cref{thm: MMDUpper} }{ Proof of Theorem 3.2} }

\subsection{Notation and Technical Tools}
We use the expansion $$K(x,y) = \sum_\ell \lambda_\ell e_\ell(x) e_\ell(y)$$ extensively, where $\lambda \eqdef (\lambda_1, \lambda_2, \dots)$ are $K$'s eigenvalues (regarded as an integral operator on $L^2(\mu)$) in non-increasing order and $e_1,e_2,\dots$ are the corresponding eigenfunctions forming an orthonormal basis for $L^2(\mu)$, and convergence is to be understood in $L^2(\mu)$. We use the notation $\langle\,\cdot\,\rangle\eqdef\int\cdot\,\D\mu$.
For all $u \in L^2(\mu)$ we define 
$$u_\ell \eqdef \langle u e_\ell\rangle,\quad u_{\ell\ellp} \eqdef \langle u e_\ell e_\ellp\rangle,\quad \ell=1,2,\dots$$
and consequently $u=\sum_\ell u_\ell e_\ell$.
We also define
$$ K[u](\cdot) \eqdef \int K(t,\cdot)u(t)\mu(\D t) = \sum_\ell \lambda_\ell u_\ell e_\ell(\cdot),$$
where the second equality follows from the orthonormality of $\{e_\ell\}_{\ell=1}^\infty.$ 
Note that the RKHS embedding satisfies $\theta_u \eqdef \int K(x,\cdot) u(x)\D \mu(x) = K[u]$. 
Now, for $P_X$ we write 
$$x_\ell \eqdef (p_X)_\ell = \langle p_X e_\ell\rangle,\quad x_{\ell\ellp} \eqdef (p_X)_{\ell\ellp} = \langle p_X e_\ell e_\ellp\rangle,\quad \ell,\ellp=1,2,\dots$$
where $p_X$ is the $\mu$-density of $P_X$. The similar notations also apply to $P_Y,P_Z$. 
The following identities will be very useful in our proofs. 
\begin{lemma}\label{lem:calc rules}
    For each identity below, let $f,g,h \in L^2(\mu)$ be such that the quantity is well defined. Then, 
\begin{align}
    \|\theta_f\|_{\cal H_K}^2 &= \sum_\ell\lambda_\ell f_\ell^2\label{eqn:calc rule rkhs norm} \\
   \mmd^2(f,g) &= \sum_\ell \lambda_\ell (f_\ell-g_\ell)^2\label{eqn:calc rule mmd} \\
    \|K[f]\|_2^2 &= \sum_\ell \lambda_\ell^2 f_\ell^2 \label{eqn:calc rule |K[f]|}\\
    \sum_\ell \lambda_\ell f_\ell g_\ell &= \langle f K[g]\rangle = \langle K[f]g\rangle\label{eqn:calc rule K[f]g} \\
    \sum_{\ell\ellp} \lambda_\ell\lambda_\ellp h_{\ell\ellp} f_\ell g_\ellp &= \langle h K[f] K[g]\rangle \label{eqn:calc rule hK[g]K[f]} \\
    \sum_{\ell\ellp} \lambda_\ell\lambda_\ellp g_{\ell\ellp} f_{\ell\ellp} &= \sum_\ell\lambda_\ell\langle fe_\ell K[ge_\ell]\rangle.\label{eqn:calc rule ll}
\end{align}
Suppose that $f,g$ are probability densities with respect to $\mu$ that are bounded by $C$. Then 
\begin{equation}
    0 \leq \sum_{\ell\ellp} \lambda_\ell\lambda_\ellp g_{\ell\ellp} f_{\ell\ellp} \leq C^2 \|\lambda\|_2^2. 
\end{equation}
\end{lemma}

\begin{proof}
We prove each claim, starting with \eqref{eqn:calc rule rkhs norm}. Clearly
\begin{align*}
    \|\theta_f\|^2_{\cal H_K} &= \|K[f]\|^2_{\cal H_K} \\
    &= \left\|\int K(x,\cdot) f(x)\D\mu(x)\right\|^2_{\cal H_K} \\
    &= \iint \langle K(x,\cdot), K(y,\cdot)\rangle_{\cal H_K} f(x)f(y)\D\mu(x)\D\mu(y) \\
    &= \iint K(x,y)f(x)f(y)\D\mu(x)\D\mu(y) \\
    &= \sum_\ell\lambda_\ell f_\ell^2
\end{align*}
as required. The second claim \eqref{eqn:calc rule mmd} follows immediately from \eqref{eqn:calc rule rkhs norm} by definition. For \eqref{eqn:calc rule |K[f]|} by orthogonality we have
\begin{align*}
    \|K[f]\|_2^2 &= \|\sum_\ell \lambda_\ell f_\ell e_\ell\|_2^2 \\
    &= \sum_\ell \lambda_\ell^2 f_\ell^2. 
\end{align*}
For \eqref{eqn:calc rule K[f]g} by the definition of $K[\cdot]$ we have
\begin{align*}
    \sum_\ell \lambda_\ell f_\ell g_\ell &= \left\langle \left(\sum_\ell \lambda_\ell f_\ell e_\ell\right) g\right\rangle \\
    &= \langle K[f]g\rangle. 
\end{align*}
For \eqref{eqn:calc rule hK[g]K[f]} we can write
\begin{align*}
    \sum_{\ell\ellp} \lambda_\ell \lambda_\ellp h_{\ell\ellp}f_\ell g_\ellp &= \sum_\ell \lambda_\ell f_\ell \left\langle \left(\sum_\ellp \lambda_\ellp g_\ellp e_\ellp\right) he_\ell \right\rangle \\
    &= \sum_\ell \lambda_\ell f_\ell \langle K[g] he_\ell\rangle \\
    &= \langle K[g]hK[f]\rangle. 
\end{align*}
Finally, for \eqref{eqn:calc rule ll} we have
\begin{align*}
    \sum_{\ell\ellp} \lambda_\ell \lambda_\ellp f_{\ell\ellp} g_{\ell\ellp} &= \sum_\ell\lambda_\ell \left\langle\left(\sum_\ellp \lambda_\ellp  g_{\ell\ellp} e_\ellp\right) fe_\ell\right\rangle \\
    &= \sum_\ell\lambda_\ell \langle K[ge_\ell] fe_\ell\rangle. 
\end{align*}
Suppose now that $f,g$ are probability densities with respect to $\mu$ that are bounded by $C>0$. Let $X,Y$ be independent random variables following the densities $f,g$. Then 
\begin{align*}
    \sum_{\ell\ellp} \lambda_\ell\lambda_\ellp f_{\ell\ellp}g_{\ell\ellp} &= \E\left[\left(\sum_\ell \lambda_\ell e_\ell(X)e_\ell(Y)\right)^2\right] \\&\leq C^2 \int_\cal{X}\int_\cal{X} \left(\sum_\ell \lambda_\ell e_\ell(x)e_\ell(y)\right)^2 \D\mu(x)\D\mu(y) \\&= C^2 \|\lambda\|_2^2
\end{align*}
as claimed, where we used that the $e_\ell$ are orthonormal. 
\end{proof}

\subsection{Mean and Variance Computation}\label{ssec:mean var comp}
We take $\pi=\delta/2$. Our statistic reads
\begin{align*}
    -T(X,Y,Z)+\gamma(X,Y,\pi) &= \langle \theta_{\widehat P_Z} - (\bar\pi \theta_{\widehat P_X} + \pi \theta_{\widehat P_Y}), \theta_{\widehat P_X} - \theta_{\widehat P_Y}\rangle_{u,\cal H_K} 
    \\[3ex]
    &= \frac{1}{nm}\underbrace{\sum_{ij}k(X_i,Z_j)}_{\sf{I}} - \frac{1}{nm} \underbrace{\sum_{ij}k(Y_i,Z_j)}_{\sf{II}} - \frac{2\bar\pi}{n(n-1)}\underbrace{\sum_{i < i'}k(X_i,X_{i'})}_{\sf{III}} 
    \\[1ex]
        &\qquad + \frac{2\pi}{n(n-1)}\underbrace{\sum_{i<i'} k(Y_i,Y_{i'})}_{\sf{IV}} + \frac{\bar\pi-\pi}{n^2}\underbrace{\sum_{ij} k(X_i,Y_j)}_{\sf{V}}. 
\end{align*}
Recall that $\nu=\argmin_{\nu' \in \R} \mmd(P_Z,\bar\nu'P_X+\nu'P_Y)$. Let us write $z=\bar\nu x + \nu y + r$ for $1-\bar\nu=\nu$, where the residual term is denoted as $r \in L^2(\mu)$. 
Let $\theta_r = \int r(t) K(t,\cdot) \mu(\D t)$ be the mean embedding of $r$. Under both hypotheses we assume that $\|\theta_r\|_{\cal H_K} \leq R\cdot \mmd(P_X,P_Y)$, moreover $\langle \theta_r, \theta_{P_Y}-\theta_{P_X}\rangle_{\cal H_K} = 0$ by the definition of $\nu$. We look at each of the $5+\binom{5}{2}=15$ terms of the variance separately. 
\begin{align*}
    \var(\sf{I}) &= \sum_{\ell\ellp} \lambda_\ell\lambda_\ellp \Big\{n(n-1)m  (z_{\ell\ellp}-z_\ell z_\ellp) x_\ell x_\ellp + nm(m-1)(x_{\ell\ellp}-x_\ell x_\ellp)z_\ell z_\ellp \\ &\qquad\qquad\qquad\qquad+ nm (x_{\ell\ellp}z_{\ell\ellp}-x_\ell x_\ellp z_\ell z_\ellp)\Big\} 
    \\[2ex]
    \var(\sf{II}) &= \sum_{\ell\ellp} \lambda_\ell\lambda_\ellp \Big\{n(n-1)m  (z_{\ell\ellp}-z_\ell z_\ellp) y_\ell y_\ellp + nm(m-1)(y_{\ell\ellp}-y_\ell y_\ellp)z_\ell z_\ellp \\ &\qquad\qquad\qquad\qquad + nm (y_{\ell\ellp}z_{\ell\ellp}-y_\ell y_\ellp z_\ell z_\ellp)\Big\} 
    \\[2ex]
    \var(\sf{III}) &= \sum_{\ell\ellp}\lambda_\ell\lambda_\ellp \Big\{\binom{n}{2} (x_{\ell\ellp}^2-x_\ell^2x_\ellp^2)+(\binom{n}{2}^2-\binom{n}{2}-\binom{4}{2}\binom{n}{4})(x_{\ell\ellp}-x_\ell x_\ellp)x_\ell x_\ellp\Big\} 
    \\[2ex]
    \var(\sf{IV}) &= \sum_{\ell\ellp}\lambda_\ell\lambda_\ellp \Big\{\binom{n}{2} (y_{\ell\ellp}^2-y_\ell^2y_\ellp^2)+(\binom{n}{2}^2-\binom{n}{2}-\binom{4}{2}\binom{n}{4})(y_{\ell\ellp}-y_\ell y_\ellp)y_\ell y_\ellp\Big\} 
    \\[3ex]
    \var(\sf{V}) &= \sum_{\ell\ellp} \lambda_\ell\lambda_\ellp \Big\{n^2(n-1)  (y_{\ell\ellp}-y_\ell y_\ellp) x_\ell x_\ellp + n^2(n-1)(x_{\ell\ellp}-x_\ell x_\ellp)y_\ell y_\ellp \\&\qquad\qquad\qquad\qquad+ n^2 (x_{\ell\ellp}y_{\ell\ellp}-x_\ell x_\ellp y_\ell y_\ellp)\Big\}
\end{align*}
For the cross terms we obtain
\begin{align*}
    \cov(\sf{I}, \sf{II}) &= \sum_{\ell\ell'}\lambda_\ell\lambda_\ellp n^2m( z_{\ell\ellp}  - z_\ell z_\ellp) x_\ell y_\ellp 
    \\
    \cov(\sf{I}, \sf{III}) &= \sum_{\ell\ellp} \lambda_\ell\lambda_\ellp n(n-1)m(x_{\ell\ellp} - x_\ell x_\ellp)z_\ell x_\ellp 
    \\
    \cov(\sf{I},\sf{IV}) &= 0 
    \\[1ex]
    \cov(\sf{I},\sf{V}) &= \sum_{\ell\ellp}\lambda_\ell\lambda_\ellp n^2m (x_{\ell\ellp} -x_\ell x_\ellp)z_\ell y_\ellp 
    \\
    \cov(\sf{II}, \sf{III}) &= 0 
    \\[1ex]
    \cov(\sf{II}, \sf{IV}) &= \sum_{\ell\ellp} \lambda_\ell\lambda_\ellp n(n-1)m(y_{\ell\ellp} - y_\ell y_\ellp)z_\ell y_\ellp 
    \\
    \cov(\sf{II}, \sf{V}) &= \sum_{\ell\ellp}\lambda_\ell\lambda_\ellp n^2m(y_{\ell\ellp}-y_\ell y_\ellp)z_\ell x_\ellp 
    \\
    \cov(\sf{III}, \sf{IV}) &= 0 
    \\[1ex]
    \cov(\sf{III}, \sf{V}) &= \sum_{\ell\ellp}\lambda_\ell\lambda_\ellp n^2(n-1) (x_{\ell\ellp}-x_\ell x_\ellp)x_\ell y_\ellp 
    \\
    \cov(\sf{IV}, \sf{V}) &= \sum_{\ell\ellp}\lambda_\ell\lambda_\ellp n^2(n-1)(y_{\ell\ellp}-y_\ell y_\ellp)y_\ell x_\ellp.
\end{align*}
Note that $\binom{n}{2}^2-\binom{n}{2}-\binom{n}{2}\binom{n}{4} = n(n-1)^2-n(n-1)$. Collecting terms, and simplifying, we get the coefficient of the $\frac{1}{n}$ term: 
\begin{align*}
    \coef\left(\frac1n\right) 
    &= \sum_{\ell,\ellp} \lambda_\ell\lambda_\ellp\Bigg(\underbrace{(x_{\ell\ellp}-x_\ell x_\ellp)z_\ell z_\ellp}_{\var(\sf{I})} + \underbrace{(y_{\ell\ellp}-y_\ell y_\ellp)z_\ell z_\ellp}_{\var(\sf{II})} + \underbrace{4\bar\pi^2(x_{\ell\ellp}-x_\ell x_\ellp)x_\ell x_\ellp}_{\var(\sf{III})} 
    \\
    &\qquad + \underbrace{4\pi^2(y_{\ell\ellp}-y_\ell y_\ellp)y_\ell y_\ellp}_{\var(\sf{IV})} + \underbrace{(\bar\pi-\pi)^2(y_{\ell\ellp}-y_\ell y_\ellp)x_\ell x_\ellp + (\bar\pi-\pi)^2(x_{\ell\ellp}-x_\ell x_\ellp)y_\ell y_\ellp}_{\var(\sf{V})} 
    \\
    &\qquad- \underbrace{4\bar\pi(x_{\ell\ellp}-x_\ell x_\ellp)z_\ell x_\ellp}_{\cov(\sf{I}, \sf{III})} 
    + \underbrace{2(\bar\pi-\pi)(x_{\ell\ellp}-x_\ell x_\ellp)z_\ell y_\ellp}_{\cov(\sf{I},\sf{V})}
    \\
    &\qquad - \underbrace{4\pi(y_{\ell\ellp}-y_\ell y_\ellp)z_\ell y_\ellp}_{\cov(\sf{II},\sf{IV})} - \underbrace{2(\bar\pi-\pi)(y_{\ell\ellp}-y_\ell y_\ellp)z_\ell x_\ellp}_{\cov(\sf{II},\sf{V})} \\
    &\qquad- \underbrace{4\bar\pi(\bar\pi - \pi)(x_{\ell\ellp} -x_\ell x_\ellp)x_\ell y_\ellp}_{\cov(\sf{III},\sf{V})} + \underbrace{4\pi(\bar\pi-\pi)(y_{\ell\ellp}-y_\ell y_\ellp)y_\ell x_\ellp}_{\cov(\sf{IV}, \sf{V})}\Bigg).
\end{align*}
After expanding $z_\ell$ as $z_\ell=\bar\nu x_\ell + \nu y_\ell + r_\ell$, we split the calculation into multiple parts to simplify it. First, we focus on terms that are multiplied by $(x_{\ell\ellp}-x_\ell x_\ellp)$ and do not contain $r_\ell$ or $r_\ellp$. Using \Cref{lem:calc rules} extensively and the fact that $\bar\pi=1-\pi, \bar\nu=1-\nu$, we find that the sum of these terms equals
\begin{align*}
    & \bar\nu^2\langle x K[x]^2\rangle + \nu^2 \langle x K[y]^2\rangle + 2\bar\nu\nu \langle xK[x]K[y]\rangle
    - \bar\nu^2\langle x K[x]\rangle^2-\nu^2\langle x K[y]\rangle^2 - 2\bar\nu\nu\langle x K[x]\rangle\langle x K[y]\rangle 
    \\
    &+ 4\bar\pi^2 \langle x K[x]^2\rangle - 4\bar\pi^2 \langle xK[x]\rangle^2 
        + (\bar\pi-\pi)^2\langle xK[y]^2\rangle - (\bar\pi-\pi)^2\langle xK[y]\rangle^2 
    \\
    &- 4\bar\pi\bar\nu\langle x K[x]^2\rangle -4\bar\pi \nu \langle xK[x]K[y]\rangle 
        + 4\bar\pi\bar\nu\langle xK[x]\rangle^2+4\bar\pi\nu\langle xK[x]\rangle\langle xK[y]\rangle 
    \\
    &+ 2(\bar\pi-\pi)\bar\nu\langle xK[x]K[y]\rangle + 2(\bar\pi-\pi)\nu\langle xK[y]^2\rangle 
    - 2(\bar\pi-\pi)\bar\nu\langle xK[x]\rangle\langle xK[y]\rangle - 2(\bar\pi-\pi)\nu\langle xK[y]\rangle^2 
    \\
    &- 4\bar\pi(\bar\pi-\pi)\langle xK[x]K[y]\rangle + 4\bar\pi(\bar\pi-\pi)\langle x K[x]\rangle\langle xK[y]\rangle 
    \\
    =& (\bar\nu-2\bar\pi)^2 \Big(\langle xK[x-y]^2\rangle -\langle xK[x-y]\rangle^2\Big) 
    \\
    \leq& \, C \, \|\lambda\|_\infty \mmd^2(P_X,P_Y). 
\end{align*}
Similarly, the terms involving $(y_{\ell\ellp}-y_\ell y_\ellp)$ but not $r_\ell$ or $r_\ellp$ sum up to the quantity
\begin{align*}
    (\nu-2\pi)^2\Big(\langle yK[x-y]^2\rangle - \langle yK[x-y]\rangle^2\Big) &\leq C\|\lambda\|_\infty \mmd^2(P_X,P_Y). 
\end{align*}
Next, collecting the terms involving both $(x_{\ell\ellp}-x_\ell x_\ellp)$ and $r_\ell$ or $r_\ellp$ we get 
\begin{align*}
    &2\bar\nu\langle x K[r]K[x]\rangle +2 \nu\langle x K[r]K[y]\rangle + \langle xK[r]^2\rangle - 2\bar\nu\langle xK[x]\rangle\langle xK[r]\rangle - 2\nu\langle xK[y]\rangle\langle xK[r]\rangle - \langle xK[r]\rangle^2 
    \\
    &-4\bar\pi\langle xK[x]K[r]\rangle + 4\bar\pi\langle xK[x]\rangle\langle xK[r]\rangle 
    \\
    &+ 2(\bar\pi-\pi)\langle xK[y]K[r]\rangle - 2(\bar\pi-\pi)\langle xK[y]\rangle\langle xK[r]\rangle 
    \\
    =&\, 2(\bar\nu-2\bar\pi)\Big(\langle xK[r]K[x-y]\rangle - \langle xK[r]\rangle\langle xK[x-y]\rangle\Big) + \langle xK[r]^2\rangle-\langle xK[r]\rangle^2 
    \\
    \lesssim&\, C\, \|\lambda\|_\infty (R+R^2) \mmd^2(P_X,P_Y). 
\end{align*}
Finally, collecting the terms involving both $(y_{\ell\ellp}-y_\ell y_\ellp)$ and $r_\ell$ or $r_\ellp$ we get
\begin{align*}
    & 2(\nu-2\pi)\Big(\langle yK[r]K[y-x]\rangle - \langle yK[r]\rangle\langle yK[y-x]\rangle\Big) + \langle yK[r]^2\rangle-\langle yK[r]\rangle^2 \\
    \lesssim&\, C \|\lambda\|_\infty (R+R^2) \mmd^2(P_X,P_Y). 
\end{align*}

Similarly we get
\begin{align*}
    \coef\left(\frac1m\right) &= \sum_{\ell\ellp} \lambda_\ell\lambda_\ellp\Bigg(\underbrace{(z_{\ell\ellp}-z_\ell z_\ellp)x_\ell x_\ellp}_{\var(\sf{I})} + \underbrace{(z_{\ell\ellp}-z_\ell z_\ellp)y_\ell y_\ellp}_{\var(\sf{I})} + \underbrace{2(z_{\ell\ellp}-z_\ell z_\ellp)x_\ell y_\ellp}_{\cov(\sf{I}, \sf{II})}\Bigg) 
    \\
    &= \langle zK[x-y]^2\rangle-\langle zK[x-y]\rangle^2 
    \\[1ex]
    &\lesssim C\|\lambda\|_\infty \mmd^2(P_X,P_Y).
\end{align*}
The remaining coefficients don't rely on subtle cancellations, and simple bounds yield
\begin{align*}
    \coef\left(\frac{1}{n(n-1)}\right) &= \sum_{\ell\ellp} \lambda_\ell\lambda_\ellp\Bigg( \underbrace{4\bar\pi^2\left(\frac12(x_{\ell\ellp}^2-x_\ell^2 x_\ellp^2)-(x_{\ell\ellp}-x_\ell x_\ellp)x_\ell x_\ellp\right)}_{\var(\sf{III})}
    \\
    &\qquad\qquad+ \underbrace{4\pi^2\left(\frac12(y_{\ell\ellp}^2-y_\ell^2 y_\ellp^2)-(y_{\ell\ellp}-y_\ell y_\ellp)y_\ell y_\ellp\right)}_{\var(\sf{IV})} \Bigg)
    \\
    &\lesssim C^2 \|\lambda\|_2^2
    \\[2ex]
    \coef\left(\frac{1}{nm}\right) 
    &= \sum_{\ell\ellp} \lambda_\ell\lambda_\ellp\Bigg(\underbrace{-(z_{\ell\ellp}-z_\ell z_\ellp)x_\ell x_\ellp - (x_{\ell\ellp}-x_\ell x_\ellp)z_\ell z_\ellp + (x_{\ell\ellp}z_{\ell\ellp}-x_\ell x_\ellp z_\ell z_\ellp)}_{\var(\sf{I})} 
    \\
    &\qquad - \underbrace{(z_{\ell\ellp}-z_\ell z_\ellp)y_\ell y_\ellp - (y_{\ell\ellp}-y_\ell y_\ellp)z_\ell z_\ellp + (y_{\ell\ellp}z_{\ell\ellp}-y_\ell y_\ellp z_\ell z_\ellp)}_{\var(\sf{I})} \Bigg)
    \\
    &\lesssim C^2 \|\lambda\|_2^2 
    \\[2ex]
    \coef\left(\frac{1}{n^2}\right) 
    &= \sum_{\ell\ellp} \lambda_\ell\lambda_\ellp\Bigg( \underbrace{(\bar\pi-\pi)\left(-(y_{\ell\ellp}-y_\ell y_\ellp)x_\ell x_\ellp - (x_{\ell\ellp}-x_\ell x_\ellp)y_\ell y_\ellp + (x_{\ell\ellp}y_{\ell\ellp}-x_\ell x_\ellp y_\ell y_\ellp)\right)}_{\var(\sf{V})} \Bigg) 
    \\
    &\lesssim C^2 \|\lambda\|_2^2. 
\end{align*}

Summarizing, we've found that 
\begin{equation}\label{eq:var=1/n2+1/mn}
    \begin{aligned}
        \var(T(X,Y,Z)-\gamma(X,Y,\pi)) &\lesssim \left(\frac1n + \frac1m\right) C \|\lambda\|_\infty (1+R^2) \mmd^2(P_X,P_Y) 
        \\&\qquad
        + \left(\frac{1}{n^2} + \frac{1}{nm}\right) C^2 \|\lambda\|_2^2. 
    \end{aligned}
\end{equation}
Using that $\langle \theta_r, \theta_{P_Y}-\theta_{P_X}\rangle_{\cal H_K} = 0$, we compute the expectation to be 
\begin{equation*}
\E\left[-T(X,Y,Z)+\gamma(X,Y,\pi)\right] = (\pi-\nu)\mmd^2(P_X,P_Y).     
\end{equation*}
Taking $\pi \eqdef \delta/2$ and applying Chebyshev's inequality shows that there exists a universal constant $c>0$, such that the testing problem is possible at constant error probability (say $\alpha=5\%$), provided that the sample sizes $m,n$ satisfy the following inequalities:
\begin{align*}
    \min\{m,n\} &\geq c\frac{C\|\lambda\|_\infty(1+R^2)}{\delta^2\epsilon^2} \\
    \min\{n,\sqrt{nm}\} &\geq c\frac{C\|\lambda\|_2}{\delta\epsilon^2}. 
\end{align*}
By repeated sample splitting and majority voting (see \Cref{sec:boosting}), we can boost the success probability of this test to the desired level $1-\alpha$ by incurring a multiplicative $\Theta(\log(1/\alpha))$ factor on the sample sizes $n,m$, which yields the desired result.

\section{\texorpdfstring{ Proof of \Cref{thm:lower bds} }{ Proof of Theorem 3.3 } }

\subsection{Information theoretic tools}

Our lower bounds rely on the method of two fuzzy hypotheses \cite{tsybakov}. Given a measurable space $\cal S$, let $\cal M(\cal S)$ denote the set of all probability measures on $\cal S$. We call subsets $H \subseteq \cal M(\cal S)$ hypotheses. The following is the main technical result that our proofs rely on.

\begin{lemma}\label{lem:lower bound main}
Take hypotheses $H_0,H_1 \subseteq \cal M(\cal S)$ and $P_0,P_1 \in \cal M(\cal S)$ random with $\P(P_i \in H_i) = 1$. Then
\begin{align*}
    \inf\limits_{\psi}\max_{i=0,1}\sup\limits_{P \in H_i} P(\psi\neq i) \geq \frac12\left(1-\TV(\bb E P_0, \bb E P_1)\right),
\end{align*}
where the infimum is over all tests $\psi:\cal X \to \{0,1\}$.
\end{lemma}
\begin{proof}
For any $\psi$
\begin{align*}
    \max_{i=0,1}\sup\limits_{\bb P_i\in H_i} \bb P_i(\psi\neq i) &\geq \frac12\sup\limits_{\bb P_i\in H_i} (\P_0(\psi=1)+\P_1(\psi=0)) \\
    &\geq\frac12\E\Big[P_0(\psi=1)+P_1(\psi=0)\Big].
\end{align*}
Optimizing over $\psi$ we get that the RHS above is equal to $\frac12(1-\TV(\E P_0, \E P_1))$ as required. 
\end{proof}
Therefore, to prove a lower bound on the minimax sample complexity of testing with total error probability $\alpha$, we just need to construct two random measures $P_i \in H_i$ such that $1-\TV(\E P_0, \E P_1) = \Omega(\alpha)$. In our proofs we also use the following standard results on $f$-divergences. 
\begin{lemma}[{{\cite[Section 7]{yuryyihongbook}}}] \label{lem:f-div ineqs}
    For any probability distributions $P,Q$ the inequalities
    \begin{align*}
        1-\TV(P,Q) \geq \frac12\exp(-\KL(P\|Q)) \geq \frac12\frac{1}{1+\chi^2(P\|Q)}
    \end{align*}
    hold. 
\end{lemma}

\begin{lemma}[Chain rule for $\chi^2$-divergence]\label{lem:chi2 chain}
    Let $P_{X,Y}, Q_{X,Y}$ be probability measures such that the marginals on $X$ are equal ($P_X=Q_X$). Then 
    \begin{equation*}
        \chi^2(P_{X,Y} \| Q_{X,Y}) = \chi^2(P_{Y|X} \| Q_{Y|X} | P_X). 
    \end{equation*}
\end{lemma}

\begin{proof}
Let $P_{X,Y}, Q_{X,Y}$ have densities $p,q$ with respect to some $\mu$. Then, by some abuse of notation, we have
    \begin{align*}
        \chi^2(P_{X,Y} \| Q_{X,Y}) &= -1 + \int \frac{p(x,y)^2}{q(x,y)} \D\mu(x,y) \\
        &= -1 + \int \frac{p(y|x)^2 p(x)}{q(y|x)} \D\mu(x,y) \\
        &= \int p(x) \int \left(\frac{p(y|x)^2}{q(y|x)} -1 \right)\D\mu(y,x) \\
        &= \chi^2(P_{Y|X} \| Q_{Y|X} | P_X). 
    \end{align*}
\end{proof}

\subsection{Constructing hard instances}
Recall that in the statement of \Cref{thm:lower bds}, we assume that $\mu(\cal X)=1$, $\sup_{x\in\cal X}K(x,x)\leq1$ and $\int K(x,y) \mu(\D x) \equiv \lambda_1$. Let $f_0\equiv1$ and for each $\eta \in \{\pm1\}^\N$ define
\begin{equation}
    f_\eta = 1 + \epsilon \underbrace{\sum_{j\geq2} \rho_j \eta_j e_j}_{\eqdef g_\eta}
\end{equation}
where $\{\rho_j\}_{j\geq2}$ is chosen as $\rho_j = \one\{2\leq j\leq J\} \sqrt{\lambda_j}/\|\lambda\|_{2,J}$, where we define $\|\lambda\|_{2,J}=\sqrt{\sum_{2\leq j \leq J} \lambda_j^2}$ for some $J\geq2$. Notice that $\int f_\eta(x) \mu(\D x)=\mu(\cal X)=1$ due to orthogonality of the eigenfunctions. Assume from here on that $J$ is chosen so that for all $\eta$ we have $f_\eta(x) \geq 1/2$ for all $x \in \cal X$. This makes $f_\eta$ into a valid probability density with respect to the base measure $\mu$. Before continuing, we prove the following Lemma, which gives a lower bound on the maximal $J$ for which $f_\eta \geq 1/2$ for all $\eta$. 

\begin{lemma}\label{lem:J bound}
$J\leq J^\star_\epsilon$ holds provided $2\epsilon\sqrt{J-1} \leq \|\lambda\|_{2J}$.
\end{lemma}

\begin{proof}[Proof of \Cref{lem:J bound}]
Notice that
\begin{equation}
    \|e_j\|_\infty = \sup_{x\in\cal X}\langle K(x,\cdot),e_j\rangle_{\cal H} \leq \sup_{x \in \cal X} \|K(x,\cdot)\|_{\cal H} \|e_j\|_{\cal H} \leq \frac{1}{\sqrt \lambda _j}, 
\end{equation}
where we use $\|K(x,\cdot)\|_{\cal H} = \sqrt{K(x,x)}$. We have
\begin{align*}
   \|g_\eta\|_\infty &= \epsilon \|\sum_{j\geq2}\rho_j \eta_j e_j\|_\infty
   = \epsilon \sup\limits_{x\in\cal X} \langle K(x,\cdot), \sum_{j\geq2} \rho_j\eta_j e_j\rangle_{\cal H} \\
    &\leq \epsilon \|\sum_{j\geq2} \rho_j\eta_j e_j\|_{\cal H} = \epsilon \sqrt{\sum_{j\geq2} \rho_j^2/\lambda_j} = \frac{\epsilon \sqrt{J-1}}{\|\lambda\|_{2,J}}, 
\end{align*}
and the result follows. 
\end{proof}
Note that \Cref{lem:J bound} immediately gives us a proof of \Cref{cor:lower bd}.
\begin{proof}[Proof of \Cref{cor:lower bd}]
    Suppose that $J$ is such that $\sum_{j=2}^J \lambda_j^2 \geq c^2 \|\lambda\|_2^2$. Then, by \Cref{lem:J bound}, if $\epsilon \leq \|\lambda\|_{2J}/(2\sqrt{J-1})$ then $J\leq J_\epsilon^\star$. By assumption, this is implied by the inequality $\epsilon \leq c\|\lambda\|_2/(2\sqrt{J-1})$, and the result follows. 
\end{proof}

Continuing with our proof, note that by construction we have 
\begin{equation}
    \mmd^2(f_0,f_\eta) = \sum_{j\geq2} \lambda_j\rho_j^2 = \epsilon^2, \quad \forall \eta\in\{\pm1\}^\N. 
\end{equation}

\subsubsection{Lower Bound on \texorpdfstring{$m$}{m}}


Again, we apply \Cref{lem:lower bound main} with the new (deterministic) construction
\begin{equation}
    P_0 = f_0^{\otimes n} \otimes (1+\eps e_2/\sqrt{\lambda_2})^n \otimes (1+\delta\eps e_2/\sqrt{\lambda_2})^{\otimes m}, 
    \qquad 
    P_1 = f_0^{\otimes n} \otimes (1+\eps e_2/\sqrt{\lambda_2})^n \otimes f_0^{\otimes m},
\end{equation}
where we write $f_\one = f_{(1,1,\dots)}$ and similarly for $g_\one$. By the data-processing inequality for $\chi^2$-divergence (also by \Cref{lem:chi2 chain}), we may drop the first $2n$ coordinates and obtain
\begin{align*}
    \chi^2(\E P_0, \E P_1) 
    &= \chi^2((1+\delta\epsilon e_2/\sqrt{\lambda_2})^{\otimes m} \| f_0^{\otimes m}) \\
    &= \left( 1+\delta^2\epsilon^2/{\lambda_2}  \right)^m-1 \\
    &\leq \exp\left(  \frac{\delta^2\epsilon^2m}{\lambda_2} \right)-1. 
\end{align*}
By Lemma \ref{lem:f-div ineqs} we 
\begin{equation*}
    1-\TV(\E P_0, \E P_1) \gtrsim \frac{1}{\chi^2(\E P_0, \E P_1)-1} \geq \exp(-\delta^2\epsilon^2 m) \stackrel{!}{=} \Omega(\alpha). 
\end{equation*}
The lower bound $m \gtrsim \lambda_2\log(1/\alpha)/(\delta\epsilon)^2$ now follows readily. 

\subsubsection{Lower Bound on \texorpdfstring{$n$}{n}}
Once again, we apply \Cref{lem:lower bound main} to the new construction
\begin{equation}
    P_0 = f_0^{\otimes n} \otimes f_\eta^{\otimes n} \otimes f_0^{\otimes m}, \qquad P_1 = f_\eta^{\otimes n} \otimes f_0^{\otimes n} \otimes f_0^{\otimes m},
\end{equation}
where we put a uniform prior on $\eta \in \{\pm1\}^\N$ as before. Using the subadditivity of total variation under products, we compute
\begin{align*}
    \TV(\E P_0, \E P_1) &= \TV(f_0^{\otimes n} \otimes \E f_\eta^{\otimes n}, \E[f_\eta^{\otimes n}] \otimes f_0^{\otimes n}) \\
    &\leq 2 \TV(\E f_\eta^{\otimes n}, f_0^{\otimes n}).
\end{align*}
Just as in \Cref{sec:mn lower} we upper bound by the $\chi^2$-divergence to get
\begin{align*}
    \chi^2(\E f_\eta^{\otimes n} \| f_0^{\otimes n}) &= -1 + \E_{\eta\eta'} \int \prod\limits_{i=1}^n (f_\eta(x_i)f_{\eta'}(x_i)) \mu(\D x_1) \dots \mu(\D x_n) \\
    &\leq -1 + \E \exp(n\epsilon^2\sum_{j\geq2}\rho_j^2\eta_j\eta'_j) \\
    &= -1 + \prod\limits_{j\geq2} \cosh(n\epsilon^2\rho_j^2) \\
    &\leq -1 + \exp(n^2\epsilon^4 \sum_{j\geq2} \rho_j^4) \\
    &= -1 + \exp(n^2\epsilon^4/\|\lambda\|_{2,J}^2). 
\end{align*}
Again, by \Cref{lem:f-div ineqs} we obtain
\begin{equation*}
    1-\TV(\E P_0, \E P_1) \gtrsim \frac{1}{\chi^2(\E P_0 \| \E P_1)-1} \geq \exp(-n^2\epsilon^4/\|\lambda\|_{2,J}^2) \stackrel{!}{=} \Omega(\alpha). 
\end{equation*}
The lower bound $n\gtrsim\sqrt{\log(1/\alpha)}\|\lambda\|_{2,J}/\epsilon^2$ now follows readily.

\subsubsection{Lower Bound on \texorpdfstring{$m \cdot n$}{m*n}}
\label{sec:mn lower}
We take a uniform prior on $\eta$ and consider the random measures
\begin{equation}
    P_0 = f_0^{\otimes n} \otimes f_\eta^{\otimes n} \otimes ((1-\delta)f_0+\delta f_\eta)^{\otimes m} \qquad\text{and}\qquad P_1 = f_0^{\otimes n} \otimes f_\eta^{\otimes n} \otimes f_0^{\otimes m}. 
\end{equation}
Our goal is to apply \Cref{lem:lower bound main} to $P_0,P_1$. Notice that $(1-\delta)f_0+\delta f_\eta = 1 + \delta\epsilon g_\eta$. Let us write $X,Y,Z$ for the marginals first $n$, second $n$ and last $m$ coordinates of $P_0$ and $P_1$. By the data processing inequality and the chain rule \Cref{lem:chi2 chain} we have
\begin{align*}
    \chi^2(\E P_0 \| \E P_1) &= \chi^2((\E P_0)_{Y,Z} \| (\E P_1)_{Y,Z}) \\
    &= \chi^2((\E P_0)_{Z|Y} \| (\E P_1)_{Z|Y} | (\E P_0)_Y) \\
    &= \E \chi^2\left(\E\left[\left.(1+\delta\epsilon g_\eta)^{\otimes m}\right| Y\right] \| f_0^{\otimes m}\right) =: (\dagger).
\end{align*}
Notice that the expectation inside the $\chi^2$-divergence is with respect to $\eta$ given the variables $Y$, or in other words, over the posterior of $\eta$ with uniform prior given $n$ observations from the density $1+\epsilon g_\eta=f_\eta$. The outer expectation is over $Y$. Given $Y$, let $\eta$ and $\eta'$ be i.i.d. from said posterior. We get the bound
\begin{align*}
(\dagger) +1 &\leq \E \int \prod\limits_{i=1}^m (1+\delta\epsilon g_\eta(x_i))(1+\delta\epsilon g_{\eta'}(x_i)) \mu(\D x_i) \\
&= \E (1+\delta^2\epsilon^2 \sum_{j\geq2} \rho_j^2 \eta_j\eta'_j)^m \\
&\leq \E \exp(\delta^2\epsilon^2 m \sum_{j\geq2} \rho_j^2\eta_j\eta'_j). 
\end{align*}
Define the collections of variables $\eta_{-j} = \{\eta_j\}_{j\geq2}\setminus\{\eta_j\}$ and $\eta'_{-j}$ similarly. We shall prove the following claim:
\begin{equation}\label{eqn:lower peeling claim}
\E\left[\left.\exp(\delta^2\epsilon^2 m \rho_j^2 \eta_j\eta'_j) \right| \eta_{-j}\eta'_{-j}\right] \leq \exp(c\delta^2\epsilon^4 (\delta^2 m^2+m n) \rho_j^4)
\end{equation}
for some universal constant $c>0$. Assuming that \eqref{eqn:lower peeling claim} holds, by induction we can show that 
\begin{align*}
    (\dagger)+1 &\leq \exp(c\delta^2(\delta^2m^2+mn)\epsilon^4\sum_{j\geq2}\rho_j^4) \\
    &= \exp(c\delta^2(\delta^2m^2+mn)\epsilon^4/\|\lambda\|_{2,J}^2).
\end{align*}
Thus, if $mn +\delta^2m^2= o\left(\|\lambda\|_{2,J}^2/(\delta^2\epsilon^4)\right)$ then testing is impossible. 

We now prove \eqref{eqn:lower peeling claim}. Since the variable $\eta'_j\eta'_j$ is either $1$ or $-1$, we have
\begin{align*}
\E\left[\left.\exp(\delta^2\epsilon^2 m \rho_j^2 \eta_j\eta'_j) \right| \eta_{-j}\eta'_{-j}\right] &= (e^{\delta^2 \epsilon^2 m \rho_j^2}-e^{-\delta^2 \epsilon^2 m \rho_j^2})\cdot \bb P(\eta_j\eta'_j=1|\eta_{-j}\eta'_{-j}) + e^{-\delta^2 \epsilon^2 m \rho_j^2}.
\end{align*}
Let us write $\eta_{\pm1,j}$ for the vector of signs equal to $\eta$ but whose $j$'th coordinate is $\pm1$ respectively. Looking at the probability above, and using the independence of $\eta,\eta'$ given $Y$, we have
\begin{align*}
    \bb P(\eta_j\eta'_j = 1 | Y,\eta_{-j},\eta'_{-j}) &= \bb P(\eta_j=1|Y,\eta_{-j})^2 + \bb P(\eta_j=-1|Y,\eta_{-j})^2 \\
    &= \frac14\frac{(f_{\eta_{1j}}^{\otimes n}(Y))^2 + (f^{\otimes n}_{\eta_{-1j}}(Y))^2}{\left(\frac12f^{\otimes n}_{\eta_{1j}}(Y) + \frac12f^{\otimes n}_{\eta_{-1j}}(Y)\right)^2}.
\end{align*}
Taking the expectation $\E[\,\cdot\, | \eta_{-j},\eta'_{-j}]$ and using the HM-AM inequality $(\frac12(x+y))^{-1} \leq \frac12(\frac1x+\frac1y)$ valid for all $x,y > 0$ gives
\begin{align*}
    \P(\eta_j\eta'_j=1 | \eta_{-j},\eta'_{-j}) &= \frac14 \int \frac{(\prod_{i=1}^nf_{\eta_{1j}}(x_i))^2 + (\prod_{i=1}^nf_{\eta_{-1j}}(x_i))^2}{\frac12 \prod_{i=1}^nf_{\eta_{1j}}(x_i)+\frac12 \prod_{i=1}^nf_{\eta_{-1j}}(x_i)} \mu(\D x_1) \dots \mu(\D x_n) \\
    &\leq \frac14 + \frac18 \int\left(\frac{(\prod_{i=1}^n f_{\eta_{1j}}(x_i))^2}{\prod_{i=1}^n f_{\eta_{-1j}}(x_i)} + \frac{(\prod_{i=1}^n f_{\eta_{-1j}}(x_i))^2}{\prod_{i=1}^n f_{\eta_{1j}}(x_i)}\right) \mu(\D x_1) \dots \mu(\D x_n) = (\star). 
\end{align*}
Note that $f_{\eta_{1j}} = f_{\eta_{-1j}} + 2\epsilon\rho_je_j$. Using the lower bound $f_{\eta_{\pm1j}}(x) \geq \frac12$ for all $x \in \cal X$ and the inequality $1+x\leq\exp(x)$, we get
\begin{align*}
    (\star) &\leq \frac14 + \frac18
    \left[\left(1 + \int\frac{4\epsilon^2\rho_j^2 e_j^2(x)}{f_{\eta_{-1j}}(x)} \mu(\D x)\right)^n + \left(1 + \int \frac{4\epsilon^2\rho_j^2 e_j^2(x)}{f_{\eta_{1j}}(x)} \mu(\D x)\right)^n\right] \\
    &\leq \frac14(1+e^{8\epsilon^2 n\rho_j^2}). 
\end{align*}
Recall that $(\star)$ is a probability so $(\star)\leq 1$, and we obtain
\begin{align*}
    (\star)\leq \frac14(1+e^{8\epsilon^2 n\rho_j^2 \wedge \ln3}). 
\end{align*}

Putting it together and applying \Cref{lem:local ineqs} we get
\begin{align*}
    \text{LHS of }\eqref{eqn:lower peeling claim} &\leq (e^{\delta^2\epsilon^2m\rho_j^2}-e^{-\delta^2\epsilon^2m\rho_j^2})\frac14(1+e^{8\epsilon^2n\rho_j^2 \wedge \ln3}) + e^{-\delta^2\epsilon^2m\rho_j^2} \\
    &\leq 
    e^{c\delta^2\epsilon^4\rho_j^4(\delta^2 m^2+mn)}
\end{align*}
for universal $c=16>0$. Thus, by \Cref{lem:f-div ineqs} we obtain
\begin{align*}
    1-\TV(\E P_0, \E P_1) \gtrsim \frac{1}{\chi^2(\E P_0, \E P_1)+1} \geq \exp(-c\delta^2\epsilon^4(\delta^2 m^2+mn)/\|\lambda\|_{2,J}^2) \stackrel{!}{=} \Omega(\alpha).
\end{align*}
The necessity of 
$$mn + \delta^2m^2 \gtrsim \frac{\log(1/\alpha) \|\lambda\|_{2,J}^2}{\delta^2\epsilon^4}$$
follows immediately.\footnote{We have $mn + m^2 \leq (\sqrt{mn} + m)^2 \leq 2(mn + m^2)$, so $\sqrt{mn} + m\asymp\sqrt{mn+m^2}$.}

\begin{lemma}\label{lem:local ineqs}
    For $a, b \geq 0$, the following inequality holds:
    \begin{equation*}
        \frac14(e^a-e^{-a})(1+e^{b \wedge \ln3})+e^{-a}\leq e^{2(ab+a^2)}.
\end{equation*}
\end{lemma}
\begin{proof}
    If $b\geq\ln 3$ or $a\geq 1$ we have: 
    \begin{align*}
        \text{LHS}\leq \frac14(e^a-e^{-a})(1+e^{\ln3})+e^{-a}
        = e^a
        \leq
        e^{\frac{b}{\ln3}a+a^2}.
    \end{align*}
    If $b<\ln 3$ and $a<1$, we have 
    $$e^b\leq 1+\frac{2}{\ln3}b\leq 1+2b,\quad \frac{e^a+e^{-a}}{2}\leq e^{a^2},\quad \frac{e^a-e^{-a}}{2}\leq \frac{e-e^{-1}}{2}a\leq 2a,$$ 
    and then
    \begin{align*}
        \frac14(e^a-e^{-a})(1+e^{b})+e^{-a}
        &= \frac{1}{2}(e^a+e^{-a})+\frac{e^b-1}{4}(e^a-e^{-a})\\&\leq e^{a^2}+2ab
        \\&\leq e^{a^2}(1+2ab)\\
        &\leq e^{a^2+2ab}
    \end{align*}
    The result follows from $\ln3>1$. 
\end{proof}

\section{Proofs From \Cref{sec:learning kernels}}

\subsection{Computing $\hat{\sigma}$}\label{sec:sigma defn}
We follow the implementation of $\widehat{\sigma}^2$ in \cite{liu2020learning}.
Given $X_1,\dots,X^{\sf{tr}}_{n_{\sf{tr}}}$ sampled from $P_X$ and $Y_1,\dots,Y^{\sf{tr}}_{n_{\sf{tr}}}$ sampled from $P_Y$, denote 
\begin{equation}
    H_{ij} :=
    K(X^{\sf{tr}}_i, X^{\sf{tr}}_j)
    + K(Y^{\sf{tr}}_i, Y^{\sf{tr}}_j)
    - K(X^{\sf{tr}}_i, Y^{\sf{tr}}_j)
    - K(Y^{\sf{tr}}_i, X^{\sf{tr}}_j),\quad i,j\in[n_{\sf{tr}}].
\end{equation}
Then $\widehat{\sigma}^2$ is computed via
\begin{equation} \label{eq:estimate_sigma_H1}
    \hat{\sigma}^2(X^{n_{\sf{tr}}},Y^{n_{\sf{tr}}};K)=\frac{4}{n_{\sf{tr}}^3} \sum_{i=1}^{n_{\sf{tr}}} \left( \sum_{j=1}^{n_{\sf{tr}}} H_{ij} \right)^2
    - \frac{4}{n_{\sf{tr}}^4}\left( \sum_{i=1}^{n_{\sf{tr}}} \sum_{j=1}^{n_{\sf{tr}}} H_{ij} \right)^2.
\end{equation}
Note that $\hat{\sigma}^2$ being non-negative follows from the AM-GM inequality.

\subsection{Heuristic Justification of the Objective \texorpdfstring{\eqref{eq:J}}{} }\label{sec:training obj justification}
As usual, let $X,Y,Z$ denotes samples of sizes $n,n,m$ from $P_X,P_Y,P_Z$ respectively. Let us give a heuristic justification for using the training objective defined in \eqref{eq:J} for the purpose of obtaining a kernel for $\LF/\mLF$. Note that originally it was proposed as a training objective for kernels to be used in two sample testing. Recall that our test for $\LF$ can be written as
\begin{align*}
    \Psi_{1/2}(X,Y,Z) 
    = \one\Big\{T_{\sf{LF}}
    \geq0\Big\} 
\end{align*}
where
\begin{align*}
T_{\sf{LF}}&=\mmd_u^2(\widehat{P}_Z,\widehat{P}_Y; K) - \mmd_u^2(\widehat{P}_Z,\widehat{P}_X; K),
\end{align*}
Heuristically, to maximize the power of (\ref{eqn:lfht mix def}), we would like to maximize the following population quantity
\begin{equation*}
    J_{\sf{LF}} \eqdef
    \frac{\E_0[T_{\sf{LF}}] - \E_1[T_{\sf{LF}}]}{\sqrt{\var_0(T_{\sf{LF}})}}
\end{equation*}
where 
\begin{align*}
    \E_0[T_{\sf{LF}}]&= \E_{X,Y,Z}[T_\sf{LF}|{P_Z=P_X}] = + \mmd^2(P_X, P_Y; K) ,
    \\
    \E_1[T_{\sf{LF}}]&= \E_{X,Y,Z}[T_\sf{LF}|{P_Z=P_Y}] = -\mmd^2(P_X, P_Y; K).
\end{align*}
Let $T_\sf{TS} = \mmd_u(\hat P_X, \hat P_Y)$ be the usual statistic that is thresholded for two-sample testing. Then, a computation analogous to that in Section \ref{ssec:mean var comp} show (cf. \eqref{eq:var=1/n2+1/mn}) that 
\begin{align*}
    \var_0(T_{\sf{LF}}) &\approx \frac{A(K,P_X,P_Y)}{n} +\frac{A(K,P_X,P_Y)}{m} +\frac{B(K,P_X,P_Y)}{n^2} +\frac{B(K,P_X,P_Y)}{mn},
    \\
    \var_0(T_{\sf{TS}}) &\approx \frac{A(K,P_X,P_Y)}{n} +\frac{B(K,P_X,P_Y)}{n^2}
\end{align*}
for some $A(K)$ and $B(K)$. Therefore, we have approximately
\begin{equation*}
    J_{\sf{LF}} \approx \frac{2 \mmd^2(P_X, P_Y; K)}{\sqrt{1+\frac{n}{m}}\sqrt{\var_0(T_{\sf{TS}})}} 
    \approx 
    2\sqrt{\frac{m}{m+n}}\widehat{J}(X,Y;K)
\end{equation*}
which only differs from our optimization objective defined in \eqref{eq:J} by a constant factor.

Second, notice that $\frac{\mmd(P_X, P_Y; K)}{\sqrt{\var(T_{\sf{TS}})}}$ depends only on $P_X-P_Y$ and that $((1-\delta)P_X+\delta P_Y)-P_X\propto P_Y-P_X$, therefore it is sensible to use \eqref{eq:J} as our training objective for is also sensible for \eqref{eqn:lfht mix def}, and we don't even need to observe the sample $Z$.

\subsection{Proof of \texorpdfstring{\cref{prop: consistent}}{Consistency of Estimating p-value}}
\begin{proof}
    In this proof we regard $\cal D \eqdef (X^\sf{tr},X^\sf{ev},Y^\sf{tr},Y^\sf{ev})$ and the parameters of the kernel $\omega$ as fixed. Recall that we are looking at the problem $\mLF$ with a misspecification parameter $R=0$ (see \Cref{thm: MMDUpper}). Given a test set $\{z_i\}_{i \in [m]}$, our test statistic is $T(\{z_i\}_{i\in[m]}) = \frac{1}{m}\sum_{i=1}^m f(z_i)$ where $$f(z_i) = \frac{1}{n_\sf{ev}}\sum_{j=1}^{n_\sf{ev}} \Big(K_\omega(z_i, Y_j^\sf{ev}) - K_\omega(z_i, X_j^\sf{ev})\Big).$$ In Phase 3 of \Cref{alg:learn_deep_kernel}, we observe the value $\widehat T = T(Z) = \frac1m\sum_{i=1}^m f(Z_i)$ and reject the null hypothesis for large values of $\widehat T$. Thus, the $p$-value is defined as 
    \begin{align*}
        p = p(Z,\cal D) \eqdef \P_{\widetilde Z \sim P_X^{\otimes m}}(T(\widetilde Z) > \widehat T). 
    \end{align*}
    Phase 2 of our \Cref{alg:learn_deep_kernel} produces random variables $T_1,\dots,T_k$ that all have the distribution of $T(\{\widetilde Z_i\}_{i \in [m]})$, so that $\one\{T_r \geq \widehat{T}\}$ ($r=1,\dots,k$) are unbiased estimates of the $p$-value. However, the $T_i$ are not independent, because they sample from the finite collection of calibration samples $X^\sf{cal}$. However, as $n_\sf{cal}\to\infty$ the covariances between $T_{r_1},T_{r_2}$ for $r_1\neq r_2$ tend to zero, and we obtain a consistent estimate of $p$. \end{proof}

    
    

\subsection{Proof of \texorpdfstring{\Cref{prop:exists_kernel}}{Equivalence}}
\begin{proof}
    
The test statistic $T(X,Y,Z)$ in \eqref{eq: teststats} is given by
\begin{equation*}
    T(X,Y,Z) = \frac1m \sum_{i=1}^m
 f_K(Z_i)
 \end{equation*}
where
\begin{equation*}
f_K(z) = \theta_{\widehat{P}_Y}(z) -\theta_{\widehat{P}_X}(z).
\end{equation*}
This simplifies to (consider $K(x, y)=f(x)f(y)$)
\begin{equation*}        
    f_K(z) = \left(\frac1n\sum_{j=1}^n  f(Y_j)   -\frac1n\sum_{j=1}^n  f(X_j)
       \right)f(z)=C(X, Y)f(z).
\end{equation*}where $C(X, Y)$ does not depend on $z$. Therefore, for any witness function $f$, we obtain the desired additive test. 
\end{proof}

\subsection{Additive Test Statistics}\label{sec: benchmarks}
In this section we prove accordingly that the test statistics of all of \textbf{MMD-M/G/O}, \textbf{SCHE}, \textbf{LBI}, \textbf{UME}, \textbf{RFM} are of the form $T_f(Z)=\frac1m\sum_{i=1}^mf(Z_i)$ (where $f$ might depends on $X,Y$). The test is to compare $T_f(Z)$ with some threshold $\gamma(X,Y)$.

Note that in the setting of \Cref{alg:learn_deep_kernel}, the $X$ and $Y$ here correspond to $X^{\sf{ev}}$ and $Y^{\sf{ev}}$.

\textbf{MMD-M/G/O}\quad As described in \eqref{eq: teststats} we have
\begin{align*}
    T_f(Z)=\frac1m\sum_{i=1}^m \left( \frac{1}{n} \sum_{j=1}^n \left( K(Z_i,Y_j)-K(Z_i,X_j) \right) \right).
\end{align*}

\textbf{SCHE}\quad As described in \Cref{sec: Classifier} we have
\begin{align*}
    T_f(Z)=\frac1m\sum_{i=1}^m \one\{\phi(Z_i)>t\}.
\end{align*}

\textbf{LBI}\quad As described in \Cref{sec: Classifier} we have
\begin{align*}
    T_f(Z)=\frac1m\sum_{i=1}^m \log\left(\frac{\phi(Z_i)}{1-\phi(Z_i)}\right).
\end{align*}

\textbf{UME}\quad As described in \cite{jitkrittum2018informative}, the UME statistic evaluates the squared witness function at ${J_q}$ test locations $W=\{w_k\}_{k=1}^{J_q}\subset\mc{X}$. Formally for any two distributions $P,Q$ we define
\begin{align*}
    U^2(P,Q) = \norm{\theta_Q-\theta_P}_{L^2(W)}^2 = \frac{1}{{J_q}} \sum_{k=1}^{J_q} (\theta_Q(w_k)-\theta_P(w_k))^2.
\end{align*}
However, we note a crucial difference that their result only considers the case of $n=m$, and their proposed estimator for $U^2(P_Z,P_X)$ can not be naturally extended to the case of $n\neq m$. Here we generalize it to $m\neq n$ where we (conveniently) use a biased estimate of their distance. 
 Given samples $X,Y,Z$ and a set of witness locations $W$, the test statistic is a (biased yet) consistent estimator of $U^2(P_Z, P_Y)-U^2(P_Z, P_X)$. Let $\psi_W(z)=\frac{1}{\sqrt{J_q}}(K(z,w_1),\dots,K(z,w_{J_q}))\in \mathbb{R}^{|W|}$ be the ``feature function,'' then:
\begin{align*}
    \widehat{U}^2(Z,X) &= 
    \left\|\frac{1}{m}\sum_{i=1}^{m}\psi_W\left(Z_i\right) -\frac{1}{n}\sum_{j=1}^{n}\psi_W\left(X_i\right)\right\|^2_2
    \\
    &=
    \left\|\frac{1}{m}\sum_{i=1}^m\psi_W\left(Z_i\right)\right\|_2^2
    +\left\|\frac{1}{n}\sum_{j=1}^n\psi_W\left(X_i\right)\right\|_2^2
    -\frac{2}{mn}\sum_{1\leq i\leq m, 1\leq j\leq n}\langle\psi_W\left(Z_i\right), \psi_W\left(X_j\right)\rangle
\end{align*}
Here $\braket{\cdot,\cdot}$ denotes the usual inner product.
Therefore, the difference between distances is
\begin{align*}
    \widehat{U}^2(Z,Y)-\widehat{U}^2(Z,X)
    =
    \frac1m\sum_{i=1}^m \left\langle  \psi_W\left(Z_i\right),
    \frac2n\sum_{j=1}^n (\psi_W\left(X_j\right)-\psi_W\left(Y_j) \right)  \right\rangle
    + F(X, Y)
\end{align*}
where $F$ is sum function based only on $X, Y$. This is clearly an additive statistic for $Z$.

\textbf{RFM}\quad Algorithm 1 in \cite{radhakrishnan2022feature} describes a method for learning a kernel from data given a binary classification task. For convenience lets concatenate the data to $X^{\text{RFM}}=(X,Y)\in\R^{2n\times d}$ and labels $y^{\text{RFM}}=(\vec{0}_n,\vec{1}_n)\in\R^{1\times 2n}$. Given a learned kernel $K$, we write the Gram matrix as $(K(X^{\text{RFM}},X^{\text{RFM}}))_{i,j}=K(X^{\text{RFM}}_i,X^{\text{RFM}}_j)$ ($1\leq i,j\leq 2n$). Let $K(X^{\text{RFM}},z)$ be a column vector with components $K(X^{\text{RFM}}_i,z)$ ($1\leq i\leq 2n$). The classifier is then defined as 
\begin{align}\label{eq:RFM classifier}
    f^{\text{RFM}}(z) = y^{\text{RFM}} \cdot K(X^{\text{RFM}},X^{\text{RFM}})^{-1} \cdot K(X^{\text{RFM}},z). 
\end{align}
Though in \cite{radhakrishnan2022feature} the kernel learned from RFM is used to construct a classifier as in \Cref{eq:RFM classifier}, since RFM is a feature learning method, we also apply the RFM kernel to our MMD test, namely 
\begin{align*}
    f^{\text{RFM to MMD}}(z) = \frac{1}{n} \sum_{j=1}^n \left( K(z,Y_j)-K(z,X_j) \right). 
\end{align*}


\section{Application: Diffusion Models vs CIFAR}\label{sec: CIFAR}
We defer a more fine-grained detail to our code submission, which includes executable programs (with PyTorch) once the  data-generating script from DDPM has been run (see README in the ./codes/CIFAR folder). 

\subsection{Dataset Details}
We use the CIFAR-10 dataset available online at \url{https://www.cs.toronto.edu/~kriz/cifar.html}, which contains 50000 colored images of size $32\times 32$ with 10 classes. For the diffusion generated images, we use the SOTA Hugging Face model (DDPM) that can be found at \url{https://huggingface.co/google/ddpm-CIFAR-10-32}. We generated 10000 artificial images for our experiments. The code can be found at our code supplements.

For dataset balancing, we randomly shuffled the CIFAR-10 dataset and used 10000 images as data in our code. Most of our experiments are conducted with the null $P_X$ as CIFAR images, and the alternate as $P_Y=\frac{2}{3}\cdot\text{CIFAR}+\frac{1}{3}\cdot\text{DDPM}$. To this end, we matched 20000 images from CIFAR to belong to the alternate hypothesis, and the remaining 30000 images to stay in the null hypothesis. For the alternate dataset, we simply sample without replacement from the $20000+10000$ mixture. This sampled distribution is \emph{almost} the same as mixing (so long as the sample bank is large enough compared to the acquired data, so that each item in the alternate has close to $1/3$ probability of being in DDPM, 
which is indeed the case). 
\begin{figure}
     \centering
     \includegraphics[scale=0.9]{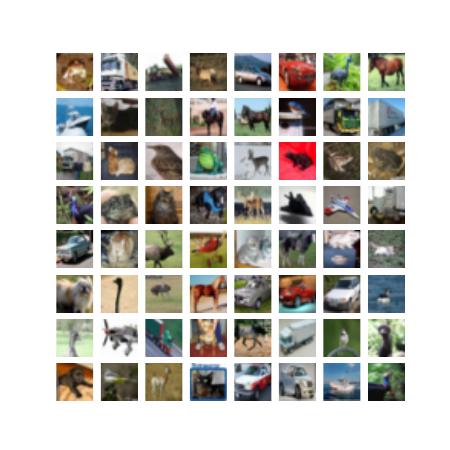}
     \includegraphics[scale=0.9]{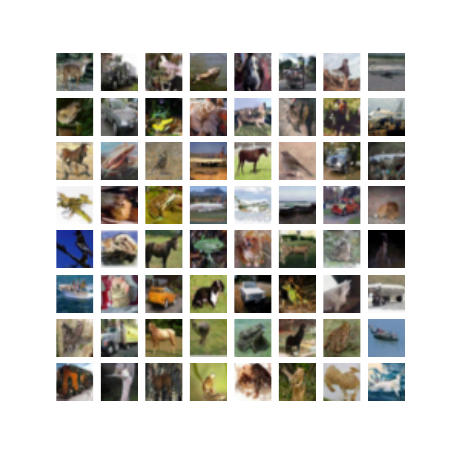}
     \caption{Data visualization for CIFAR-10 (left) vs DDPM diffusion generated images (right)} \end{figure}
 
\subsection{Experiment Setup and Benchmarks}
We use a standard deep Conv-net \cite{lopez2016revisiting}, which has been employed for SOTA GAN discriminator tasks in similar settings. It has four convolutional layers and one fully connected layer outputting the feature space of size $(300, 1)$. For SCHE and LBI, we simply added a linear layer of $(300, 2)$ after applying ReLU to the 300-dimensional layer and used the cross-entropy loss to train the network. Note that this is equivalent to first fixing the feature space and then performing logistic regression to the feature space. For kernels, we add extra trainable parameters after the $300$-d feature output.

For the $\mmd$-based  tests, we simply train the kernel on the neural net and evaluate our objective. For UME, we used a slightly generalized version of the original statistic in \cite{jitkrittum2018informative} which allows for comparison on randomly selected witness locations in the null hypothesis with $m\neq n$ (see \Cref{sec: benchmarks}). The kernel is trained using our heuristic (see \eqref{eq:J} and \Cref{sec:training obj justification}), with MMD replaced by UME. The formula for UME variance can be found in \cite{jitkrittum2018informative}.
For RFM, we use Algorithm 1 in \cite{radhakrishnan2022feature} to learn a kernel on (stochastic batched) samples, and then use our MMD test on the trained kernel.

We use 80 training epochs for most of our code from the CNN architecture (for classifiers, this is well after interpolating the training data and roughly when validation loss stops decreasing), and a batch size of 32 which has a slight empirical benefit compared to larger batch sizes. The learning rates are tuned separately in MMD methods for optimality, whereas for classifiers they follow the discriminator's original setting from \cite{lopez2016revisiting}. In Phase 2 of Algorithm 1, we choose $k=1000$ for the desired precision while not compromising runtime. For each task, we run $10$ independent models and report their performances as the mean and standard deviation of those $10$ runs as estimates. We refer to a full set of hyper-parameters in our code implementation. 

Our code is implemented in Python 3.7 (PyTorch 1.1) and was ran on an NVIDIA RTX 3080 GPU equipped with a standard torch library and dataset extensions. Our code setup for feature extraction is similar to that of \cite{liu2020learning}. For benchmark implementations, our code follows from the original code templated provided by the cited papers.

\begin{figure}
    \centering
   \includegraphics[scale=0.45]{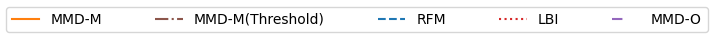}
    \includegraphics[scale=0.35]{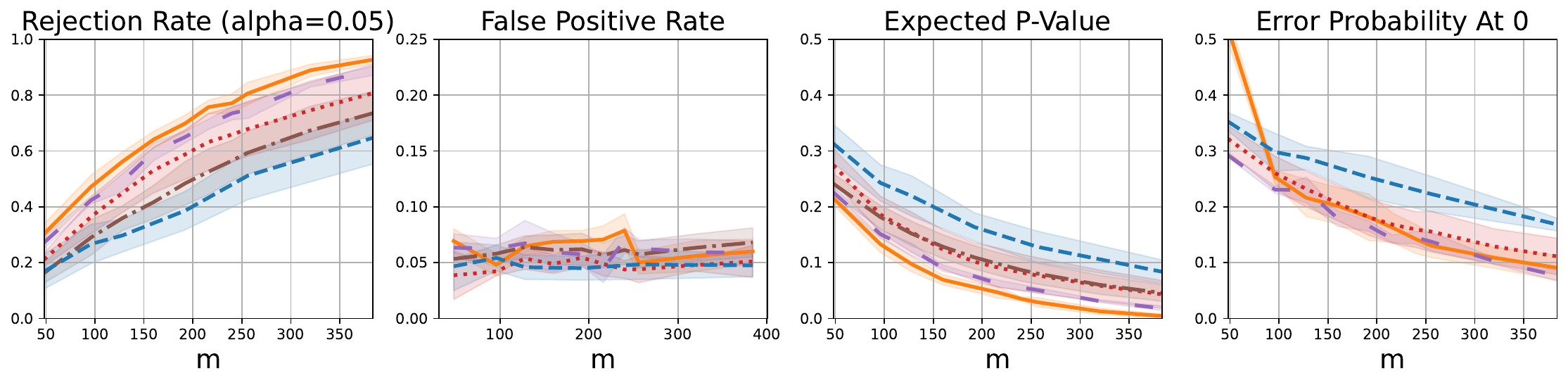}
    \caption{Relevant plots following the setting in \Cref{fig:cifar main} (in the main text) of fixing $n_{\sf{tr}}=1920$ and varying sample size $m$ in the x-axis for the comparison with missing benchmarks. Errorbars are projected showing standard deviation across 10 runs. We replaced part (d) in \Cref{fig:cifar main} (in the main text) to a sanity check in our FPR when thresholded at $\alpha=0.05$.} \label{fig: cifar_sup}
\end{figure}

\subsection{Sample Allocation}\label{appendix:sample allocation}
We make a comment on why \eqref{eq: test} is \emph{different} from just thresholding $\widehat {\mmd^2}(Z,Y^{\sf{tr}})-\widehat{\mmd^2}(Z,X^{\sf{tr}})$ at 0, which was what we did in part (c) of \Cref{fig:cifar main} (and hence the difference along the curve of MMD-M vs Figure \ref{fig:trade off}). Our theory assumes that the samples are i.i.d. conditioned on the kernel being chosen already. However, in the experiments, the kernel is dependent on the training data. Therefore, to evaluate the MMD estimate (between experimentations), one needs extra data that does not intersect with training.

In fact, it can be experimentally shown by comparing Figure \ref{fig:trade off} and Figure 2(c) that doing so (while reducing the sample complexity on 
$n_{\sf{ev}}$) hurts performance. Indeed, we found out that when $X^{\sf{ev}},Y^{\sf{ev}}$ are non-intersecting with training, performance is (almost) always better at a cost of hurting the overall sample complexity of $n$.
\subsection{Remarks on Results}
\Cref{fig: cifar_sup} lists all of our benchmarks in the setting of \Cref{fig:cifar main} (in the main text) on missing benchmarks, where the last figure is replaced by the false positive rate at thresholding at $\alpha=0.05$ to verify our results. As mentioned in the main text, our MMD-M method consistently outperforms other benchmarks on both the expected $p$-value (of alternate) and rejection rate at $\alpha=0.05$, while all of our tests observe a empirical false positive rate close to $\alpha=0.05\%$ (Part (b)), showing the consistency of methods.

\section{Application: Higgs-Boson Detection}
\label{appendix:higggs}
\subsection{Dataset Details}
We use the Higgs dataset available online at \url{http://archive.ics.uci.edu/ml/datasets/HIGGS}, produced using Monte Carlo simulations \cite{baldi2014searching}. The dataset is nearly balanced, containing $5,829,122$ signal instances and $5,170,877$ background instances. Each instance is a $28$-dimensional vector, consisting of $28$ features. The first $21$ features are  kinematic properties measured by the detectors in the accelerator, such as momentum and energy. The last $7$ properties are \emph{invariant masses}, derived from the first $21$ features.

\begin{figure}[ht]
    \centering
    \includegraphics[width=.7\textwidth]{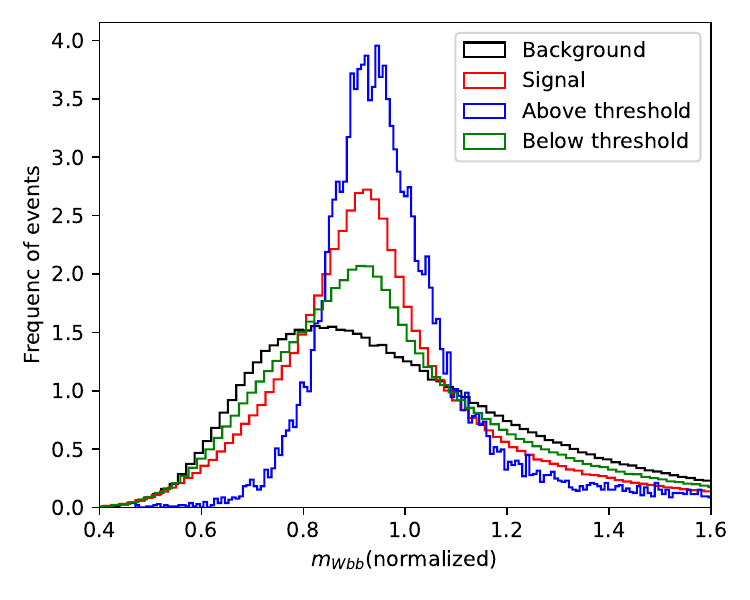}
    \caption{This figure visualizes the distribution of the $26$th feature, the invariant mass $m_{Wbb}$. The red and black lines are the histograms of the original dataset. We employ MMD-M as a classifier, trained and evaluated using $n_{\sf{tr}}=1.3\times 10^6$ and $n_{\sf{ev}}=n_{\sf{opt}}=2\times 10^4$ through \Cref{algo:higgs_thres}. The blue(green) line represents all instances $z$'s whose ``witness scores'' $f(z;X^{\sf{ev}},Y^{\sf{ev}})$'s are larger(smaller) than $t_{\sf{opt}}$.}
    \label{fig:invariant_mass}
\end{figure}

\subsection{Experiment Setup and Training Models}

The modified \Cref{alg:learn_deep_kernel} is shown in \Cref{algo:higgs_orig} and \Cref{algo:higgs_thres}. Compared with \Cref{algo:higgs_orig}, we implement the thresholding trick (\Cref{sec:threshold trick}) in \Cref{algo:higgs_thres}.

\subsubsection{Configuration and Model Architecture}
We implement all methods in Python 3.9 and PyTorch 1.13 and run them on an NVIDIA Quadro RTX 8000 GPU. 

For all classifier-based methods in this study ({SCHE} and {LBI}), we adopt the same architecture as previously proposed in \cite{baldi2014searching}. The classifiers are six-layer neural networks with 300 hidden units in each layer, all employing the tanh activation function. For {SCHE}, the output layer is a single sigmoid unit and we utilize the binary cross-entropy loss for training. For {LBI}, the output layer is a linear unit and we utilize the binary cross entropy loss combined with a logit function (which is more numerically stable than simply using a sigmoid layer followed by a cross entropy loss).

For all MMD-based methods ({MMD-M}, {MMD-G}, {MMD-O}, and {UME}), the networks $\varphi$ and $\varphi'$ are both six-layer neural networks with 300 ReLU units in each layer. The feature space, which is the output of the neural network $\varphi$, is set to be $100$-dimensional. Here {UME} has the same kernel architecture as {MMD-M}, and the number of test locations is set to be $J_q=4096$. For {RFM}, we adopt the same architecture as in \cite{radhakrishnan2022feature}, where the kernel is $K_M(x,y)=\exp(-\gamma(x-y)^TM(x-y))$ with a constant $\gamma$ and a learnable positive semi-definite matrix $M$. We set $\gamma\equiv1$.

The neural networks are initialized using the default setting in PyTorch, and the bandwidths $\sigma,\sigma'$ are initialized using the \emph{median heuristic} \citep{gretton2012optimal}. The parameter $\tau$ is initially set to $0.5$. For UME, the witness locations $W$ are initially randomly sampled from the training set. For RFM, the initial $M$ equals the median bandwidth times an identity matrix.

\subsubsection{Training}

The size of our training set, denoted as $n_{\sf{tr}}$, varies from $1.0\times 10^2$ to $1.6\times 10^6$. For a given $n_{\sf{tr}}$, we select the first $n_{\sf{tr}}$ datapoints from each class of the Higgs dataset to form $X^\sf{tr}$ and $Y^\sf{tr}$, i.e., $|X^\sf{tr}|=|Y^\sf{tr}|=n_{\sf{tr}}$. Subsequently, we randomly select $n_{\sf{validation}}=\min(\sqrt{10 n_{\sf{tr}}},0.1 n_{\sf{tr}})$ points from each of $X_{\sf{tr}},Y_{\sf{tr}}$ to constitute the validation set, while the remainder of $X_{\sf{tr}},Y_{\sf{tr}}$ are used for running gradient descent. The optimizer is set to be a minibatch SGD, with a batch size of $1024$, a learning rate of $0.001$, and a momentum of $0.99$. Training is halted once the validation loss stops to decrease for $10$ epochs, then we choose the checkpoint (saved for each epoch) with the smallest validation loss thus far as our trained model. Beyond the general setting above, in RFM a batch size of $1024$ doesn't work well and instead we use a batch size of $20,000$.

\subsection{Evaluating the Performance}

\subsubsection{Evaluating the p-Value with the Methodology of \Cref{alg:learn_deep_kernel}}
We call the ``witness score'' of an instance $z\in\mc{X}$ as
\begin{equation}
\label{eq: appendix_score}
    f(z;X^\sf{ev},Y^\sf{ev})= \frac{1}{n_\sf{cal}} \sum_{i=1}^{n_{\sf{cal}}} \left( k(z,Y_i^\sf{ev})-k(z,X_i^\sf{ev})\right).
\end{equation}
For a vector of instances $Z=(Z_1,\ldots,Z_m)$, we write $$f(Z;X^\sf{ev},Y^\sf{ev})=(f(Z_1;X^\sf{ev},Y^\sf{ev}),\ldots,f(Z_m;X^\sf{ev},Y^\sf{ev})).$$
The testing procedure is summarized in Phases 2, 3 and 4 in  \Cref{algo:higgs_orig} and  \Cref{algo:higgs_thres}. In the Higgs experiment, we utilize the Gaussian approximation method to determine the p-values when the witness function $f$ is not thresholded, which allows us to reach very small p-values and errors under limited computational resource. In cases where the score function $f$ is thresholded by a value $t$, using the Binomial distribution as in \Cref{algo:higgs_thres} is more precise and also fast enough. 

Given a trained kernel $K$ trained on $X^{\sf{tr}}$ and $Y^{\sf{tr}}$, we set $X^\sf{ev}=X^\sf{tr}$ and $Y^\sf{ev}=Y^\sf{tr}$, and accordingly $n_{\sf{ev}}=n_{\sf{tr}}$. This results in a more efficient use of data (since we reuse $X^{\sf{tr}},Y^{\sf{tr}}$ also as $X^{\sf{ev}},Y^{\sf{ev}}$). Then, out of the untouched portion of the data, we randomly choose $n_\sf{cal}=20,000$ datapoints from both classes to populate $X^\sf{cal}$ and $Y^\sf{cal}$, i.e., $|X^\sf{cal}|=|Y^\sf{cal}|=n_\sf{cal}=20,000$.
In addition to the general setting above, for RFM, we need to solve a $2n_{\sf{ev}}$-dimensional linear equation during inference, which arises from the inverse matrix in \Cref{eq:RFM classifier} (solving $K(X^{\text{RFM}},X^{\text{RFM}}) \boldsymbol{u} =(y^{\text{RFM}})^T$ for $\boldsymbol{u}\in\R^{2n_{\sf{ev}}}$). So we set $n_{\sf{ev}}=\min(n_{\sf{tr}},10,000)$ that $X_{\sf{ev}},Y_{\sf{ev}}$ are randomly sampled from the training set.

In order to compare different benchmarks, we evaluate the expected significance of discovery on a mixture of 1000 backgrounds and 100 signals. For each benchmark and each $n_{\sf{tr}}$, we train $10$ independent models. Then for each trained model we proceed through the Phases 2, 3 (and 4) in \Cref{algo:higgs_orig} and \Cref{algo:higgs_thres} by $10$ times for $10$ different $(X^{\sf{ev}},X^{\sf{cal}},X^{\sf{opt}},Y^{\sf{ev}},Y^{\sf{cal}},Y^{\sf{opt}})$.  The mean and standard deviation from these $100$ runs are reported in \Cref{fig:full_higgs_discover}.

We also display in \Cref{fig: higgs_tradeoff_sup} the trade-off b $(m,n_{\sf{ev}})$ and $(m, n_{\sf{tr}})$ to reach certain levels of significance of discovery in MMD-M. From the bottom left plot, we see that the (averaged) significance is not sensitive to $n_{\sf{ev}}$ when $\lg n_{\sf{ev}}$ is large. So taking $n_{\sf{ev}}=20,000$ is sufficient.  

\subsubsection{Evaluating the Error of the Test \eqref{eq: test}}

We set the parameters to be $\delta=0.1$ and $\pi=\frac12\delta$ in our experiments. As explained \Cref{appendix:sample allocation}, here we no longer take $X^{\sf{ev}}=X^{\sf{tr}}$. Empirically, taking $X^{\sf{ev}}=X^{\sf{tr}}$ yields a very bad threshold $\gamma(X^{\sf{ev}},Y^{\sf{ev}},\pi)$.\footnote{If the kernel $K(\cdot,\cdot)=K_{X^{\sf{tr}},Y^{\sf{tr}}}(\cdot,\cdot)$ is independent of $X^{\sf{ev}},Y^{\sf{ev}}$, then we have $\gamma(X^{\sf{ev}},Y^{\sf{ev}},\delta/2)\approx\frac12\left(\E_{Z\sim P_x}[T(X^{\sf{ev}},Y^{\sf{ev}},Z)]+\E_{Z\sim\delta P_Y+(1-\delta)P_X}[T(X^{\sf{ev}},Y^{\sf{ev}},Z)]\right)$. However this is no longer true if $(X^{\sf{tr}},Y^{\sf{tr}})$ and $(X^{\sf{ev}},Y^{\sf{ev}})$ intersect.} 
Instead, 
$X^{\sf{ev}}$ is sampled from untouched datapoints other than $X^{\sf{tr}}$, and the same applies for $Y$. We still take $n_{\sf{ev}}=n_{\sf{tr}}$ here, resulting in a total size of $n_{\sf{ev}}+n_{\sf{tr}}=2n_{\sf{tr}}$. Specifically, when $n_{\sf{ev}}\geq 10,000$, computing a $n_{\sf{ev}}\times n_{\sf{ev}}$ Gram matrix becomes computationally expensive, so we adopt Monte Carlo method to compute $\gamma(X^{\sf{ev}},Y^{\sf{ev}},\pi)$, in which we subsample $10,000$ points from $X^{\sf{ev}}$ and $Y^{\sf{ev}}$ to calculate $\gamma$ and repeat this process 100 times.

Again, we utilize the Gaussian approximation. Recall that the test is to compare $T=\frac1m\sum_{i=1}^m f(Z_i)$ with $\gamma$. The type 1 and type 2 error are estimated as $\text{CDF}_{\mc{N}(0,1)} \left( -\frac{ \gamma(X^{\sf{ev}},Y^{\sf{ev}},\pi)-\E[f|H_0] }{\sqrt{\var(f|H_0)/m}} \right)$ and $\text{CDF}_{\mc{N}(0,1)} \left(-\frac{\E[f|H_1]-\gamma(X^{\sf{ev}},Y^{\sf{ev}},\pi)}{\sqrt{\var(f|H_1)/m}}\right)$ for the witness function $f$, which can be estimated efficiently using the calibration samples $X^\sf{cal},Y^\sf{cal}$. 

We consider both the regimes of fixing kernels and varying kernels (training kernel based on $n$).  The results are shown in the top plot in Figure \ref{fig:trade off} and the top plot in \Cref{fig: higgs_tradeoff_sup}. For each point on the plot, we train 30 independent models and test each model 10 times, and report the average of these 300 runs. In both plots, we observe the asymmetric $m$ vs $n$ trade-off.
\\


\begin{breakablealgorithm}
    \caption{Estimate the significance of discovery of an input $Z_{\text{test}}$, using the original statistic}
    \begin{algorithmic}
    \label{algo:higgs_orig}
        \STATE \textbf{Input: } $(X^\sf{tr}$, $X^\sf{ev}, X^\sf{cal})$, $(Y^\sf{tr}, Y^\sf{ev}, Y^\sf{cal})$; parametrized kernel $K_\omega$; input $Z_{\text{test}}$.
        \STATE \textit{\# Phase 1: Kernel training on $X^\sf{tr}$ and $Y^\sf{tr}$\hfill}
            \STATE $\omega \gets \arg\max_\omega^{\text{optimizer}}\hat J(X^{\sf{tr}}, Y^{\sf{tr}}; K_w)$ \hfill \textit{\# maximize objective $\widehat J(X^\sf{tr},Y^\sf{tr};K_\omega)$ as in \eqref{eq:J} }
        \vspace{1mm}
        
        \STATE \textit{\# Phase 2: Distributional calibration of test statistic}
            \STATE  $\text{Scores}^{(0)} \gets f(X^\sf{cal};X^\sf{ev},Y^\sf{ev}) $
                        \hfill \textit{\# $\text{Scores}^{(0)}$ has a length of $n_{\sf{cal}}$}
            \STATE  $\text{Scores}^{(1)} \gets f(Y^\sf{cal};X^\sf{ev},Y^\sf{ev})$ 
                        \hfill \textit{\# $\text{Scores}^{(1)}$ has a length of $n_{\sf{cal}}$}
            \STATE $\theta_0 \gets \text{mean}(\text{Scores}^{(0)})$ 
                        \hfill \textit{\# estimate $\E[f(Z)|Z\sim P_X]$}
            \STATE $\theta_1 \gets \text{mean}(\text{Scores}^{(1)})$ 
                        \hfill \textit{\# estimate $\E[f(Z)|Z\sim P_Y]$}
            \STATE $\sigma_0 \gets \text{std}(\text{Scores}^{(0)})$ 
                        \hfill \textit{\# estimate $\sqrt{\var[f(Z)|Z\sim P_X]}$ }
        \vspace{1mm}
        
        \STATE \textit{\# Phase 3: Inference with input $Z_{\text{test}}$}
            \STATE $m\, \gets \mathrm{length}(Z_{\text{test}})$
            \STATE $\,T\, \gets T_f(Z_{\text{test}};X^\sf{ev},Y^\sf{ev})=\text{mean}( f(Z_{\text{test}}; X^\sf{ev},Y^\sf{ev} ))$
                        \hfill \textit{\# compute test statistic}
            \STATE $ Z_{\text{discovery}} \gets \frac{T-\theta_0}{\sigma_0/\sqrt{m}}$ 
        \STATE \textbf{Output: } Estimated significance: $Z_{\text{discovery}}$ 
    \end{algorithmic}
\end{breakablealgorithm}

\begin{breakablealgorithm}
    \caption{Estimate the significance of discovery of an input $Z_{\text{test}}$, applying the thresholding trick}
    \begin{algorithmic}
    \label{algo:higgs_thres}
        \STATE \textbf{Input: } $(X^\sf{tr}$, $X^\sf{ev}, X^\sf{cal}, X^\sf{opt})$, $(Y^\sf{tr}, Y^\sf{ev}, Y^\sf{cal}, Y^\sf{opt}$); parametrized kernel $K_\omega$; input $Z_{\text{test}}$.
        \STATE \textit{\# Phase 1: Kernel training on $X^\sf{tr}$ and $Y^\sf{tr}$\hfill}
            \STATE $\omega \gets \arg\max_\omega^{\text{optimizer}}\hat J(X^{\sf{tr}}, Y^{\sf{tr}}; K_w)$ \hfill \textit{\# maximize objective $\widehat J(X^\sf{tr},Y^\sf{tr};K_\omega)$ as in \eqref{eq:J} }
        \vspace{1mm}
        
        \STATE \textit{\# Phase 2: Find the best threshold}
            \STATE $ \text{Scores}^{(0)} \gets f(X^\sf{opt};X^\sf{ev},Y^\sf{ev})$
            \STATE $ \text{Scores}^{(1)} \gets f(Y^\sf{opt};X^\sf{ev},Y^\sf{ev})\hfill \textit{\# witness function as in \eqref{eq: appendix_score}}$
            \FOR{$i = 1,2,...,2n_{\sf{opt}}$}
                \STATE $t = (\text{Scores}^{(0)}\cup\text{Scores}^{(1)})[i]$
                    \STATE $\text{TP}, \text{TN} = \text{mean}(\text{Scores}^{(1)} > t),  \text{mean}(\text{Scores}^{(0)} < t)$ \hfill \textit{\# true positive and true negative rate}
                \STATE $\text{power}_i=\frac{\text{TP}+\text{TN}-1}{\sqrt{\text{TN}(1-\text{TN})}}$\hfill \textit{\# find $t$ to maximize the (estimated) p-value}
            \ENDFOR
            \STATE $t_{\sf{opt}} = (\text{Scores}^{(0)}\cup\text{Scores}^{(1)})[\argmax_i \text{power}_i$]
        \vspace{1mm}
        
        \STATE \textit{\# Phase 3: Distributional calibration of test statistic (under null hypothesis)}
            \STATE  $\text{Scores}^{(0)} \gets (f(X^\sf{cal};X^\sf{ev},Y^\sf{ev})>t)$ 
                        \hfill \textit{\# $\text{Scores}^{(0)}\in\{0,1\}^{n_{\sf{ev}}}$}
            \STATE  $\text{Scores}^{(1)} \gets (f(Y^\sf{cal};X^\sf{ev},Y^\sf{ev})>t)$ 
                        \hfill \textit{\# $\text{Scores}^{(1)}\in\{0,1\}^{n_{\sf{ev}}}$}
            \STATE $\theta_0 \gets \text{mean}(\text{Scores}^{(0)})$ 
                        \hfill \textit{\# estimate $\E[f_t(Z)|Z\sim P_X]\in[0,1]$}
            \STATE $\theta_1 \gets \text{mean}(\text{Scores}^{(1)})$ 
                        \hfill \textit{\# estimate $\E[f_t(Z)|Z\sim P_Y]\in[0,1]$}
        \vspace{1mm}
        \STATE \textit{\# Phase 4: Inference with input $Z_{\text{test}}$}
        \vspace{1mm}
            \STATE $m\, \gets \mathrm{length}(Z_{\text{test}})$
            \STATE $\,T\, \gets T_f(Z_{\text{test}};X^\sf{ev},Y^\sf{ev})=\text{mean}( f(Z_{\text{test}}; X^\sf{ev},Y^\sf{ev})>t)$
                        \hfill \textit{\# compute test statistic}
            \STATE $Z_{\text{discovery}} \gets \textup{CDF}^{-1}_{\mc{N}(0,1)}(\textup{CDF}_{\textup{Bin}(m,\theta_0)}(T))$ 
        \STATE \textbf{Output: } Estimated significance: $Z_{\text{discovery}}$ 
    \end{algorithmic}
\end{breakablealgorithm}



\begin{figure}[H]
    \centering
    \includegraphics[width=0.7\textwidth]{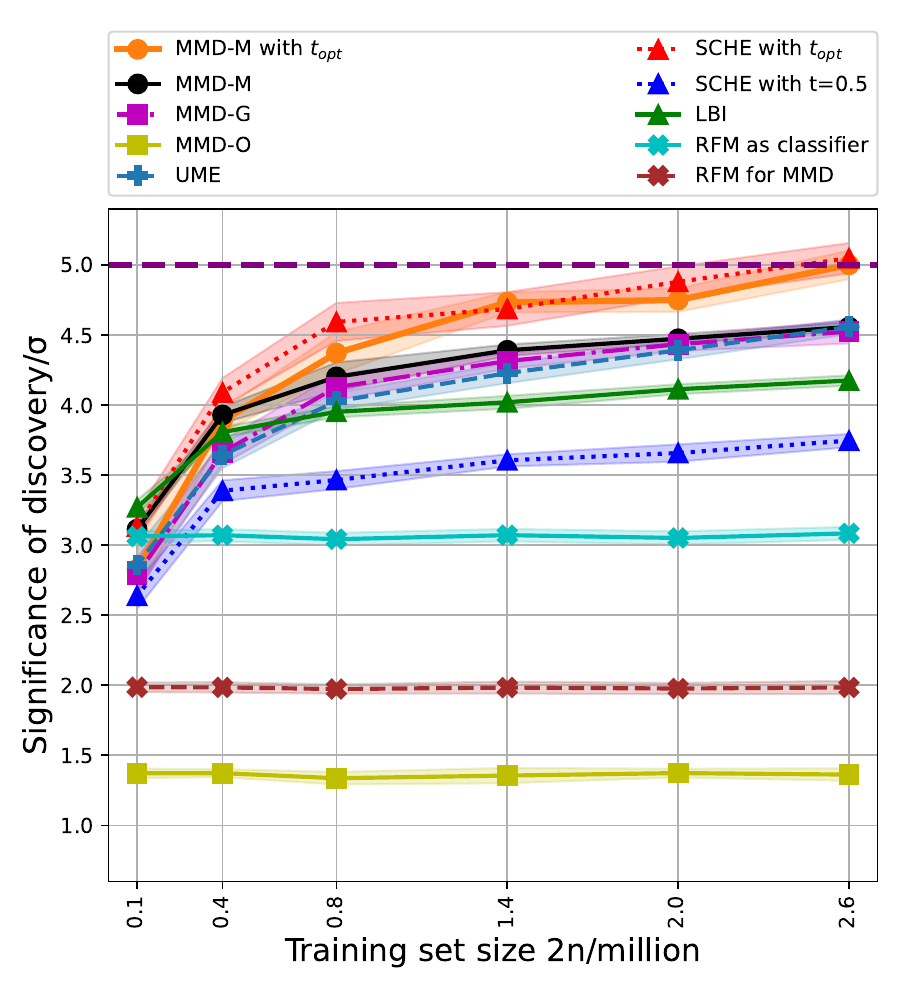}
    \caption{Complete image of Figure \ref{fig:trade off} in the main text. The mean and standard deviation are calculated based on $100$ runs. See \Cref{appendix:higggs} for details.}
    \label{fig:full_higgs_discover}
\end{figure}

\begin{figure}[H]
    \centering
    \includegraphics[width=0.65\textwidth]{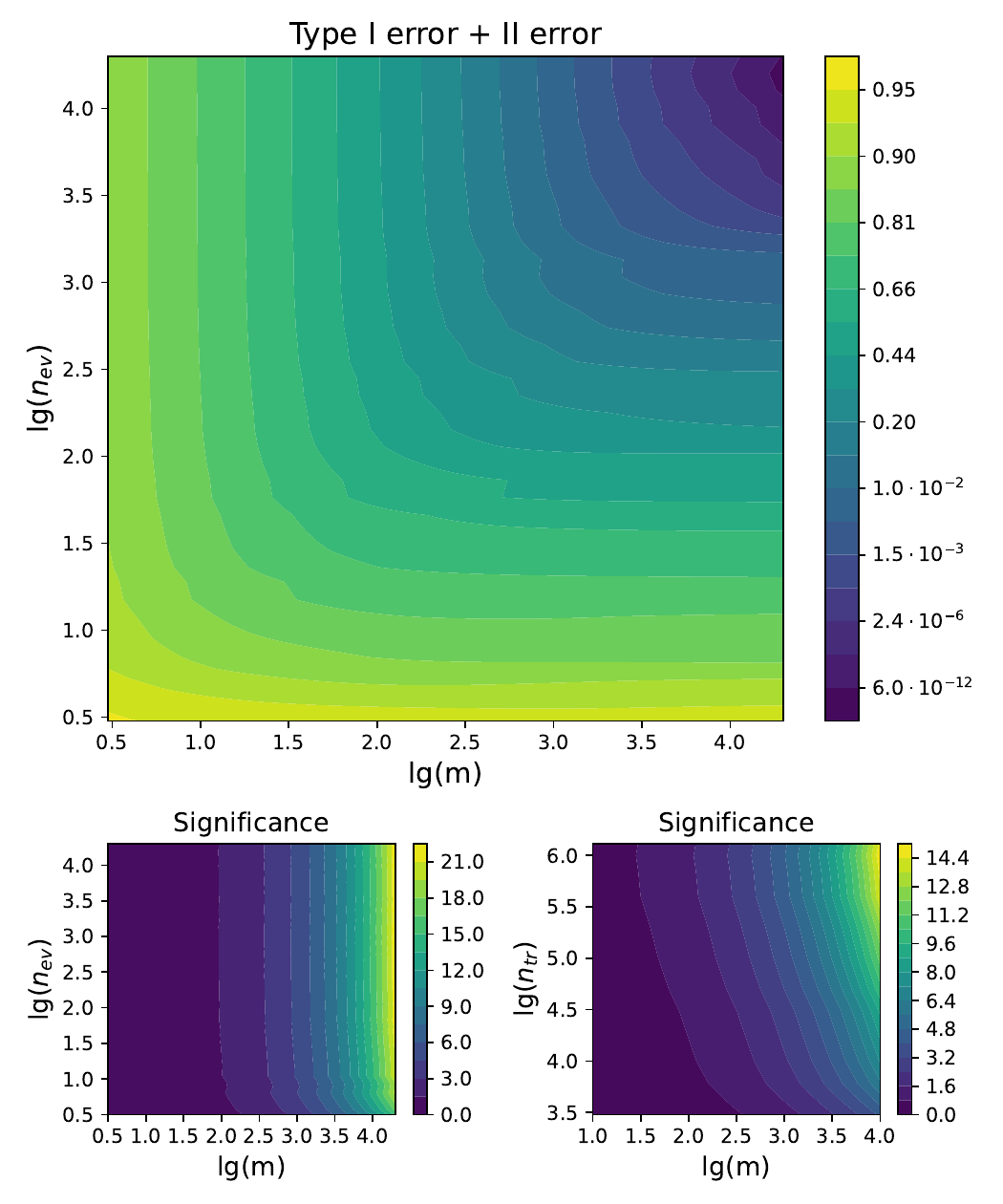}
    \caption{The top plot displays the 
    $(m,n_{\sf{ev}})$ trade-off to reach certain levels of total error  using $n_{\sf{tr}}=1.3\times 10^6$ in MMD-M. The bottom figures show the trade-off of $(m,n_{\sf{ev}})$ and $(m, n_{\sf{tr}})$ to reach certain level of significance of discovery in MMD-M. In the bottom left figure, we fix $n_{\sf{tr}}=1.3\times 10^6$. In the bottom right figure, we fix $n_{\sf{ev}}=20,000$. See \Cref{appendix:higggs} for details.}\label{fig: higgs_tradeoff_sup}
\end{figure}

\section{Limitations and Future Directions}
Finally, we discuss several limitations of our work and raise open questions that we hope will be addressed in future works. From the theoretical side of our arguments, we point out several aspects. First, our upper bound (on the minimax sample complexity) \Cref{thm: MMDUpper} has a likely sub-optimal dependence on $\alpha, \delta$. Second, it might be possible to improve our lower bound to a more natural form by replacing $\|\lambda\|_{2,J_{\epsilon}^\star}$ by $\|\lambda\|_2$ and removing the constraint that the top eigenfunction has to be constant. Third, it remains open to extend our theory to include data-dependent $K$, as opposed to fixed $K$. 

Empirically, our proposal \Cref{alg:learn_deep_kernel} can be inefficient in Phase 2 (prior works such as \cite{liu2020learning} have used permutation-based arguments for a more efficient estimate), which we adopted due to its simplicity and universality in all benchmarks. Moreover, one might hope that $\LF$/$\mLF$ can be extended to more complex applications, such as text data or videos. Such questions are important to investigate as a future direction.

\end{document}


\maketitle

\tableofcontents
\appendix
\onecolumn

\section{Notation}
We use $A\gtrsim B, A \lesssim B, A \asymp B$ to denote $A = \Omega(B), B = \Omega(A)$ and $A = \Theta(B)$ respectively, where the hidden constants depend on untracked parameters multiplicatively.\footnote{For example, the first equation in \eqref{eq:Pdb upper} means that there exists a constant $c$ independent of $\alpha, k, \eps, \delta, R$, such that $\min\{m,n\}\geq c\frac{\log(1/\alpha)(1+R)^2}{k\epsilon^2\delta^2}$.}

We write $\TV,\KL,\chi^2$ for total-variation, KL-divergence and $\chi^2$-divergence, respectively. We write $D(P_{Y|X}\|Q_{Y|X}|P_X) = \E_{X \sim P_X} D(P_{Y|X} \| Q_{Y|X})$ as the \emph{conditional divergence} for any probability measures $P,Q$ on two variables $X,Y$ and divergence $D \in \{\TV,\KL,\chi^2\}$. 

We write $\ell^p$ for the usual $\ell^p$ sequence space and $L^p$ for the usual $L^p$ space with respect to the Lebesgue measure. Both the $\ell^p$ norm and the $L^p$ norm are written as $\|\cdot\|_p$ if no ambiguity arises.

For real numbers $a,b\in\R$ we also write $\max\{a,b\}$ as $a\vee b$ and $\min\{a,b\}$ as $a\wedge b$.

We use $\vec{1}_d$ to denote an $d$-dimensional all $1$'s vector.

For an integer $k\in\mathbb{Z}^+$, we write $[k]$ as a short notation for the set $\{1,2,\dots,k\}$.

In the proofs of \Cref{thm: MMDUpper} and \Cref{thm:lower bds}, we use $\stackrel{!}{=}$ for an equality that we are trying to prove.

\section{Applications of Theorem \ref{thm: MMDUpper}}\label{sec:extra examples}
Usually, minimax rates of testing are proven under separation assumptions using more traditional
measures of distance such as $L^p$, where $p\in[1,\infty]$. In this section we show one example of how
\Cref{thm: MMDUpper} can be used to recover known results, and also obtain some novel results under $L^2$-separation and $L^1$-separation.

\subsection{Bounded Discrete Distributions Under \texorpdfstring{$L^2/L^1$}{L2-L1}-Separation}\label{sec:appendix discrete example}

\textbf{Sample Complexity Upper Bounds}\quad Let $\cal P_{\sf{Db}}(k, C)$ be the set of all discrete distributions $P$ supported on $[k]=\{1,2,\dots,k\}$ satisfying $\max_{1\leq i\leq k} p(i) \leq C/k$, where $p$ is the probability mass function of $P$ (here $\sum_{i=1}^k p(k)=1$). For distributions $P_X,P_Y,P_Z$ we shall write $p_X,p_Y,p_Z$ as their probability mass functions, respectively.

Let us apply \Cref{thm: MMDUpper} with underlying space $\cal X = [k]$ and measure $\mu = \frac1k \sum_{i=1}^k \delta_i$. Take the kernel $K(x,y) =\one\{x=y\}= \sum_{i=1}^k \one\{x=y=i\}$, and note that for any two distributions $P_X,P_Y$ we have 
$$\mmd^2(P_X,P_Y)=\E\Big[K(X,X')+K(Y,Y')-2K(X,Y)\Big]=\sum_i|p_X(i)-p_Y(i)|^2$$
where $(X,X',Y,Y')\sim P_X^{\otimes2}\otimes  P_Y^{\otimes2}$. So the corresponding MMD is the $\ell^2$-distance on probability mass functions. Note also that $K=\sum_{i=1}^k \frac{1}{k} \left(\sqrt{k}\one\{x=i\}\right)\left(\sqrt{k}\one\{y=i\}\right)$, where $\left\{\sqrt{k}\one\{x=i\}\right\}_{i=1}^k$ forms an orthonormal basis of $L^2(\mu)$. So $K$ has only one nonzero eigenvalue, namely 
$$\lambda_1=\lambda_2=\ldots=\lambda_k=1/k,$$
of multiplicity $k$. Suppose that we observe samples $X,Y,Z$ of size $n,n,m$ from $P_X,P_Y,P_Z \in \cal P_\sf{Db}(k,C)$, where $\mmd(P_X,P_Y) = \sqrt{\sum_i|p_X(i)-p_Y(i)|^2} \geq \epsilon$.
Plugging into Theorem \ref{thm: MMDUpper} shows that:

\begin{proposition}
For any two $P_X, P_Y\in \cal P_{\sf{Db}}(k, C)$, if the $\ell^2$-distance between $p_X, p_Y$ is at least $\epsilon$, then testing \eqref{eqn:lfht mix def} is possible at total error $\alpha$ using $n$ simulation samples and $m$ real data samples provided that
\begin{equation}\label{eq:Pdb upper}
    \begin{aligned}
    \min\{m,n\} &\gtrsim \frac{C \|\lambda\|_\infty \log(1/\alpha)(1+R)^2}{\delta^2\epsilon^2} \asymp \frac{\log(1/\alpha)(1+R)^2}{k\epsilon^2\delta^2},\\
    \min\{n,\sqrt{mn}\} &\gtrsim \frac{C \|\lambda\|_2 \sqrt{\log(1/\alpha)}}{\epsilon^2\delta} \asymp \frac{\sqrt{\log(1/\alpha)}}{\sqrt{k}\epsilon^2\delta}.
    \end{aligned}
\end{equation}
where $R$ is defined as in the assumption $(iii)$ of \Cref{item:mlfht3}.
\end{proposition}


We can convert the above results to measure separation with respect to total variation (recall $\TV(p,q) = \frac12\sum_i|p(i)-q(i)|=\frac12 \|p-q\|_1$) using the AM-QM inequality $\|p_X-p_Y\|_1 \leq \sqrt k\|p_X-p_Y\|_2$. Then, taking $R\asymp\alpha\asymp\delta=\Theta(1)$ recovers the minimax optimal results of \cite{kelly2010universal,kelly2012classification,RY22SBI}, for $\LF$ over the class $\cal P_\sf{Db}$. Note that analogous results for two-sample testing follow from the above using the reduction presented in \Cref{ssec:connections}.

\textbf{Sample Complexity Lower Bounds}\quad Recall the definition of $J^\star_\epsilon$ and note that $\|\lambda\|_{2,J}^2=\frac{\min(J-1,k)}{k^2}$ for all $J \geq 2$. By \Cref{cor:lower bd} we see that $J^\star_\epsilon \gtrsim k$ as soon as $\epsilon \lesssim 1/k$. Thus, for $\epsilon \lesssim 1/k$ the necessity of
\begin{equation}\label{eqn:P_db lower}
    m\gtrsim\frac{\log(1/\alpha)}{k\epsilon^2\delta^2},\, n\gtrsim \frac{\sqrt{\log(1/\alpha)}}{\sqrt{k}\epsilon^2}\,\text{ and }\, m+\sqrt{mn} \gtrsim \frac{\sqrt{\log(1/\alpha)}}{\sqrt k\epsilon^2\delta}
\end{equation}
follows by \Cref{thm:lower bds}. Here it is crucial to note that when $\delta=\Theta(1)$, we have
\begin{align*}
    m+\sqrt{mn} \gtrsim \frac{\sqrt{\log(1/\alpha)}}{\sqrt k\epsilon^2} \text{ and } n\gtrsim \frac{\sqrt{\log(1/\alpha)}}{\sqrt{k}\epsilon^2}
    \iff
    \sqrt{mn} \gtrsim \frac{\sqrt{\log(1/\alpha)}}{\sqrt k\epsilon^2} \text{ and } n\gtrsim \frac{\sqrt{\log(1/\alpha)}}{\sqrt{k}\epsilon^2}
\end{align*}
and hence the upper bound \eqref{eq:Pdb upper} meets with the lower bound \eqref{eqn:P_db lower} provided $R\asymp \delta=\Theta(1)$. Once again, setting $R\asymp \delta\asymp\alpha=\Theta(1)$ we the optimal lower bounds recovering the results of \cite{RY22SBI} (in the regime $\epsilon \lesssim 1/k$). In short we can also recover the following result for $\LF$.
\begin{proposition}[{{\cite[Theorem 1, adapted]{RY22SBI}}}]
    On the class $P_{\sf{Db}}(k, C)$, using $n$ simulation samples and $m$ real data samples, if
    \begin{align}\label{eq:Pdb3}
        n\gtrsim \frac{1}{\sqrt{k}\epsilon^2},\quad m\gtrsim \frac{1}{k\epsilon^2}, \quad \sqrt{mn}\gtrsim\frac{1}{\sqrt{k}\eps^2},
    \end{align}
    then  for any two distributions $P_X, P_Y \in \cal P_{\sf{Db}}(k, C)$ with $\|p_X-p_Y\|_2\geq\eps$, testing \eqref{eqn:LFHT def} is possible with a total error of $1\%$. 
    Conversely, to ensure the existence of a procedure that can test \eqref{eqn:LFHT def} with a total error of $1\%$ for any $P_X, P_Y \in \cal P_{\sf{Db}}(k, C)$ with $\|p_X-p_Y\|_2\geq\eps$, the number of observations $(n,m)$ must satisfy 
    \begin{align}\label{eq:Pdb4}
        n\gtrsim \frac{1}{\sqrt{k}\epsilon^2},\quad m\gtrsim \frac{1}{k\epsilon^2},\quad \sqrt{mn}\gtrsim\frac{1}{\sqrt{k}\eps^2}.
    \end{align}
    The implied constants in \eqref{eq:Pdb3} and \eqref{eq:Pdb4} do not depend on $k$ and $\eps$, but may differ.
\end{proposition}


\subsection{\texorpdfstring{$\beta$-H\"older Smooth Densities on $[0,1]^d$  Under $L^2/L^1$-Separation}{beta-Holder Smooth Densities on [0,1]^d  Under L^2/L^1-Separation}}\label{sec:appendix holder example}
\textbf{Sample Complexity Upper Bounds}\quad Let $\cal P_{\sf{H}}(\beta, d, C)$ be the set of all distributions on $[0,1]^d$ with $\beta$-H\"older smooth Lebesgue-density $p$ satisfying $\|p\|_{\cal C^\beta} \leq C$ for some constant $C>1$, where
\begin{align*}
&\|p\|_{\cal C^\beta} \eqdef \max_{0\leq |\alpha| \leq \lceil\beta-1\rceil} \|f^{(\alpha)}\|_\infty + 
\sup_{x\neq y \in [0,1]^d, |\alpha|=\lceil\beta-1\rceil} \frac{|f^{(\alpha)}(x)-f^{(\alpha)}(y)|}{\|x-y\|^{\beta-\lceil\beta-1\rceil}_2}, 
\end{align*}
where $\lceil \beta-1\rceil$ is the largest integer strictly smaller than $\beta$ and $|\alpha| =\sum_i \alpha_i$ is the norm of a multi-index $\alpha \in \N^d$. Abusing notation, we also use $\cal P_{\sf{H}}(\beta, d, C)$ to denote the set of all corresponding density functions.

We take $K(x,y) = \sum_j \one\{x ,y \in B_j\}$, where $\{B_j\}_{j\in[\kappa]^d}$ is the $j$'th cell of the regular grid of size $\kappa^d$ on $[0,1]^d$, i.e., $B_j = [(j-\vec{1}_d)/\kappa, j/\kappa]$ for $j \in [\kappa]^d$. Clearly there are $\kappa^d$ nonzero eigenvalues, each equal to $1$. The following approximation result is due to Ingster \citep{IngsterGoF}, see also \cite[Lemma 7.2]{arias2018remember}. 
\begin{lemma}\label{lem: smoothdensity}
    Let $f,g \in \cal P_\sf{H}(\beta, d, C)$ with $\|f-g\|_2 \geq \epsilon$. Then, there exist constants $c,c'$ independent of $\epsilon$ such that for any $\kappa\geq c\epsilon^{-1/\beta}$, 
    \begin{equation*}
        \mmd(f,g) \geq c' \|f-g\|_2. 
    \end{equation*}
\end{lemma}

Now, suppose that we have samples $X,Y,Z$ of size $n,n,m$ from $P_X,P_Y,P_Z \in \cal P_\sf{H}(\beta, d, C)$ with densities $p_X,p_Y,p_Z$ such that $\|p_X-p_Y\|_2 \geq \epsilon$. Then, \Cref{thm: MMDUpper} combined with \Cref{lem: smoothdensity} and the choice $\kappa \asymp \epsilon^{-1/\beta}$ shows that
\begin{proposition}
    Testing \eqref{eqn:lfht mix def} on  $\cal P_{\sf{H}}(\beta, d, C)$ at total error $\alpha$ using $n$ simulation and $m$ real data samples is possible provided
\begin{align*}
    \min\{m,n\}&\gtrsim \frac{C \|\lambda\|_\infty \log(1/\alpha)(1+R)^2}{\delta^2\epsilon^2} \asymp \frac{\log(1/\alpha)(1+R)^2}{\delta^2\epsilon^2}, \\
    \min\{n, \sqrt{nm}\} &\gtrsim \frac{C \|\lambda\|_2 \sqrt{\log(1/\alpha)}}{\epsilon^2\delta}  \asymp \frac{\sqrt{\log(1/\alpha)}}{\epsilon^{(2\beta+d/2)/\beta}\delta},
\end{align*}
where $\epsilon$ is an $L^2$-distance lower bound between $P_X, P_Y$ and $R$ is defined as in the assumption $(iii)$ of \Cref{item:mlfht3}.
\end{proposition}

Setting $R\asymp\alpha \asymp \delta = \Theta(1)$ recovers the optimal results of \cite{RY22SBI} for the class $\cal P_\sf{H}$. Once again, identical results under $L^1$ separation follow from Jensen's inequality $\|\cdot\|_{L^1([0,1]^d)} \leq \|\cdot\|_{L^2([0,1]^d)}$. Note that analogous results for two-sample testing follow from the above using the reduction presented in \Cref{ssec:connections}.

\textbf{Sample Complexity Lower Bounds}\quad 
The kernel defined in the previous paragraph is not suitable for constructing lower bounds over the class $\cal P_\sf{H}$ because its eigenfunctions do not necessarily lie in $\cal P_\sf{H}$. It would be possible to consider a different kernel that is more adapted to this problem/class but we do not pursue this here. 


\subsection{\texorpdfstring{$(\beta,2)$-Sobolev Smooth Densities on $\R^d$ Under $L^2$-Separation}{(beta,2)-Sobolev Smooth Densities on R^d Under L^2-Separation} }\label{sec:appendix sobolev example}

\textbf{Sample Complexity Upper Bounds}\quad Let $\cal P_\sf{S}(\beta, d, C)$ be the class of distributions that are supported on $\R^d$ and whose Lebesgue density $p$ satisfies $\|p\|_{\beta,2} \leq C$, where
\begin{equation}
    \|p\|_{\beta,2} \eqdef\left\|(1+\|\cdot\|)^\beta\cal F[p]\right\|_2
\end{equation}
and $\cal F$ denotes the Fourier transform. Again, abusing notation, we write $\cal P_\sf{S}(\beta, d, C)$ both as the set of distributions and the set of density functions. 

We take the Gaussian kernel $G_\sigma(x,y) = \sigma^{-d}\exp(-\|x-y\|_2^2/\sigma^2)$ on $\cal X = \R^d$ with base measure $\D\mu(x)=\exp(-x^2)\D x$. In \cite{li2019optimality} the authors showed that the two-sample test that thresholds the Gaussian MMD with appropriately chosen variance $\sigma^2$ achieves the minimax optimal sample complexity over $\cal P_\sf{S}$, when separation is measured by $L^2$. A key ingredient in their proof is the following inequality. 
\begin{lemma}[{{\cite[Lemma 5]{li2019optimality}}}]\label{lem:gauss kernel approx}
Let $f,g \in \cal P_\sf{S}(\beta, d, C)$ with $\|f-g\|_2 \geq \epsilon$. Then, there exist constants $c,c'$ independent of $\epsilon$ such that for any $\sigma \leq c\, \epsilon^{1/\beta}$, we have
\begin{equation*}
    \mmd(f,g) \geq c'\|f-g\|_2. 
\end{equation*}
\end{lemma}

Now, suppose that we have samples $X,Y,Z$ of sizes $n,n,m$ from $P_X,P_Y,P_Z \in \cal P_\sf{S}(\beta, d, C)$ for some constant $C$ with densities $p_X,p_Y,p_Y$ satisfying $\|p_X-p_Y\|_2\geq\epsilon$. 

Note that the heat-semigroup is an $L^2$-contraction ($\|\lambda\|_\infty\leq1$) and that $$\|\lambda\|_2^2 = \int G_\sigma(x,y)^2 \D\mu(x)\D\mu(y)\asymp\sigma^{-d}$$ up to constants depending on the dimension. \Cref{thm: MMDUpper} combined with \Cref{lem:gauss kernel approx} and a choice $\sigma\asymp\epsilon^{1/\beta}$ yields the following result.
\begin{proposition}
    Testing \eqref{eqn:lfht mix def} over the class $\cal P_\sf{S}$ with total error $\alpha$ is possible provided
\begin{align*}
    \min\{m,n\}&\gtrsim \frac{C \|\lambda\|_\infty \log(1/\alpha)(1+R)^2}{\delta^2\epsilon^2} \asymp \frac{\log(1/\alpha)(1+R)^2}{\delta^2\epsilon^2} \\
    \min\{n, \sqrt{nm}\} &\gtrsim \frac{C \|\lambda\|_2 \sqrt{\log(1/\alpha)}}{\epsilon^2\delta}  \asymp \frac{\sqrt{\log(1/\alpha)}}{\epsilon^{(2\beta+d/2)/\beta}\delta}, 
\end{align*}
where $\epsilon$ is the lower bound on the $L^2$-distance between $P_X, P_Y$ and $R$ is defined as in the assumption $(iii)$ of \Cref{item:mlfht3}.
\end{proposition}
Taking $R\asymp\delta\asymp\alpha=\Theta(1)$ above, we obtain new results for $\LF$ and using the reduction from two-sample testing given in \Cref{ssec:connections} we partly recover \cite[Theorem 5]{li2019optimality}. Only partly, because the above requires bounded density with respect to our base measure $\D\mu(x)=\exp(-x^2)\D x$. 

\textbf{Sample Complexity Lower Bounds}\quad 
Note that our lower bound \Cref{thm:lower bds} doesn't apply because the top eigenfunction of the Gaussian kernel is not constant. Once again, a more careful choice of the base measure (or kernel) might lead to a more suitable argument for the lower bound. We leave such pursuit as open.

\section{Black-box Boosting of Success Probability}\label{sec:boosting}

In this section we briefly describe how upper bounds on the minimax sample complexity in the constant error probability regime ($\alpha=\Theta(1)$) can be used to obtain the dependence $\log(1/\alpha)$ in the small error probability regime ($\alpha=o(1)$). We will argue abstractly in a way that applies to the setting of \Cref{thm: MMDUpper}.

Suppose that from some distributions $P_1,P_2,\dots,P_k$ we take samples $X^1,X^2,\dots,X^k$ of size $n_1,n_2,\dots,n_k$ respectively and are able to decide between two hypotheses $H_0$ and $H_1$ (fixed but arbitrary) with total error probability at most $1/3$. Call this test as $\Psi(X^1,\dots,X^k) \in \{0,1\}$, so that 
\begin{equation*}
    \P(\Psi(X^1,\dots,X^k) = 0 | H_0) \geq 2/3 \qquad\text{and}\qquad \P(\Psi(X^1,\dots,X^k)=1|H_1) \geq 2/3. 
\end{equation*}
Now, to each an error of $o(1)$, instead, we take $18n_1\log(2/\alpha), \dots, 18 n_k\log(2/\alpha)$ observations from $P_1$ through $P_k$, and split each sample into $18\log(2/\alpha)$ equal sized batches $\{X^{i,j}\}_{i \in [k], j \in [18\log(2/\alpha)]}$. Here $18\log(2/\alpha)$ is assumed to be an integer without loss of generality. 
The split samples form $18\log(2/\alpha)$ independent binary random variables $$A_j \eqdef  \Psi(X^{1,j}, \dots, X^{k,j})$$ for $j=1,2,\dots,18\log(2/\alpha)$. We claim that the majority voting test
\begin{equation*}
    \Psi_\alpha(\{X^{i,j}\}_{i,j}) = \begin{cases} 1 &\text{if } \bar A \geq 1/2 \\ 0 &\text{otherwise}\end{cases}
\end{equation*}
tests $H_0$ against $H_1$ with total probability of error at most $\alpha$, where $$\bar A \eqdef \frac{1}{18\log(2/\alpha)} \sum_{j=1}^{18\log(2/\alpha)} A_j.$$ Indeed, by Hoeffding's inequality, we have
\begin{align*}
    \P\left(\left.\bar A \geq 1/2 \right| H_0\right) &\leq \alpha/2 \\
    \P\left(\left.\bar A \leq 1/2 \right| H_1\right) &\leq \alpha/2.
\end{align*}
Therefore, in the remainder of our upper bound proofs, we only focus on achieving a constant probability of error ($\alpha = \Theta(1)$) as the logarithmic dependence follows by the above. 
\begin{remark}
    As mentioned in the discussion succeeding \Cref{cor:lower bd}, we do conjecture the \emph{tight} dependence in the upper bound to be $\sqrt{\log(\alpha^{-1})}$ instead of $\log(\alpha^{-1})$ shown by this method.
\end{remark}

\section{\texorpdfstring{ Proof of \Cref{thm: MMDUpper} }{ Proof of Theorem 3.2} }

\subsection{Notation and Technical Tools}
We use the expansion $$K(x,y) = \sum_\ell \lambda_\ell e_\ell(x) e_\ell(y)$$ extensively, where $\lambda \eqdef (\lambda_1, \lambda_2, \dots)$ are $K$'s eigenvalues (regarded as an integral operator on $L^2(\mu)$) in non-increasing order and $e_1,e_2,\dots$ are the corresponding eigenfunctions forming an orthonormal basis for $L^2(\mu)$, and convergence is to be understood in $L^2(\mu)$. We use the notation $\langle\,\cdot\,\rangle\eqdef\int\cdot\,\D\mu$.
For all $u \in L^2(\mu)$ we define 
$$u_\ell \eqdef \langle u e_\ell\rangle,\quad u_{\ell\ellp} \eqdef \langle u e_\ell e_\ellp\rangle,\quad \ell=1,2,\dots$$
and consequently $u=\sum_\ell u_\ell e_\ell$.
We also define
$$ K[u](\cdot) \eqdef \int K(t,\cdot)u(t)\mu(\D t) = \sum_\ell \lambda_\ell u_\ell e_\ell(\cdot),$$
where the second equality follows from the orthonormality of $\{e_\ell\}_{\ell=1}^\infty.$ 
Note that the RKHS embedding satisfies $\theta_u \eqdef \int K(x,\cdot) u(x)\D \mu(x) = K[u]$. 
Now, for $P_X$ we write 
$$x_\ell \eqdef (p_X)_\ell = \langle p_X e_\ell\rangle,\quad x_{\ell\ellp} \eqdef (p_X)_{\ell\ellp} = \langle p_X e_\ell e_\ellp\rangle,\quad \ell,\ellp=1,2,\dots$$
where $p_X$ is the $\mu$-density of $P_X$. The similar notations also apply to $P_Y,P_Z$. 
The following identities will be very useful in our proofs. 
\begin{lemma}\label{lem:calc rules}
    For each identity below, let $f,g,h \in L^2(\mu)$ be such that the quantity is well defined. Then, 
\begin{align}
    \|\theta_f\|_{\cal H_K}^2 &= \sum_\ell\lambda_\ell f_\ell^2\label{eqn:calc rule rkhs norm} \\
   \mmd^2(f,g) &= \sum_\ell \lambda_\ell (f_\ell-g_\ell)^2\label{eqn:calc rule mmd} \\
    \|K[f]\|_2^2 &= \sum_\ell \lambda_\ell^2 f_\ell^2 \label{eqn:calc rule |K[f]|}\\
    \sum_\ell \lambda_\ell f_\ell g_\ell &= \langle f K[g]\rangle = \langle K[f]g\rangle\label{eqn:calc rule K[f]g} \\
    \sum_{\ell\ellp} \lambda_\ell\lambda_\ellp h_{\ell\ellp} f_\ell g_\ellp &= \langle h K[f] K[g]\rangle \label{eqn:calc rule hK[g]K[f]} \\
    \sum_{\ell\ellp} \lambda_\ell\lambda_\ellp g_{\ell\ellp} f_{\ell\ellp} &= \sum_\ell\lambda_\ell\langle fe_\ell K[ge_\ell]\rangle.\label{eqn:calc rule ll}
\end{align}
Suppose that $f,g$ are probability densities with respect to $\mu$ that are bounded by $C$. Then 
\begin{equation}
    0 \leq \sum_{\ell\ellp} \lambda_\ell\lambda_\ellp g_{\ell\ellp} f_{\ell\ellp} \leq C^2 \|\lambda\|_2^2. 
\end{equation}
\end{lemma}

\begin{proof}
We prove each claim, starting with \eqref{eqn:calc rule rkhs norm}. Clearly
\begin{align*}
    \|\theta_f\|^2_{\cal H_K} &= \|K[f]\|^2_{\cal H_K} \\
    &= \left\|\int K(x,\cdot) f(x)\D\mu(x)\right\|^2_{\cal H_K} \\
    &= \iint \langle K(x,\cdot), K(y,\cdot)\rangle_{\cal H_K} f(x)f(y)\D\mu(x)\D\mu(y) \\
    &= \iint K(x,y)f(x)f(y)\D\mu(x)\D\mu(y) \\
    &= \sum_\ell\lambda_\ell f_\ell^2
\end{align*}
as required. The second claim \eqref{eqn:calc rule mmd} follows immediately from \eqref{eqn:calc rule rkhs norm} by definition. For \eqref{eqn:calc rule |K[f]|} by orthogonality we have
\begin{align*}
    \|K[f]\|_2^2 &= \|\sum_\ell \lambda_\ell f_\ell e_\ell\|_2^2 \\
    &= \sum_\ell \lambda_\ell^2 f_\ell^2. 
\end{align*}
For \eqref{eqn:calc rule K[f]g} by the definition of $K[\cdot]$ we have
\begin{align*}
    \sum_\ell \lambda_\ell f_\ell g_\ell &= \left\langle \left(\sum_\ell \lambda_\ell f_\ell e_\ell\right) g\right\rangle \\
    &= \langle K[f]g\rangle. 
\end{align*}
For \eqref{eqn:calc rule hK[g]K[f]} we can write
\begin{align*}
    \sum_{\ell\ellp} \lambda_\ell \lambda_\ellp h_{\ell\ellp}f_\ell g_\ellp &= \sum_\ell \lambda_\ell f_\ell \left\langle \left(\sum_\ellp \lambda_\ellp g_\ellp e_\ellp\right) he_\ell \right\rangle \\
    &= \sum_\ell \lambda_\ell f_\ell \langle K[g] he_\ell\rangle \\
    &= \langle K[g]hK[f]\rangle. 
\end{align*}
Finally, for \eqref{eqn:calc rule ll} we have
\begin{align*}
    \sum_{\ell\ellp} \lambda_\ell \lambda_\ellp f_{\ell\ellp} g_{\ell\ellp} &= \sum_\ell\lambda_\ell \left\langle\left(\sum_\ellp \lambda_\ellp  g_{\ell\ellp} e_\ellp\right) fe_\ell\right\rangle \\
    &= \sum_\ell\lambda_\ell \langle K[ge_\ell] fe_\ell\rangle. 
\end{align*}
Suppose now that $f,g$ are probability densities with respect to $\mu$ that are bounded by $C>0$. Let $X,Y$ be independent random variables following the densities $f,g$. Then 
\begin{align*}
    \sum_{\ell\ellp} \lambda_\ell\lambda_\ellp f_{\ell\ellp}g_{\ell\ellp} &= \E\left[\left(\sum_\ell \lambda_\ell e_\ell(X)e_\ell(Y)\right)^2\right] \\&\leq C^2 \int_\cal{X}\int_\cal{X} \left(\sum_\ell \lambda_\ell e_\ell(x)e_\ell(y)\right)^2 \D\mu(x)\D\mu(y) \\&= C^2 \|\lambda\|_2^2
\end{align*}
as claimed, where we used that the $e_\ell$ are orthonormal. 
\end{proof}



\subsection{Mean and Variance Computation}\label{ssec:mean var comp}
We take $\pi=\delta/2$. Our statistic reads
\begin{align*}
    -T(X,Y,Z)+\gamma(X,Y,\pi) &= \langle \theta_{\widehat P_Z} - (\bar\pi \theta_{\widehat P_X} + \pi \theta_{\widehat P_Y}), \theta_{\widehat P_X} - \theta_{\widehat P_Y}\rangle_{u,\cal H_K} 
    \\[3ex]
    &= \frac{1}{nm}\underbrace{\sum_{ij}k(X_i,Z_j)}_{\sf{I}} - \frac{1}{nm} \underbrace{\sum_{ij}k(Y_i,Z_j)}_{\sf{II}} - \frac{2\bar\pi}{n(n-1)}\underbrace{\sum_{i < i'}k(X_i,X_{i'})}_{\sf{III}} 
    \\[1ex]
        &\qquad + \frac{2\pi}{n(n-1)}\underbrace{\sum_{i<i'} k(Y_i,Y_{i'})}_{\sf{IV}} + \frac{\bar\pi-\pi}{n^2}\underbrace{\sum_{ij} k(X_i,Y_j)}_{\sf{V}}. 
\end{align*}
Recall that $\nu=\argmin_{\nu' \in \R} \mmd(P_Z,\bar\nu'P_X+\nu'P_Y)$. Let us write $z=\bar\nu x + \nu y + r$ for $1-\bar\nu=\nu$, where the residual term is denoted as $r \in L^2(\mu)$. 
Let $\theta_r = \int r(t) K(t,\cdot) \mu(\D t)$ be the mean embedding of $r$. Under both hypotheses we assume that $\|\theta_r\|_{\cal H_K} \leq R\cdot \mmd(P_X,P_Y)$, moreover $\langle \theta_r, \theta_{P_Y}-\theta_{P_X}\rangle_{\cal H_K} = 0$ by the definition of $\nu$. We look at each of the $5+\binom{5}{2}=15$ terms of the variance separately. 
\begin{align*}
    \var(\sf{I}) &= \sum_{\ell\ellp} \lambda_\ell\lambda_\ellp \Big\{n(n-1)m  (z_{\ell\ellp}-z_\ell z_\ellp) x_\ell x_\ellp + nm(m-1)(x_{\ell\ellp}-x_\ell x_\ellp)z_\ell z_\ellp \\ &\qquad\qquad\qquad\qquad+ nm (x_{\ell\ellp}z_{\ell\ellp}-x_\ell x_\ellp z_\ell z_\ellp)\Big\} 
    \\[2ex]
    \var(\sf{II}) &= \sum_{\ell\ellp} \lambda_\ell\lambda_\ellp \Big\{n(n-1)m  (z_{\ell\ellp}-z_\ell z_\ellp) y_\ell y_\ellp + nm(m-1)(y_{\ell\ellp}-y_\ell y_\ellp)z_\ell z_\ellp \\ &\qquad\qquad\qquad\qquad + nm (y_{\ell\ellp}z_{\ell\ellp}-y_\ell y_\ellp z_\ell z_\ellp)\Big\} 
    \\[2ex]
    \var(\sf{III}) &= \sum_{\ell\ellp}\lambda_\ell\lambda_\ellp \Big\{\binom{n}{2} (x_{\ell\ellp}^2-x_\ell^2x_\ellp^2)+(\binom{n}{2}^2-\binom{n}{2}-\binom{4}{2}\binom{n}{4})(x_{\ell\ellp}-x_\ell x_\ellp)x_\ell x_\ellp\Big\} 
    \\[2ex]
    \var(\sf{IV}) &= \sum_{\ell\ellp}\lambda_\ell\lambda_\ellp \Big\{\binom{n}{2} (y_{\ell\ellp}^2-y_\ell^2y_\ellp^2)+(\binom{n}{2}^2-\binom{n}{2}-\binom{4}{2}\binom{n}{4})(y_{\ell\ellp}-y_\ell y_\ellp)y_\ell y_\ellp\Big\} 
    \\[3ex]
    \var(\sf{V}) &= \sum_{\ell\ellp} \lambda_\ell\lambda_\ellp \Big\{n^2(n-1)  (y_{\ell\ellp}-y_\ell y_\ellp) x_\ell x_\ellp + n^2(n-1)(x_{\ell\ellp}-x_\ell x_\ellp)y_\ell y_\ellp \\&\qquad\qquad\qquad\qquad+ n^2 (x_{\ell\ellp}y_{\ell\ellp}-x_\ell x_\ellp y_\ell y_\ellp)\Big\}
\end{align*}
For the cross terms we obtain
\begin{align*}
    \cov(\sf{I}, \sf{II}) &= \sum_{\ell\ell'}\lambda_\ell\lambda_\ellp n^2m( z_{\ell\ellp}  - z_\ell z_\ellp) x_\ell y_\ellp 
    \\
    \cov(\sf{I}, \sf{III}) &= \sum_{\ell\ellp} \lambda_\ell\lambda_\ellp n(n-1)m(x_{\ell\ellp} - x_\ell x_\ellp)z_\ell x_\ellp 
    \\
    \cov(\sf{I},\sf{IV}) &= 0 
    \\[1ex]
    \cov(\sf{I},\sf{V}) &= \sum_{\ell\ellp}\lambda_\ell\lambda_\ellp n^2m (x_{\ell\ellp} -x_\ell x_\ellp)z_\ell y_\ellp 
    \\
    \cov(\sf{II}, \sf{III}) &= 0 
    \\[1ex]
    \cov(\sf{II}, \sf{IV}) &= \sum_{\ell\ellp} \lambda_\ell\lambda_\ellp n(n-1)m(y_{\ell\ellp} - y_\ell y_\ellp)z_\ell y_\ellp 
    \\
    \cov(\sf{II}, \sf{V}) &= \sum_{\ell\ellp}\lambda_\ell\lambda_\ellp n^2m(y_{\ell\ellp}-y_\ell y_\ellp)z_\ell x_\ellp 
    \\
    \cov(\sf{III}, \sf{IV}) &= 0 
    \\[1ex]
    \cov(\sf{III}, \sf{V}) &= \sum_{\ell\ellp}\lambda_\ell\lambda_\ellp n^2(n-1) (x_{\ell\ellp}-x_\ell x_\ellp)x_\ell y_\ellp 
    \\
    \cov(\sf{IV}, \sf{V}) &= \sum_{\ell\ellp}\lambda_\ell\lambda_\ellp n^2(n-1)(y_{\ell\ellp}-y_\ell y_\ellp)y_\ell x_\ellp.
\end{align*}
Note that $\binom{n}{2}^2-\binom{n}{2}-\binom{n}{2}\binom{n}{4} = n(n-1)^2-n(n-1)$. Collecting terms, and simplifying, we get the coefficient of the $\frac{1}{n}$ term: 
\begin{align*}
    \coef\left(\frac1n\right) 
    &= \sum_{\ell,\ellp} \lambda_\ell\lambda_\ellp\Bigg(\underbrace{(x_{\ell\ellp}-x_\ell x_\ellp)z_\ell z_\ellp}_{\var(\sf{I})} + \underbrace{(y_{\ell\ellp}-y_\ell y_\ellp)z_\ell z_\ellp}_{\var(\sf{II})} + \underbrace{4\bar\pi^2(x_{\ell\ellp}-x_\ell x_\ellp)x_\ell x_\ellp}_{\var(\sf{III})} 
    \\
    &\qquad + \underbrace{4\pi^2(y_{\ell\ellp}-y_\ell y_\ellp)y_\ell y_\ellp}_{\var(\sf{IV})} + \underbrace{(\bar\pi-\pi)^2(y_{\ell\ellp}-y_\ell y_\ellp)x_\ell x_\ellp + (\bar\pi-\pi)^2(x_{\ell\ellp}-x_\ell x_\ellp)y_\ell y_\ellp}_{\var(\sf{V})} 
    \\
    &\qquad- \underbrace{4\bar\pi(x_{\ell\ellp}-x_\ell x_\ellp)z_\ell x_\ellp}_{\cov(\sf{I}, \sf{III})} 
    + \underbrace{2(\bar\pi-\pi)(x_{\ell\ellp}-x_\ell x_\ellp)z_\ell y_\ellp}_{\cov(\sf{I},\sf{V})}
    \\
    &\qquad - \underbrace{4\pi(y_{\ell\ellp}-y_\ell y_\ellp)z_\ell y_\ellp}_{\cov(\sf{II},\sf{IV})} - \underbrace{2(\bar\pi-\pi)(y_{\ell\ellp}-y_\ell y_\ellp)z_\ell x_\ellp}_{\cov(\sf{II},\sf{V})} \\
    &\qquad- \underbrace{4\bar\pi(\bar\pi - \pi)(x_{\ell\ellp} -x_\ell x_\ellp)x_\ell y_\ellp}_{\cov(\sf{III},\sf{V})} + \underbrace{4\pi(\bar\pi-\pi)(y_{\ell\ellp}-y_\ell y_\ellp)y_\ell x_\ellp}_{\cov(\sf{IV}, \sf{V})}\Bigg).
\end{align*}
After expanding $z_\ell$ as $z_\ell=\bar\nu x_\ell + \nu y_\ell + r_\ell$, we split the calculation into multiple parts to simplify it. First, we focus on terms that are multiplied by $(x_{\ell\ellp}-x_\ell x_\ellp)$ and do not contain $r_\ell$ or $r_\ellp$. Using \Cref{lem:calc rules} extensively and the fact that $\bar\pi=1-\pi, \bar\nu=1-\nu$, we find that the sum of these terms equals
\begin{align*}
    & \bar\nu^2\langle x K[x]^2\rangle + \nu^2 \langle x K[y]^2\rangle + 2\bar\nu\nu \langle xK[x]K[y]\rangle
    - \bar\nu^2\langle x K[x]\rangle^2-\nu^2\langle x K[y]\rangle^2 - 2\bar\nu\nu\langle x K[x]\rangle\langle x K[y]\rangle 
    \\
    &+ 4\bar\pi^2 \langle x K[x]^2\rangle - 4\bar\pi^2 \langle xK[x]\rangle^2 
        + (\bar\pi-\pi)^2\langle xK[y]^2\rangle - (\bar\pi-\pi)^2\langle xK[y]\rangle^2 
    \\
    &- 4\bar\pi\bar\nu\langle x K[x]^2\rangle -4\bar\pi \nu \langle xK[x]K[y]\rangle 
        + 4\bar\pi\bar\nu\langle xK[x]\rangle^2+4\bar\pi\nu\langle xK[x]\rangle\langle xK[y]\rangle 
    \\
    &+ 2(\bar\pi-\pi)\bar\nu\langle xK[x]K[y]\rangle + 2(\bar\pi-\pi)\nu\langle xK[y]^2\rangle 
    - 2(\bar\pi-\pi)\bar\nu\langle xK[x]\rangle\langle xK[y]\rangle - 2(\bar\pi-\pi)\nu\langle xK[y]\rangle^2 
    \\
    &- 4\bar\pi(\bar\pi-\pi)\langle xK[x]K[y]\rangle + 4\bar\pi(\bar\pi-\pi)\langle x K[x]\rangle\langle xK[y]\rangle 
    \\
    =& (\bar\nu-2\bar\pi)^2 \Big(\langle xK[x-y]^2\rangle -\langle xK[x-y]\rangle^2\Big) 
    \\
    \leq& \, C \, \|\lambda\|_\infty \mmd^2(P_X,P_Y). 
\end{align*}
Similarly, the terms involving $(y_{\ell\ellp}-y_\ell y_\ellp)$ but not $r_\ell$ or $r_\ellp$ sum up to the quantity
\begin{align*}
    (\nu-2\pi)^2\Big(\langle yK[x-y]^2\rangle - \langle yK[x-y]\rangle^2\Big) &\leq C\|\lambda\|_\infty \mmd^2(P_X,P_Y). 
\end{align*}
Next, collecting the terms involving both $(x_{\ell\ellp}-x_\ell x_\ellp)$ and $r_\ell$ or $r_\ellp$ we get 
\begin{align*}
    &2\bar\nu\langle x K[r]K[x]\rangle +2 \nu\langle x K[r]K[y]\rangle + \langle xK[r]^2\rangle - 2\bar\nu\langle xK[x]\rangle\langle xK[r]\rangle - 2\nu\langle xK[y]\rangle\langle xK[r]\rangle - \langle xK[r]\rangle^2 
    \\
    &-4\bar\pi\langle xK[x]K[r]\rangle + 4\bar\pi\langle xK[x]\rangle\langle xK[r]\rangle 
    \\
    &+ 2(\bar\pi-\pi)\langle xK[y]K[r]\rangle - 2(\bar\pi-\pi)\langle xK[y]\rangle\langle xK[r]\rangle 
    \\
    =&\, 2(\bar\nu-2\bar\pi)\Big(\langle xK[r]K[x-y]\rangle - \langle xK[r]\rangle\langle xK[x-y]\rangle\Big) + \langle xK[r]^2\rangle-\langle xK[r]\rangle^2 
    \\
    \lesssim&\, C\, \|\lambda\|_\infty (R+R^2) \mmd^2(P_X,P_Y). 
\end{align*}
Finally, collecting the terms involving both $(y_{\ell\ellp}-y_\ell y_\ellp)$ and $r_\ell$ or $r_\ellp$ we get
\begin{align*}
    & 2(\nu-2\pi)\Big(\langle yK[r]K[y-x]\rangle - \langle yK[r]\rangle\langle yK[y-x]\rangle\Big) + \langle yK[r]^2\rangle-\langle yK[r]\rangle^2 \\
    \lesssim&\, C \|\lambda\|_\infty (R+R^2) \mmd^2(P_X,P_Y). 
\end{align*}


Similarly we get
\begin{align*}
    \coef\left(\frac1m\right) &= \sum_{\ell\ellp} \lambda_\ell\lambda_\ellp\Bigg(\underbrace{(z_{\ell\ellp}-z_\ell z_\ellp)x_\ell x_\ellp}_{\var(\sf{I})} + \underbrace{(z_{\ell\ellp}-z_\ell z_\ellp)y_\ell y_\ellp}_{\var(\sf{I})} + \underbrace{2(z_{\ell\ellp}-z_\ell z_\ellp)x_\ell y_\ellp}_{\cov(\sf{I}, \sf{II})}\Bigg) 
    \\
    &= \langle zK[x-y]^2\rangle-\langle zK[x-y]\rangle^2 
    \\[1ex]
    &\lesssim C\|\lambda\|_\infty \mmd^2(P_X,P_Y).
\end{align*}
The remaining coefficients don't rely on subtle cancellations, and simple bounds yield
\begin{align*}
    \coef\left(\frac{1}{n(n-1)}\right) &= \sum_{\ell\ellp} \lambda_\ell\lambda_\ellp\Bigg( \underbrace{4\bar\pi^2\left(\frac12(x_{\ell\ellp}^2-x_\ell^2 x_\ellp^2)-(x_{\ell\ellp}-x_\ell x_\ellp)x_\ell x_\ellp\right)}_{\var(\sf{III})}
    \\
    &\qquad\qquad+ \underbrace{4\pi^2\left(\frac12(y_{\ell\ellp}^2-y_\ell^2 y_\ellp^2)-(y_{\ell\ellp}-y_\ell y_\ellp)y_\ell y_\ellp\right)}_{\var(\sf{IV})} \Bigg)
    \\
    &\lesssim C^2 \|\lambda\|_2^2
    \\[2ex]
    \coef\left(\frac{1}{nm}\right) 
    &= \sum_{\ell\ellp} \lambda_\ell\lambda_\ellp\Bigg(\underbrace{-(z_{\ell\ellp}-z_\ell z_\ellp)x_\ell x_\ellp - (x_{\ell\ellp}-x_\ell x_\ellp)z_\ell z_\ellp + (x_{\ell\ellp}z_{\ell\ellp}-x_\ell x_\ellp z_\ell z_\ellp)}_{\var(\sf{I})} 
    \\
    &\qquad - \underbrace{(z_{\ell\ellp}-z_\ell z_\ellp)y_\ell y_\ellp - (y_{\ell\ellp}-y_\ell y_\ellp)z_\ell z_\ellp + (y_{\ell\ellp}z_{\ell\ellp}-y_\ell y_\ellp z_\ell z_\ellp)}_{\var(\sf{I})} \Bigg)
    \\
    &\lesssim C^2 \|\lambda\|_2^2 
    \\[2ex]
    \coef\left(\frac{1}{n^2}\right) 
    &= \sum_{\ell\ellp} \lambda_\ell\lambda_\ellp\Bigg( \underbrace{(\bar\pi-\pi)\left(-(y_{\ell\ellp}-y_\ell y_\ellp)x_\ell x_\ellp - (x_{\ell\ellp}-x_\ell x_\ellp)y_\ell y_\ellp + (x_{\ell\ellp}y_{\ell\ellp}-x_\ell x_\ellp y_\ell y_\ellp)\right)}_{\var(\sf{V})} \Bigg) 
    \\
    &\lesssim C^2 \|\lambda\|_2^2. 
\end{align*}

Summarizing, we've found that 
\begin{equation}\label{eq:var=1/n2+1/mn}
    \begin{aligned}
        \var(T(X,Y,Z)-\gamma(X,Y,\pi)) &\lesssim \left(\frac1n + \frac1m\right) C \|\lambda\|_\infty (1+R^2) \mmd^2(P_X,P_Y) 
        \\&\qquad
        + \left(\frac{1}{n^2} + \frac{1}{nm}\right) C^2 \|\lambda\|_2^2. 
    \end{aligned}
\end{equation}
Using that $\langle \theta_r, \theta_{P_Y}-\theta_{P_X}\rangle_{\cal H_K} = 0$, we compute the expectation to be 
\begin{equation*}
\E\left[-T(X,Y,Z)+\gamma(X,Y,\pi)\right] = (\pi-\nu)\mmd^2(P_X,P_Y).     
\end{equation*}
Taking $\pi \eqdef \delta/2$ and applying Chebyshev's inequality shows that there exists a universal constant $c>0$, such that the testing problem is possible at constant error probability (say $\alpha=5\%$), provided that the sample sizes $m,n$ satisfy the following inequalities:
\begin{align*}
    \min\{m,n\} &\geq c\frac{C\|\lambda\|_\infty(1+R^2)}{\delta^2\epsilon^2} \\
    \min\{n,\sqrt{nm}\} &\geq c\frac{C\|\lambda\|_2}{\delta\epsilon^2}. 
\end{align*}
By repeated sample splitting and majority voting (see \Cref{sec:boosting}), we can boost the success probability of this test to the desired level $1-\alpha$ by incurring a multiplicative $\Theta(\log(1/\alpha))$ factor on the sample sizes $n,m$, which yields the desired result.

\section{\texorpdfstring{ Proof of \Cref{thm:lower bds} }{ Proof of Theorem 3.3 } }

\subsection{Information theoretic tools}

Our lower bounds rely on the method of two fuzzy hypotheses \cite{tsybakov}. Given a measurable space $\cal S$, let $\cal M(\cal S)$ denote the set of all probability measures on $\cal S$. We call subsets $H \subseteq \cal M(\cal S)$ hypotheses. The following is the main technical result that our proofs rely on.

\begin{lemma}\label{lem:lower bound main}
Take hypotheses $H_0,H_1 \subseteq \cal M(\cal S)$ and $P_0,P_1 \in \cal M(\cal S)$ random with $\P(P_i \in H_i) = 1$. Then
\begin{align*}
    \inf\limits_{\psi}\max_{i=0,1}\sup\limits_{P \in H_i} P(\psi\neq i) \geq \frac12\left(1-\TV(\bb E P_0, \bb E P_1)\right),
\end{align*}
where the infimum is over all tests $\psi:\cal X \to \{0,1\}$.
\end{lemma}
\begin{proof}
For any $\psi$
\begin{align*}
    \max_{i=0,1}\sup\limits_{\bb P_i\in H_i} \bb P_i(\psi\neq i) &\geq \frac12\sup\limits_{\bb P_i\in H_i} (\P_0(\psi=1)+\P_1(\psi=0)) \\
    &\geq\frac12\E\Big[P_0(\psi=1)+P_1(\psi=0)\Big].
\end{align*}
Optimizing over $\psi$ we get that the RHS above is equal to $\frac12(1-\TV(\E P_0, \E P_1))$ as required. 
\end{proof}
Therefore, to prove a lower bound on the minimax sample complexity of testing with total error probability $\alpha$, we just need to construct two random measures $P_i \in H_i$ such that $1-\TV(\E P_0, \E P_1) = \Omega(\alpha)$. In our proofs we also use the following standard results on $f$-divergences. 
\begin{lemma}[{{\cite[Section 7]{yuryyihongbook}}}] \label{lem:f-div ineqs}
    For any probability distributions $P,Q$ the inequalities
    \begin{align*}
        1-\TV(P,Q) \geq \frac12\exp(-\KL(P\|Q)) \geq \frac12\frac{1}{1+\chi^2(P\|Q)}
    \end{align*}
    hold. 
\end{lemma}

\begin{lemma}[Chain rule for $\chi^2$-divergence]\label{lem:chi2 chain}
    Let $P_{X,Y}, Q_{X,Y}$ be probability measures such that the marginals on $X$ are equal ($P_X=Q_X$). Then 
    \begin{equation*}
        \chi^2(P_{X,Y} \| Q_{X,Y}) = \chi^2(P_{Y|X} \| Q_{Y|X} | P_X). 
    \end{equation*}
\end{lemma}

\begin{proof}
Let $P_{X,Y}, Q_{X,Y}$ have densities $p,q$ with respect to some $\mu$. Then, by some abuse of notation, we have
    \begin{align*}
        \chi^2(P_{X,Y} \| Q_{X,Y}) &= -1 + \int \frac{p(x,y)^2}{q(x,y)} \D\mu(x,y) \\
        &= -1 + \int \frac{p(y|x)^2 p(x)}{q(y|x)} \D\mu(x,y) \\
        &= \int p(x) \int \left(\frac{p(y|x)^2}{q(y|x)} -1 \right)\D\mu(y,x) \\
        &= \chi^2(P_{Y|X} \| Q_{Y|X} | P_X). 
    \end{align*}
\end{proof}

\subsection{Constructing hard instances}
Recall that in the statement of \Cref{thm:lower bds}, we assume that $\mu(\cal X)=1$, $\sup_{x\in\cal X}K(x,x)\leq1$ and $\int K(x,y) \mu(\D x) \equiv \lambda_1$. Let $f_0\equiv1$ and for each $\eta \in \{\pm1\}^\N$ define
\begin{equation}
    f_\eta = 1 + \epsilon \underbrace{\sum_{j\geq2} \rho_j \eta_j e_j}_{\eqdef g_\eta}
\end{equation}
where $\{\rho_j\}_{j\geq2}$ is chosen as $\rho_j = \one\{2\leq j\leq J\} \sqrt{\lambda_j}/\|\lambda\|_{2,J}$, where we define $\|\lambda\|_{2,J}=\sqrt{\sum_{2\leq j \leq J} \lambda_j^2}$ for some $J\geq2$. Notice that $\int f_\eta(x) \mu(\D x)=\mu(\cal X)=1$ due to orthogonality of the eigenfunctions. Assume from here on that $J$ is chosen so that for all $\eta$ we have $f_\eta(x) \geq 1/2$ for all $x \in \cal X$. This makes $f_\eta$ into a valid probability density with respect to the base measure $\mu$. Before continuing, we prove the following Lemma, which gives a lower bound on the maximal $J$ for which $f_\eta \geq 1/2$ for all $\eta$. 

\begin{lemma}\label{lem:J bound}
$J\leq J^\star_\epsilon$ holds provided $2\epsilon\sqrt{J-1} \leq \|\lambda\|_{2J}$.
\end{lemma}

\begin{proof}[Proof of \Cref{lem:J bound}]
Notice that
\begin{equation}
    \|e_j\|_\infty = \sup_{x\in\cal X}\langle K(x,\cdot),e_j\rangle_{\cal H} \leq \sup_{x \in \cal X} \|K(x,\cdot)\|_{\cal H} \|e_j\|_{\cal H} \leq \frac{1}{\sqrt \lambda _j}, 
\end{equation}
where we use $\|K(x,\cdot)\|_{\cal H} = \sqrt{K(x,x)}$. We have
\begin{align*}
   \|g_\eta\|_\infty &= \epsilon \|\sum_{j\geq2}\rho_j \eta_j e_j\|_\infty
   = \epsilon \sup\limits_{x\in\cal X} \langle K(x,\cdot), \sum_{j\geq2} \rho_j\eta_j e_j\rangle_{\cal H} \\
    &\leq \epsilon \|\sum_{j\geq2} \rho_j\eta_j e_j\|_{\cal H} = \epsilon \sqrt{\sum_{j\geq2} \rho_j^2/\lambda_j} = \frac{\epsilon \sqrt{J-1}}{\|\lambda\|_{2,J}}, 
\end{align*}
and the result follows. 
\end{proof}
Note that \Cref{lem:J bound} immediately gives us a proof of \Cref{cor:lower bd}.
\begin{proof}[Proof of \Cref{cor:lower bd}]
    Suppose that $J$ is such that $\sum_{j=2}^J \lambda_j^2 \geq c^2 \|\lambda\|_2^2$. Then, by \Cref{lem:J bound}, if $\epsilon \leq \|\lambda\|_{2J}/(2\sqrt{J-1})$ then $J\leq J_\epsilon^\star$. By assumption, this is implied by the inequality $\epsilon \leq c\|\lambda\|_2/(2\sqrt{J-1})$, and the result follows. 
\end{proof}

Continuing with our proof, note that by construction we have 
\begin{equation}
    \mmd^2(f_0,f_\eta) = \sum_{j\geq2} \lambda_j\rho_j^2 = \epsilon^2, \quad \forall \eta\in\{\pm1\}^\N. 
\end{equation}

\subsubsection{Lower Bound on \texorpdfstring{$m$}{m}}


Again, we apply \Cref{lem:lower bound main} with the new (deterministic) construction
\begin{equation}
    P_0 = f_0^{\otimes n} \otimes (1+\eps e_2/\sqrt{\lambda_2})^n \otimes (1+\delta\eps e_2/\sqrt{\lambda_2})^{\otimes m}, 
    \qquad 
    P_1 = f_0^{\otimes n} \otimes (1+\eps e_2/\sqrt{\lambda_2})^n \otimes f_0^{\otimes m},
\end{equation}
where we write $f_\one = f_{(1,1,\dots)}$ and similarly for $g_\one$. By the data-processing inequality for $\chi^2$-divergence (also by \Cref{lem:chi2 chain}), we may drop the first $2n$ coordinates and obtain
\begin{align*}
    \chi^2(\E P_0, \E P_1) 
    &= \chi^2((1+\delta\epsilon e_2/\sqrt{\lambda_2})^{\otimes m} \| f_0^{\otimes m}) \\
    &= \left( 1+\delta^2\epsilon^2/{\lambda_2}  \right)^m-1 \\
    &\leq \exp\left(  \frac{\delta^2\epsilon^2m}{\lambda_2} \right)-1. 
\end{align*}
By Lemma \ref{lem:f-div ineqs} we 
\begin{equation*}
    1-\TV(\E P_0, \E P_1) \gtrsim \frac{1}{\chi^2(\E P_0, \E P_1)-1} \geq \exp(-\delta^2\epsilon^2 m) \stackrel{!}{=} \Omega(\alpha). 
\end{equation*}
The lower bound $m \gtrsim \lambda_2\log(1/\alpha)/(\delta\epsilon)^2$ now follows readily. 

\subsubsection{Lower Bound on \texorpdfstring{$n$}{n}}
Once again, we apply \Cref{lem:lower bound main} to the new construction
\begin{equation}
    P_0 = f_0^{\otimes n} \otimes f_\eta^{\otimes n} \otimes f_0^{\otimes m}, \qquad P_1 = f_\eta^{\otimes n} \otimes f_0^{\otimes n} \otimes f_0^{\otimes m},
\end{equation}
where we put a uniform prior on $\eta \in \{\pm1\}^\N$ as before. Using the subadditivity of total variation under products, we compute
\begin{align*}
    \TV(\E P_0, \E P_1) &= \TV(f_0^{\otimes n} \otimes \E f_\eta^{\otimes n}, \E[f_\eta^{\otimes n}] \otimes f_0^{\otimes n}) \\
    &\leq 2 \TV(\E f_\eta^{\otimes n}, f_0^{\otimes n}).
\end{align*}
Just as in \Cref{sec:mn lower} we upper bound by the $\chi^2$-divergence to get
\begin{align*}
    \chi^2(\E f_\eta^{\otimes n} \| f_0^{\otimes n}) &= -1 + \E_{\eta\eta'} \int \prod\limits_{i=1}^n (f_\eta(x_i)f_{\eta'}(x_i)) \mu(\D x_1) \dots \mu(\D x_n) \\
    &\leq -1 + \E \exp(n\epsilon^2\sum_{j\geq2}\rho_j^2\eta_j\eta'_j) \\
    &= -1 + \prod\limits_{j\geq2} \cosh(n\epsilon^2\rho_j^2) \\
    &\leq -1 + \exp(n^2\epsilon^4 \sum_{j\geq2} \rho_j^4) \\
    &= -1 + \exp(n^2\epsilon^4/\|\lambda\|_{2,J}^2). 
\end{align*}
Again, by \Cref{lem:f-div ineqs} we obtain
\begin{equation*}
    1-\TV(\E P_0, \E P_1) \gtrsim \frac{1}{\chi^2(\E P_0 \| \E P_1)-1} \geq \exp(-n^2\epsilon^4/\|\lambda\|_{2,J}^2) \stackrel{!}{=} \Omega(\alpha). 
\end{equation*}
The lower bound $n\gtrsim\sqrt{\log(1/\alpha)}\|\lambda\|_{2,J}/\epsilon^2$ now follows readily.

\subsubsection{Lower Bound on \texorpdfstring{$m \cdot n$}{m*n}}
\label{sec:mn lower}
We take a uniform prior on $\eta$ and consider the random measures
\begin{equation}
    P_0 = f_0^{\otimes n} \otimes f_\eta^{\otimes n} \otimes ((1-\delta)f_0+\delta f_\eta)^{\otimes m} \qquad\text{and}\qquad P_1 = f_0^{\otimes n} \otimes f_\eta^{\otimes n} \otimes f_0^{\otimes m}. 
\end{equation}
Our goal is to apply \Cref{lem:lower bound main} to $P_0,P_1$. Notice that $(1-\delta)f_0+\delta f_\eta = 1 + \delta\epsilon g_\eta$. Let us write $X,Y,Z$ for the marginals first $n$, second $n$ and last $m$ coordinates of $P_0$ and $P_1$. By the data processing inequality and the chain rule \Cref{lem:chi2 chain} we have
\begin{align*}
    \chi^2(\E P_0 \| \E P_1) &= \chi^2((\E P_0)_{Y,Z} \| (\E P_1)_{Y,Z}) \\
    &= \chi^2((\E P_0)_{Z|Y} \| (\E P_1)_{Z|Y} | (\E P_0)_Y) \\
    &= \E \chi^2\left(\E\left[\left.(1+\delta\epsilon g_\eta)^{\otimes m}\right| Y\right] \| f_0^{\otimes m}\right) =: (\dagger).
\end{align*}
Notice that the expectation inside the $\chi^2$-divergence is with respect to $\eta$ given the variables $Y$, or in other words, over the posterior of $\eta$ with uniform prior given $n$ observations from the density $1+\epsilon g_\eta=f_\eta$. The outer expectation is over $Y$. Given $Y$, let $\eta$ and $\eta'$ be i.i.d. from said posterior. We get the bound
\begin{align*}
(\dagger) +1 &\leq \E \int \prod\limits_{i=1}^m (1+\delta\epsilon g_\eta(x_i))(1+\delta\epsilon g_{\eta'}(x_i)) \mu(\D x_i) \\
&= \E (1+\delta^2\epsilon^2 \sum_{j\geq2} \rho_j^2 \eta_j\eta'_j)^m \\
&\leq \E \exp(\delta^2\epsilon^2 m \sum_{j\geq2} \rho_j^2\eta_j\eta'_j). 
\end{align*}
Define the collections of variables $\eta_{-j} = \{\eta_j\}_{j\geq2}\setminus\{\eta_j\}$ and $\eta'_{-j}$ similarly. We shall prove the following claim:
\begin{equation}\label{eqn:lower peeling claim}
\E\left[\left.\exp(\delta^2\epsilon^2 m \rho_j^2 \eta_j\eta'_j) \right| \eta_{-j}\eta'_{-j}\right] \leq \exp(c\delta^2\epsilon^4 (\delta^2 m^2+m n) \rho_j^4)
\end{equation}
for some universal constant $c>0$. Assuming that \eqref{eqn:lower peeling claim} holds, by induction we can show that 
\begin{align*}
    (\dagger)+1 &\leq \exp(c\delta^2(\delta^2m^2+mn)\epsilon^4\sum_{j\geq2}\rho_j^4) \\
    &= \exp(c\delta^2(\delta^2m^2+mn)\epsilon^4/\|\lambda\|_{2,J}^2).
\end{align*}
Thus, if $mn +\delta^2m^2= o\left(\|\lambda\|_{2,J}^2/(\delta^2\epsilon^4)\right)$ then testing is impossible. 

We now prove \eqref{eqn:lower peeling claim}. Since the variable $\eta'_j\eta'_j$ is either $1$ or $-1$, we have
\begin{align*}
\E\left[\left.\exp(\delta^2\epsilon^2 m \rho_j^2 \eta_j\eta'_j) \right| \eta_{-j}\eta'_{-j}\right] &= (e^{\delta^2 \epsilon^2 m \rho_j^2}-e^{-\delta^2 \epsilon^2 m \rho_j^2})\cdot \bb P(\eta_j\eta'_j=1|\eta_{-j}\eta'_{-j}) + e^{-\delta^2 \epsilon^2 m \rho_j^2}.
\end{align*}
Let us write $\eta_{\pm1,j}$ for the vector of signs equal to $\eta$ but whose $j$'th coordinate is $\pm1$ respectively. Looking at the probability above, and using the independence of $\eta,\eta'$ given $Y$, we have
\begin{align*}
    \bb P(\eta_j\eta'_j = 1 | Y,\eta_{-j},\eta'_{-j}) &= \bb P(\eta_j=1|Y,\eta_{-j})^2 + \bb P(\eta_j=-1|Y,\eta_{-j})^2 \\
    &= \frac14\frac{(f_{\eta_{1j}}^{\otimes n}(Y))^2 + (f^{\otimes n}_{\eta_{-1j}}(Y))^2}{\left(\frac12f^{\otimes n}_{\eta_{1j}}(Y) + \frac12f^{\otimes n}_{\eta_{-1j}}(Y)\right)^2}.
\end{align*}
Taking the expectation $\E[\,\cdot\, | \eta_{-j},\eta'_{-j}]$ and using the HM-AM inequality $(\frac12(x+y))^{-1} \leq \frac12(\frac1x+\frac1y)$ valid for all $x,y > 0$ gives
\begin{align*}
    \P(\eta_j\eta'_j=1 | \eta_{-j},\eta'_{-j}) &= \frac14 \int \frac{(\prod_{i=1}^nf_{\eta_{1j}}(x_i))^2 + (\prod_{i=1}^nf_{\eta_{-1j}}(x_i))^2}{\frac12 \prod_{i=1}^nf_{\eta_{1j}}(x_i)+\frac12 \prod_{i=1}^nf_{\eta_{-1j}}(x_i)} \mu(\D x_1) \dots \mu(\D x_n) \\
    &\leq \frac14 + \frac18 \int\left(\frac{(\prod_{i=1}^n f_{\eta_{1j}}(x_i))^2}{\prod_{i=1}^n f_{\eta_{-1j}}(x_i)} + \frac{(\prod_{i=1}^n f_{\eta_{-1j}}(x_i))^2}{\prod_{i=1}^n f_{\eta_{1j}}(x_i)}\right) \mu(\D x_1) \dots \mu(\D x_n) = (\star). 
\end{align*}
Note that $f_{\eta_{1j}} = f_{\eta_{-1j}} + 2\epsilon\rho_je_j$. Using the lower bound $f_{\eta_{\pm1j}}(x) \geq \frac12$ for all $x \in \cal X$ and the inequality $1+x\leq\exp(x)$, we get
\begin{align*}
    (\star) &\leq \frac14 + \frac18
    \left[\left(1 + \int\frac{4\epsilon^2\rho_j^2 e_j^2(x)}{f_{\eta_{-1j}}(x)} \mu(\D x)\right)^n + \left(1 + \int \frac{4\epsilon^2\rho_j^2 e_j^2(x)}{f_{\eta_{1j}}(x)} \mu(\D x)\right)^n\right] \\
    &\leq \frac14(1+e^{8\epsilon^2 n\rho_j^2}). 
\end{align*}
Recall that $(\star)$ is a probability so $(\star)\leq 1$, and we obtain
\begin{align*}
    (\star)\leq \frac14(1+e^{8\epsilon^2 n\rho_j^2 \wedge \ln3}). 
\end{align*}

Putting it together and applying \Cref{lem:local ineqs} we get
\begin{align*}
    \text{LHS of }\eqref{eqn:lower peeling claim} &\leq (e^{\delta^2\epsilon^2m\rho_j^2}-e^{-\delta^2\epsilon^2m\rho_j^2})\frac14(1+e^{8\epsilon^2n\rho_j^2 \wedge \ln3}) + e^{-\delta^2\epsilon^2m\rho_j^2} \\
    &\leq 
    e^{c\delta^2\epsilon^4\rho_j^4(\delta^2 m^2+mn)}
\end{align*}
for universal $c=16>0$. Thus, by \Cref{lem:f-div ineqs} we obtain
\begin{align*}
    1-\TV(\E P_0, \E P_1) \gtrsim \frac{1}{\chi^2(\E P_0, \E P_1)+1} \geq \exp(-c\delta^2\epsilon^4(\delta^2 m^2+mn)/\|\lambda\|_{2,J}^2) \stackrel{!}{=} \Omega(\alpha).
\end{align*}
The necessity of 
$$mn + \delta^2m^2 \gtrsim \frac{\log(1/\alpha) \|\lambda\|_{2,J}^2}{\delta^2\epsilon^4}$$
follows immediately.\footnote{We have $mn + m^2 \leq (\sqrt{mn} + m)^2 \leq 2(mn + m^2)$, so $\sqrt{mn} + m\asymp\sqrt{mn+m^2}$.}


\begin{lemma}\label{lem:local ineqs}
    For $a, b \geq 0$, the following inequality holds:
    \begin{equation*}
        \frac14(e^a-e^{-a})(1+e^{b \wedge \ln3})+e^{-a}\leq e^{2(ab+a^2)}.
\end{equation*}
\end{lemma}
\begin{proof}
    If $b\geq\ln 3$ or $a\geq 1$ we have: 
    \begin{align*}
        \text{LHS}\leq \frac14(e^a-e^{-a})(1+e^{\ln3})+e^{-a}
        = e^a
        \leq
        e^{\frac{b}{\ln3}a+a^2}.
    \end{align*}
    If $b<\ln 3$ and $a<1$, we have 
    $$e^b\leq 1+\frac{2}{\ln3}b\leq 1+2b,\quad \frac{e^a+e^{-a}}{2}\leq e^{a^2},\quad \frac{e^a-e^{-a}}{2}\leq \frac{e-e^{-1}}{2}a\leq 2a,$$ 
    and then
    \begin{align*}
        \frac14(e^a-e^{-a})(1+e^{b})+e^{-a}
        &= \frac{1}{2}(e^a+e^{-a})+\frac{e^b-1}{4}(e^a-e^{-a})\\&\leq e^{a^2}+2ab
        \\&\leq e^{a^2}(1+2ab)\\
        &\leq e^{a^2+2ab}
    \end{align*}
    The result follows from $\ln3>1$. 
\end{proof}

\section{Proofs From \Cref{sec:learning kernels}}

\subsection{Computing $\hat{\sigma}$}\label{sec:sigma defn}
We follow the implementation of $\widehat{\sigma}^2$ in \cite{liu2020learning}.
Given $X_1,\dots,X^{\sf{tr}}_{n_{\sf{tr}}}$ sampled from $P_X$ and $Y_1,\dots,Y^{\sf{tr}}_{n_{\sf{tr}}}$ sampled from $P_Y$, denote 
\begin{equation}
    H_{ij} :=
    K(X^{\sf{tr}}_i, X^{\sf{tr}}_j)
    + K(Y^{\sf{tr}}_i, Y^{\sf{tr}}_j)
    - K(X^{\sf{tr}}_i, Y^{\sf{tr}}_j)
    - K(Y^{\sf{tr}}_i, X^{\sf{tr}}_j),\quad i,j\in[n_{\sf{tr}}].
\end{equation}
Then $\widehat{\sigma}^2$ is computed via
\begin{equation} \label{eq:estimate_sigma_H1}
    \hat{\sigma}^2(X^{n_{\sf{tr}}},Y^{n_{\sf{tr}}};K)=\frac{4}{n_{\sf{tr}}^3} \sum_{i=1}^{n_{\sf{tr}}} \left( \sum_{j=1}^{n_{\sf{tr}}} H_{ij} \right)^2
    - \frac{4}{n_{\sf{tr}}^4}\left( \sum_{i=1}^{n_{\sf{tr}}} \sum_{j=1}^{n_{\sf{tr}}} H_{ij} \right)^2.
\end{equation}
Note that $\hat{\sigma}^2$ being non-negative follows from the AM-GM inequality.

\subsection{Heuristic Justification of the Objective \texorpdfstring{\eqref{eq:J}}{} }\label{sec:training obj justification}
As usual, let $X,Y,Z$ denotes samples of sizes $n,n,m$ from $P_X,P_Y,P_Z$ respectively. Let us give a heuristic justification for using the training objective defined in \eqref{eq:J} for the purpose of obtaining a kernel for $\LF/\mLF$. Note that originally it was proposed as a training objective for kernels to be used in two sample testing. Recall that our test for $\LF$ can be written as
\begin{align*}
    \Psi_{1/2}(X,Y,Z) 
    = \one\Big\{T_{\sf{LF}}
    \geq0\Big\} 
\end{align*}
where
\begin{align*}
T_{\sf{LF}}&=\mmd_u^2(\widehat{P}_Z,\widehat{P}_Y; K) - \mmd_u^2(\widehat{P}_Z,\widehat{P}_X; K),
\end{align*}
Heuristically, to maximize the power of (\ref{eqn:lfht mix def}), we would like to maximize the following population quantity
\begin{equation*}
    J_{\sf{LF}} \eqdef
    \frac{\E_0[T_{\sf{LF}}] - \E_1[T_{\sf{LF}}]}{\sqrt{\var_0(T_{\sf{LF}})}}
\end{equation*}
where 
\begin{align*}
    \E_0[T_{\sf{LF}}]&= \E_{X,Y,Z}[T_\sf{LF}|{P_Z=P_X}] = + \mmd^2(P_X, P_Y; K) ,
    \\
    \E_1[T_{\sf{LF}}]&= \E_{X,Y,Z}[T_\sf{LF}|{P_Z=P_Y}] = -\mmd^2(P_X, P_Y; K).
\end{align*}
Let $T_\sf{TS} = \mmd_u(\hat P_X, \hat P_Y)$ be the usual statistic that is thresholded for two-sample testing. Then, a computation analogous to that in Section \ref{ssec:mean var comp} show (cf. \eqref{eq:var=1/n2+1/mn}) that 
\begin{align*}
    \var_0(T_{\sf{LF}}) &\approx \frac{A(K,P_X,P_Y)}{n} +\frac{A(K,P_X,P_Y)}{m} +\frac{B(K,P_X,P_Y)}{n^2} +\frac{B(K,P_X,P_Y)}{mn},
    \\
    \var_0(T_{\sf{TS}}) &\approx \frac{A(K,P_X,P_Y)}{n} +\frac{B(K,P_X,P_Y)}{n^2}
\end{align*}
for some $A(K)$ and $B(K)$. Therefore, we have approximately
\begin{equation*}
    J_{\sf{LF}} \approx \frac{2 \mmd^2(P_X, P_Y; K)}{\sqrt{1+\frac{n}{m}}\sqrt{\var_0(T_{\sf{TS}})}} 
    \approx 
    2\sqrt{\frac{m}{m+n}}\widehat{J}(X,Y;K)
\end{equation*}
which only differs from our optimization objective defined in \eqref{eq:J} by a constant factor.

Second, notice that $\frac{\mmd(P_X, P_Y; K)}{\sqrt{\var(T_{\sf{TS}})}}$ depends only on $P_X-P_Y$ and that $((1-\delta)P_X+\delta P_Y)-P_X\propto P_Y-P_X$, therefore it is sensible to use \eqref{eq:J} as our training objective for is also sensible for \eqref{eqn:lfht mix def}, and we don't even need to observe the sample $Z$.










\subsection{Proof of \texorpdfstring{\cref{prop: consistent}}{Consistency of Estimating p-value}}
\begin{proof}
    In this proof we regard $\cal D \eqdef (X^\sf{tr},X^\sf{ev},Y^\sf{tr},Y^\sf{ev})$ and the parameters of the kernel $\omega$ as fixed. Recall that we are looking at the problem $\mLF$ with a misspecification parameter $R=0$ (see \Cref{thm: MMDUpper}). Given a test set $\{z_i\}_{i \in [m]}$, our test statistic is $T(\{z_i\}_{i\in[m]}) = \frac{1}{m}\sum_{i=1}^m f(z_i)$ where $$f(z_i) = \frac{1}{n_\sf{ev}}\sum_{j=1}^{n_\sf{ev}} \Big(K_\omega(z_i, Y_j^\sf{ev}) - K_\omega(z_i, X_j^\sf{ev})\Big).$$ In Phase 3 of \Cref{alg:learn_deep_kernel}, we observe the value $\widehat T = T(Z) = \frac1m\sum_{i=1}^m f(Z_i)$ and reject the null hypothesis for large values of $\widehat T$. Thus, the $p$-value is defined as 
    \begin{align*}
        p = p(Z,\cal D) \eqdef \P_{\widetilde Z \sim P_X^{\otimes m}}(T(\widetilde Z) > \widehat T). 
    \end{align*}
    Phase 2 of our \Cref{alg:learn_deep_kernel} produces random variables $T_1,\dots,T_k$ that all have the distribution of $T(\{\widetilde Z_i\}_{i \in [m]})$, so that $\one\{T_r \geq \widehat{T}\}$ ($r=1,\dots,k$) are unbiased estimates of the $p$-value. However, the $T_i$ are not independent, because they sample from the finite collection of calibration samples $X^\sf{cal}$. However, as $n_\sf{cal}\to\infty$ the covariances between $T_{r_1},T_{r_2}$ for $r_1\neq r_2$ tend to zero, and we obtain a consistent estimate of $p$. \end{proof}

    
    

\subsection{Proof of \texorpdfstring{\Cref{prop:exists_kernel}}{Equivalence}}
\begin{proof}
    
The test statistic $T(X,Y,Z)$ in \eqref{eq: teststats} is given by
\begin{equation*}
    T(X,Y,Z) = \frac1m \sum_{i=1}^m
 f_K(Z_i)
 \end{equation*}
where
\begin{equation*}
f_K(z) = \theta_{\widehat{P}_Y}(z) -\theta_{\widehat{P}_X}(z).
\end{equation*}
This simplifies to (consider $K(x, y)=f(x)f(y)$)
\begin{equation*}        
    f_K(z) = \left(\frac1n\sum_{j=1}^n  f(Y_j)   -\frac1n\sum_{j=1}^n  f(X_j)
       \right)f(z)=C(X, Y)f(z).
\end{equation*}where $C(X, Y)$ does not depend on $z$. Therefore, for any witness function $f$, we obtain the desired additive test. 
\end{proof}

\subsection{Additive Test Statistics}\label{sec: benchmarks}
In this section we prove accordingly that the test statistics of all of \textbf{MMD-M/G/O}, \textbf{SCHE}, \textbf{LBI}, \textbf{UME}, \textbf{RFM} are of the form $T_f(Z)=\frac1m\sum_{i=1}^mf(Z_i)$ (where $f$ might depends on $X,Y$). The test is to compare $T_f(Z)$ with some threshold $\gamma(X,Y)$.

Note that in the setting of \Cref{alg:learn_deep_kernel}, the $X$ and $Y$ here correspond to $X^{\sf{ev}}$ and $Y^{\sf{ev}}$.

\textbf{MMD-M/G/O}\quad As described in \eqref{eq: teststats} we have
\begin{align*}
    T_f(Z)=\frac1m\sum_{i=1}^m \left( \frac{1}{n} \sum_{j=1}^n \left( K(Z_i,Y_j)-K(Z_i,X_j) \right) \right).
\end{align*}

\textbf{SCHE}\quad As described in \Cref{sec: Classifier} we have
\begin{align*}
    T_f(Z)=\frac1m\sum_{i=1}^m \one\{\phi(Z_i)>t\}.
\end{align*}

\textbf{LBI}\quad As described in \Cref{sec: Classifier} we have
\begin{align*}
    T_f(Z)=\frac1m\sum_{i=1}^m \log\left(\frac{\phi(Z_i)}{1-\phi(Z_i)}\right).
\end{align*}

\textbf{UME}\quad As described in \cite{jitkrittum2018informative}, the UME statistic evaluates the squared witness function at ${J_q}$ test locations $W=\{w_k\}_{k=1}^{J_q}\subset\mc{X}$. Formally for any two distributions $P,Q$ we define
\begin{align*}
    U^2(P,Q) = \norm{\theta_Q-\theta_P}_{L^2(W)}^2 = \frac{1}{{J_q}} \sum_{k=1}^{J_q} (\theta_Q(w_k)-\theta_P(w_k))^2.
\end{align*}
However, we note a crucial difference that their result only considers the case of $n=m$, and their proposed estimator for $U^2(P_Z,P_X)$ can not be naturally extended to the case of $n\neq m$. Here we generalize it to $m\neq n$ where we (conveniently) use a biased estimate of their distance. 
 Given samples $X,Y,Z$ and a set of witness locations $W$, the test statistic is a (biased yet) consistent estimator of $U^2(P_Z, P_Y)-U^2(P_Z, P_X)$. Let $\psi_W(z)=\frac{1}{\sqrt{J_q}}(K(z,w_1),\dots,K(z,w_{J_q}))\in \mathbb{R}^{|W|}$ be the ``feature function,'' then:
\begin{align*}
    \widehat{U}^2(Z,X) &= 
    \left\|\frac{1}{m}\sum_{i=1}^{m}\psi_W\left(Z_i\right) -\frac{1}{n}\sum_{j=1}^{n}\psi_W\left(X_i\right)\right\|^2_2
    \\
    &=
    \left\|\frac{1}{m}\sum_{i=1}^m\psi_W\left(Z_i\right)\right\|_2^2
    +\left\|\frac{1}{n}\sum_{j=1}^n\psi_W\left(X_i\right)\right\|_2^2
    -\frac{2}{mn}\sum_{1\leq i\leq m, 1\leq j\leq n}\langle\psi_W\left(Z_i\right), \psi_W\left(X_j\right)\rangle
\end{align*}
Here $\braket{\cdot,\cdot}$ denotes the usual inner product.
Therefore, the difference between distances is
\begin{align*}
    \widehat{U}^2(Z,Y)-\widehat{U}^2(Z,X)
    =
    \frac1m\sum_{i=1}^m \left\langle  \psi_W\left(Z_i\right),
    \frac2n\sum_{j=1}^n (\psi_W\left(X_j\right)-\psi_W\left(Y_j) \right)  \right\rangle
    + F(X, Y)
\end{align*}
where $F$ is sum function based only on $X, Y$. This is clearly an additive statistic for $Z$.

\textbf{RFM}\quad Algorithm 1 in \cite{radhakrishnan2022feature} describes a method for learning a kernel from data given a binary classification task. For convenience lets concatenate the data to $X^{\text{RFM}}=(X,Y)\in\R^{2n\times d}$ and labels $y^{\text{RFM}}=(\vec{0}_n,\vec{1}_n)\in\R^{1\times 2n}$. Given a learned kernel $K$, we write the Gram matrix as $(K(X^{\text{RFM}},X^{\text{RFM}}))_{i,j}=K(X^{\text{RFM}}_i,X^{\text{RFM}}_j)$ ($1\leq i,j\leq 2n$). Let $K(X^{\text{RFM}},z)$ be a column vector with components $K(X^{\text{RFM}}_i,z)$ ($1\leq i\leq 2n$). The classifier is then defined as 
\begin{align}\label{eq:RFM classifier}
    f^{\text{RFM}}(z) = y^{\text{RFM}} \cdot K(X^{\text{RFM}},X^{\text{RFM}})^{-1} \cdot K(X^{\text{RFM}},z). 
\end{align}
Though in \cite{radhakrishnan2022feature} the kernel learned from RFM is used to construct a classifier as in \Cref{eq:RFM classifier}, since RFM is a feature learning method, we also apply the RFM kernel to our MMD test, namely 
\begin{align*}
    f^{\text{RFM to MMD}}(z) = \frac{1}{n} \sum_{j=1}^n \left( K(z,Y_j)-K(z,X_j) \right). 
\end{align*}


\section{Application: Diffusion Models vs CIFAR}\label{sec: CIFAR}
We defer a more fine-grained detail to our code submission, which includes executable programs (with PyTorch) once the  data-generating script from DDPM has been run (see README in the ./codes/CIFAR folder). 

\subsection{Dataset Details}
We use the CIFAR-10 dataset available online at \url{https://www.cs.toronto.edu/~kriz/cifar.html}, which contains 50000 colored images of size $32\times 32$ with 10 classes. For the diffusion generated images, we use the SOTA Hugging Face model (DDPM) that can be found at \url{https://huggingface.co/google/ddpm-CIFAR-10-32}. We generated 10000 artificial images for our experiments. The code can be found at our code supplements.

For dataset balancing, we randomly shuffled the CIFAR-10 dataset and used 10000 images as data in our code. Most of our experiments are conducted with the null $P_X$ as CIFAR images, and the alternate as $P_Y=\frac{2}{3}\cdot\text{CIFAR}+\frac{1}{3}\cdot\text{DDPM}$. To this end, we matched 20000 images from CIFAR to belong to the alternate hypothesis, and the remaining 30000 images to stay in the null hypothesis. For the alternate dataset, we simply sample without replacement from the $20000+10000$ mixture. This sampled distribution is \emph{almost} the same as mixing (so long as the sample bank is large enough compared to the acquired data, so that each item in the alternate has close to $1/3$ probability of being in DDPM, 
which is indeed the case). 
\begin{figure}
     \centering
     \includegraphics[scale=0.9]{CIFAR_visual.pdf}
     \includegraphics[scale=0.9]{Diffusion_visual.pdf}
     \caption{Data visualization for CIFAR-10 (left) vs DDPM diffusion generated images (right)} \end{figure}
 
\subsection{Experiment Setup and Benchmarks}
We use a standard deep Conv-net \cite{lopez2016revisiting}, which has been employed for SOTA GAN discriminator tasks in similar settings. It has four convolutional layers and one fully connected layer outputting the feature space of size $(300, 1)$. For SCHE and LBI, we simply added a linear layer of $(300, 2)$ after applying ReLU to the 300-dimensional layer and used the cross-entropy loss to train the network. Note that this is equivalent to first fixing the feature space and then performing logistic regression to the feature space. For kernels, we add extra trainable parameters after the $300$-d feature output.

For the $\mmd$-based  tests, we simply train the kernel on the neural net and evaluate our objective. For UME, we used a slightly generalized version of the original statistic in \cite{jitkrittum2018informative} which allows for comparison on randomly selected witness locations in the null hypothesis with $m\neq n$ (see \Cref{sec: benchmarks}). The kernel is trained using our heuristic (see \eqref{eq:J} and \Cref{sec:training obj justification}), with MMD replaced by UME. The formula for UME variance can be found in \cite{jitkrittum2018informative}.
For RFM, we use Algorithm 1 in \cite{radhakrishnan2022feature} to learn a kernel on (stochastic batched) samples, and then use our MMD test on the trained kernel.

We use 80 training epochs for most of our code from the CNN architecture (for classifiers, this is well after interpolating the training data and roughly when validation loss stops decreasing), and a batch size of 32 which has a slight empirical benefit compared to larger batch sizes. The learning rates are tuned separately in MMD methods for optimality, whereas for classifiers they follow the discriminator's original setting from \cite{lopez2016revisiting}. In Phase 2 of Algorithm 1, we choose $k=1000$ for the desired precision while not compromising runtime. For each task, we run $10$ independent models and report their performances as the mean and standard deviation of those $10$ runs as estimates. We refer to a full set of hyper-parameters in our code implementation. 

Our code is implemented in Python 3.7 (PyTorch 1.1) and was ran on an NVIDIA RTX 3080 GPU equipped with a standard torch library and dataset extensions. Our code setup for feature extraction is similar to that of \cite{liu2020learning}. For benchmark implementations, our code follows from the original code templated provided by the cited papers.

\begin{figure}
    \centering
   \includegraphics[scale=0.45]{Legend_CIFAR_sup.png}
    \includegraphics[scale=0.35]{fig2_sup.pdf}
    \caption{Relevant plots following the setting in \Cref{fig:cifar main} (in the main text) of fixing $n_{\sf{tr}}=1920$ and varying sample size $m$ in the x-axis for the comparison with missing benchmarks. Errorbars are projected showing standard deviation across 10 runs. We replaced part (d) in \Cref{fig:cifar main} (in the main text) to a sanity check in our FPR when thresholded at $\alpha=0.05$.} \label{fig: cifar_sup}
\end{figure}

\subsection{Sample Allocation}\label{appendix:sample allocation}
We make a comment on why \eqref{eq: test} is \emph{different} from just thresholding $\widehat {\mmd^2}(Z,Y^{\sf{tr}})-\widehat{\mmd^2}(Z,X^{\sf{tr}})$ at 0, which was what we did in part (c) of \Cref{fig:cifar main} (and hence the difference along the curve of MMD-M vs Figure \ref{fig:trade off}). Our theory assumes that the samples are i.i.d. conditioned on the kernel being chosen already. However, in the experiments, the kernel is dependent on the training data. Therefore, to evaluate the MMD estimate (between experimentations), one needs extra data that does not intersect with training.

In fact, it can be experimentally shown by comparing Figure \ref{fig:trade off} and Figure 2(c) that doing so (while reducing the sample complexity on 
$n_{\sf{ev}}$) hurts performance. Indeed, we found out that when $X^{\sf{ev}},Y^{\sf{ev}}$ are non-intersecting with training, performance is (almost) always better at a cost of hurting the overall sample complexity of $n$.
\subsection{Remarks on Results}
\Cref{fig: cifar_sup} lists all of our benchmarks in the setting of \Cref{fig:cifar main} (in the main text) on missing benchmarks, where the last figure is replaced by the false positive rate at thresholding at $\alpha=0.05$ to verify our results. As mentioned in the main text, our MMD-M method consistently outperforms other benchmarks on both the expected $p$-value (of alternate) and rejection rate at $\alpha=0.05$, while all of our tests observe a empirical false positive rate close to $\alpha=0.05\%$ (Part (b)), showing the consistency of methods.

\section{Application: Higgs-Boson Detection}
\label{appendix:higggs}
\subsection{Dataset Details}
We use the Higgs dataset available online at \url{http://archive.ics.uci.edu/ml/datasets/HIGGS}, produced using Monte Carlo simulations \cite{baldi2014searching}. The dataset is nearly balanced, containing $5,829,122$ signal instances and $5,170,877$ background instances. Each instance is a $28$-dimensional vector, consisting of $28$ features. The first $21$ features are  kinematic properties measured by the detectors in the accelerator, such as momentum and energy. The last $7$ properties are \emph{invariant masses}, derived from the first $21$ features.

\begin{figure}[ht]
    \centering
    \includegraphics[width=.7\textwidth]{Higgs_kernel_m_wbb_new.pdf}
    \caption{This figure visualizes the distribution of the $26$th feature, the invariant mass $m_{Wbb}$. The red and black lines are the histograms of the original dataset. We employ MMD-M as a classifier, trained and evaluated using $n_{\sf{tr}}=1.3\times 10^6$ and $n_{\sf{ev}}=n_{\sf{opt}}=2\times 10^4$ through \Cref{algo:higgs_thres}. The blue(green) line represents all instances $z$'s whose ``witness scores'' $f(z;X^{\sf{ev}},Y^{\sf{ev}})$'s are larger(smaller) than $t_{\sf{opt}}$.}
    \label{fig:invariant_mass}
\end{figure}

\subsection{Experiment Setup and Training Models}

The modified \Cref{alg:learn_deep_kernel} is shown in \Cref{algo:higgs_orig} and \Cref{algo:higgs_thres}. Compared with \Cref{algo:higgs_orig}, we implement the thresholding trick (\Cref{sec:threshold trick}) in \Cref{algo:higgs_thres}.

\subsubsection{Configuration and Model Architecture}
We implement all methods in Python 3.9 and PyTorch 1.13 and run them on an NVIDIA Quadro RTX 8000 GPU. 

For all classifier-based methods in this study ({SCHE} and {LBI}), we adopt the same architecture as previously proposed in \cite{baldi2014searching}. The classifiers are six-layer neural networks with 300 hidden units in each layer, all employing the tanh activation function. For {SCHE}, the output layer is a single sigmoid unit and we utilize the binary cross-entropy loss for training. For {LBI}, the output layer is a linear unit and we utilize the binary cross entropy loss combined with a logit function (which is more numerically stable than simply using a sigmoid layer followed by a cross entropy loss).

For all MMD-based methods ({MMD-M}, {MMD-G}, {MMD-O}, and {UME}), the networks $\varphi$ and $\varphi'$ are both six-layer neural networks with 300 ReLU units in each layer. The feature space, which is the output of the neural network $\varphi$, is set to be $100$-dimensional. Here {UME} has the same kernel architecture as {MMD-M}, and the number of test locations is set to be $J_q=4096$. For {RFM}, we adopt the same architecture as in \cite{radhakrishnan2022feature}, where the kernel is $K_M(x,y)=\exp(-\gamma(x-y)^TM(x-y))$ with a constant $\gamma$ and a learnable positive semi-definite matrix $M$. We set $\gamma\equiv1$.

The neural networks are initialized using the default setting in PyTorch, and the bandwidths $\sigma,\sigma'$ are initialized using the \emph{median heuristic} \citep{gretton2012optimal}. The parameter $\tau$ is initially set to $0.5$. For UME, the witness locations $W$ are initially randomly sampled from the training set. For RFM, the initial $M$ equals the median bandwidth times an identity matrix.

\subsubsection{Training}

The size of our training set, denoted as $n_{\sf{tr}}$, varies from $1.0\times 10^2$ to $1.6\times 10^6$. For a given $n_{\sf{tr}}$, we select the first $n_{\sf{tr}}$ datapoints from each class of the Higgs dataset to form $X^\sf{tr}$ and $Y^\sf{tr}$, i.e., $|X^\sf{tr}|=|Y^\sf{tr}|=n_{\sf{tr}}$. Subsequently, we randomly select $n_{\sf{validation}}=\min(\sqrt{10 n_{\sf{tr}}},0.1 n_{\sf{tr}})$ points from each of $X_{\sf{tr}},Y_{\sf{tr}}$ to constitute the validation set, while the remainder of $X_{\sf{tr}},Y_{\sf{tr}}$ are used for running gradient descent. The optimizer is set to be a minibatch SGD, with a batch size of $1024$, a learning rate of $0.001$, and a momentum of $0.99$. Training is halted once the validation loss stops to decrease for $10$ epochs, then we choose the checkpoint (saved for each epoch) with the smallest validation loss thus far as our trained model. Beyond the general setting above, in RFM a batch size of $1024$ doesn't work well and instead we use a batch size of $20,000$.

\subsection{Evaluating the Performance}

\subsubsection{Evaluating the p-Value with the Methodology of \Cref{alg:learn_deep_kernel}}
We call the ``witness score'' of an instance $z\in\mc{X}$ as
\begin{equation}
\label{eq: appendix_score}
    f(z;X^\sf{ev},Y^\sf{ev})= \frac{1}{n_\sf{cal}} \sum_{i=1}^{n_{\sf{cal}}} \left( k(z,Y_i^\sf{ev})-k(z,X_i^\sf{ev})\right).
\end{equation}
For a vector of instances $Z=(Z_1,\ldots,Z_m)$, we write $$f(Z;X^\sf{ev},Y^\sf{ev})=(f(Z_1;X^\sf{ev},Y^\sf{ev}),\ldots,f(Z_m;X^\sf{ev},Y^\sf{ev})).$$
The testing procedure is summarized in Phases 2, 3 and 4 in  \Cref{algo:higgs_orig} and  \Cref{algo:higgs_thres}. In the Higgs experiment, we utilize the Gaussian approximation method to determine the p-values when the witness function $f$ is not thresholded, which allows us to reach very small p-values and errors under limited computational resource. In cases where the score function $f$ is thresholded by a value $t$, using the Binomial distribution as in \Cref{algo:higgs_thres} is more precise and also fast enough. 

Given a trained kernel $K$ trained on $X^{\sf{tr}}$ and $Y^{\sf{tr}}$, we set $X^\sf{ev}=X^\sf{tr}$ and $Y^\sf{ev}=Y^\sf{tr}$, and accordingly $n_{\sf{ev}}=n_{\sf{tr}}$. This results in a more efficient use of data (since we reuse $X^{\sf{tr}},Y^{\sf{tr}}$ also as $X^{\sf{ev}},Y^{\sf{ev}}$). Then, out of the untouched portion of the data, we randomly choose $n_\sf{cal}=20,000$ datapoints from both classes to populate $X^\sf{cal}$ and $Y^\sf{cal}$, i.e., $|X^\sf{cal}|=|Y^\sf{cal}|=n_\sf{cal}=20,000$.
In addition to the general setting above, for RFM, we need to solve a $2n_{\sf{ev}}$-dimensional linear equation during inference, which arises from the inverse matrix in \Cref{eq:RFM classifier} (solving $K(X^{\text{RFM}},X^{\text{RFM}}) \boldsymbol{u} =(y^{\text{RFM}})^T$ for $\boldsymbol{u}\in\R^{2n_{\sf{ev}}}$). So we set $n_{\sf{ev}}=\min(n_{\sf{tr}},10,000)$ that $X_{\sf{ev}},Y_{\sf{ev}}$ are randomly sampled from the training set.

In order to compare different benchmarks, we evaluate the expected significance of discovery on a mixture of 1000 backgrounds and 100 signals. For each benchmark and each $n_{\sf{tr}}$, we train $10$ independent models. Then for each trained model we proceed through the Phases 2, 3 (and 4) in \Cref{algo:higgs_orig} and \Cref{algo:higgs_thres} by $10$ times for $10$ different $(X^{\sf{ev}},X^{\sf{cal}},X^{\sf{opt}},Y^{\sf{ev}},Y^{\sf{cal}},Y^{\sf{opt}})$.  The mean and standard deviation from these $100$ runs are reported in \Cref{fig:full_higgs_discover}.

We also display in \Cref{fig: higgs_tradeoff_sup} the trade-off b $(m,n_{\sf{ev}})$ and $(m, n_{\sf{tr}})$ to reach certain levels of significance of discovery in MMD-M. From the bottom left plot, we see that the (averaged) significance is not sensitive to $n_{\sf{ev}}$ when $\lg n_{\sf{ev}}$ is large. So taking $n_{\sf{ev}}=20,000$ is sufficient.  

\subsubsection{Evaluating the Error of the Test \eqref{eq: test}}

We set the parameters to be $\delta=0.1$ and $\pi=\frac12\delta$ in our experiments. As explained \Cref{appendix:sample allocation}, here we no longer take $X^{\sf{ev}}=X^{\sf{tr}}$. Empirically, taking $X^{\sf{ev}}=X^{\sf{tr}}$ yields a very bad threshold $\gamma(X^{\sf{ev}},Y^{\sf{ev}},\pi)$.\footnote{If the kernel $K(\cdot,\cdot)=K_{X^{\sf{tr}},Y^{\sf{tr}}}(\cdot,\cdot)$ is independent of $X^{\sf{ev}},Y^{\sf{ev}}$, then we have $\gamma(X^{\sf{ev}},Y^{\sf{ev}},\delta/2)\approx\frac12\left(\E_{Z\sim P_x}[T(X^{\sf{ev}},Y^{\sf{ev}},Z)]+\E_{Z\sim\delta P_Y+(1-\delta)P_X}[T(X^{\sf{ev}},Y^{\sf{ev}},Z)]\right)$. However this is no longer true if $(X^{\sf{tr}},Y^{\sf{tr}})$ and $(X^{\sf{ev}},Y^{\sf{ev}})$ intersect.} 
Instead, 
$X^{\sf{ev}}$ is sampled from untouched datapoints other than $X^{\sf{tr}}$, and the same applies for $Y$. We still take $n_{\sf{ev}}=n_{\sf{tr}}$ here, resulting in a total size of $n_{\sf{ev}}+n_{\sf{tr}}=2n_{\sf{tr}}$. Specifically, when $n_{\sf{ev}}\geq 10,000$, computing a $n_{\sf{ev}}\times n_{\sf{ev}}$ Gram matrix becomes computationally expensive, so we adopt Monte Carlo method to compute $\gamma(X^{\sf{ev}},Y^{\sf{ev}},\pi)$, in which we subsample $10,000$ points from $X^{\sf{ev}}$ and $Y^{\sf{ev}}$ to calculate $\gamma$ and repeat this process 100 times.

Again, we utilize the Gaussian approximation. Recall that the test is to compare $T=\frac1m\sum_{i=1}^m f(Z_i)$ with $\gamma$. The type 1 and type 2 error are estimated as $\text{CDF}_{\mc{N}(0,1)} \left( -\frac{ \gamma(X^{\sf{ev}},Y^{\sf{ev}},\pi)-\E[f|H_0] }{\sqrt{\var(f|H_0)/m}} \right)$ and $\text{CDF}_{\mc{N}(0,1)} \left(-\frac{\E[f|H_1]-\gamma(X^{\sf{ev}},Y^{\sf{ev}},\pi)}{\sqrt{\var(f|H_1)/m}}\right)$ for the witness function $f$, which can be estimated efficiently using the calibration samples $X^\sf{cal},Y^\sf{cal}$. 

We consider both the regimes of fixing kernels and varying kernels (training kernel based on $n$).  The results are shown in the top plot in Figure \ref{fig:trade off} and the top plot in \Cref{fig: higgs_tradeoff_sup}. For each point on the plot, we train 30 independent models and test each model 10 times, and report the average of these 300 runs. In both plots, we observe the asymmetric $m$ vs $n$ trade-off.
\\


\begin{breakablealgorithm}
    \caption{Estimate the significance of discovery of an input $Z_{\text{test}}$, using the original statistic}
    \begin{algorithmic}
    \label{algo:higgs_orig}
        \STATE \textbf{Input: } $(X^\sf{tr}$, $X^\sf{ev}, X^\sf{cal})$, $(Y^\sf{tr}, Y^\sf{ev}, Y^\sf{cal})$; parametrized kernel $K_\omega$; input $Z_{\text{test}}$.
        \STATE \textit{\# Phase 1: Kernel training on $X^\sf{tr}$ and $Y^\sf{tr}$\hfill}
            \STATE $\omega \gets \arg\max_\omega^{\text{optimizer}}\hat J(X^{\sf{tr}}, Y^{\sf{tr}}; K_w)$ \hfill \textit{\# maximize objective $\widehat J(X^\sf{tr},Y^\sf{tr};K_\omega)$ as in \eqref{eq:J} }
        \vspace{1mm}
        
        \STATE \textit{\# Phase 2: Distributional calibration of test statistic}
            \STATE  $\text{Scores}^{(0)} \gets f(X^\sf{cal};X^\sf{ev},Y^\sf{ev}) $
                        \hfill \textit{\# $\text{Scores}^{(0)}$ has a length of $n_{\sf{cal}}$}
            \STATE  $\text{Scores}^{(1)} \gets f(Y^\sf{cal};X^\sf{ev},Y^\sf{ev})$ 
                        \hfill \textit{\# $\text{Scores}^{(1)}$ has a length of $n_{\sf{cal}}$}
            \STATE $\theta_0 \gets \text{mean}(\text{Scores}^{(0)})$ 
                        \hfill \textit{\# estimate $\E[f(Z)|Z\sim P_X]$}
            \STATE $\theta_1 \gets \text{mean}(\text{Scores}^{(1)})$ 
                        \hfill \textit{\# estimate $\E[f(Z)|Z\sim P_Y]$}
            \STATE $\sigma_0 \gets \text{std}(\text{Scores}^{(0)})$ 
                        \hfill \textit{\# estimate $\sqrt{\var[f(Z)|Z\sim P_X]}$ }
        \vspace{1mm}
        
        \STATE \textit{\# Phase 3: Inference with input $Z_{\text{test}}$}
            \STATE $m\, \gets \mathrm{length}(Z_{\text{test}})$
            \STATE $\,T\, \gets T_f(Z_{\text{test}};X^\sf{ev},Y^\sf{ev})=\text{mean}( f(Z_{\text{test}}; X^\sf{ev},Y^\sf{ev} ))$
                        \hfill \textit{\# compute test statistic}
            \STATE $ Z_{\text{discovery}} \gets \frac{T-\theta_0}{\sigma_0/\sqrt{m}}$ 
        \STATE \textbf{Output: } Estimated significance: $Z_{\text{discovery}}$ 
    \end{algorithmic}
\end{breakablealgorithm}

\begin{breakablealgorithm}
    \caption{Estimate the significance of discovery of an input $Z_{\text{test}}$, applying the thresholding trick}
    \begin{algorithmic}
    \label{algo:higgs_thres}
        \STATE \textbf{Input: } $(X^\sf{tr}$, $X^\sf{ev}, X^\sf{cal}, X^\sf{opt})$, $(Y^\sf{tr}, Y^\sf{ev}, Y^\sf{cal}, Y^\sf{opt}$); parametrized kernel $K_\omega$; input $Z_{\text{test}}$.
        \STATE \textit{\# Phase 1: Kernel training on $X^\sf{tr}$ and $Y^\sf{tr}$\hfill}
            \STATE $\omega \gets \arg\max_\omega^{\text{optimizer}}\hat J(X^{\sf{tr}}, Y^{\sf{tr}}; K_w)$ \hfill \textit{\# maximize objective $\widehat J(X^\sf{tr},Y^\sf{tr};K_\omega)$ as in \eqref{eq:J} }
        \vspace{1mm}
        
        \STATE \textit{\# Phase 2: Find the best threshold}
            \STATE $ \text{Scores}^{(0)} \gets f(X^\sf{opt};X^\sf{ev},Y^\sf{ev})$
            \STATE $ \text{Scores}^{(1)} \gets f(Y^\sf{opt};X^\sf{ev},Y^\sf{ev})\hfill \textit{\# witness function as in \eqref{eq: appendix_score}}$
            \FOR{$i = 1,2,...,2n_{\sf{opt}}$}
                \STATE $t = (\text{Scores}^{(0)}\cup\text{Scores}^{(1)})[i]$
                    \STATE $\text{TP}, \text{TN} = \text{mean}(\text{Scores}^{(1)} > t),  \text{mean}(\text{Scores}^{(0)} < t)$ \hfill \textit{\# true positive and true negative rate}
                \STATE $\text{power}_i=\frac{\text{TP}+\text{TN}-1}{\sqrt{\text{TN}(1-\text{TN})}}$\hfill \textit{\# find $t$ to maximize the (estimated) p-value}
            \ENDFOR
            \STATE $t_{\sf{opt}} = (\text{Scores}^{(0)}\cup\text{Scores}^{(1)})[\argmax_i \text{power}_i$]
        \vspace{1mm}
        
        \STATE \textit{\# Phase 3: Distributional calibration of test statistic (under null hypothesis)}
            \STATE  $\text{Scores}^{(0)} \gets (f(X^\sf{cal};X^\sf{ev},Y^\sf{ev})>t)$ 
                        \hfill \textit{\# $\text{Scores}^{(0)}\in\{0,1\}^{n_{\sf{ev}}}$}
            \STATE  $\text{Scores}^{(1)} \gets (f(Y^\sf{cal};X^\sf{ev},Y^\sf{ev})>t)$ 
                        \hfill \textit{\# $\text{Scores}^{(1)}\in\{0,1\}^{n_{\sf{ev}}}$}
            \STATE $\theta_0 \gets \text{mean}(\text{Scores}^{(0)})$ 
                        \hfill \textit{\# estimate $\E[f_t(Z)|Z\sim P_X]\in[0,1]$}
            \STATE $\theta_1 \gets \text{mean}(\text{Scores}^{(1)})$ 
                        \hfill \textit{\# estimate $\E[f_t(Z)|Z\sim P_Y]\in[0,1]$}
        \vspace{1mm}
        \STATE \textit{\# Phase 4: Inference with input $Z_{\text{test}}$}
        \vspace{1mm}
            \STATE $m\, \gets \mathrm{length}(Z_{\text{test}})$
            \STATE $\,T\, \gets T_f(Z_{\text{test}};X^\sf{ev},Y^\sf{ev})=\text{mean}( f(Z_{\text{test}}; X^\sf{ev},Y^\sf{ev})>t)$
                        \hfill \textit{\# compute test statistic}
            \STATE $Z_{\text{discovery}} \gets \textup{CDF}^{-1}_{\mc{N}(0,1)}(\textup{CDF}_{\textup{Bin}(m,\theta_0)}(T))$ 
        \STATE \textbf{Output: } Estimated significance: $Z_{\text{discovery}}$ 
    \end{algorithmic}
\end{breakablealgorithm}



\begin{figure}[H]
    \centering
    \includegraphics[width=0.7\textwidth]{Significance_of_discovery_Oct_appendix.pdf}
    \caption{Complete image of Figure \ref{fig:trade off} in the main text. The mean and standard deviation are calculated based on $100$ runs. See \Cref{appendix:higggs} for details.}
    \label{fig:full_higgs_discover}
\end{figure}

\begin{figure}[H]
    \centering
    \includegraphics[width=0.65\textwidth]{higgs_appendix.pdf}
    \caption{The top plot displays the 
    $(m,n_{\sf{ev}})$ trade-off to reach certain levels of total error  using $n_{\sf{tr}}=1.3\times 10^6$ in MMD-M. The bottom figures show the trade-off of $(m,n_{\sf{ev}})$ and $(m, n_{\sf{tr}})$ to reach certain level of significance of discovery in MMD-M. In the bottom left figure, we fix $n_{\sf{tr}}=1.3\times 10^6$. In the bottom right figure, we fix $n_{\sf{ev}}=20,000$. See \Cref{appendix:higggs} for details.}\label{fig: higgs_tradeoff_sup}
\end{figure}

\section{Limitations and Future Directions}
Finally, we discuss several limitations of our work and raise open questions that we hope will be addressed in future works. From the theoretical side of our arguments, we point out several aspects. First, our upper bound (on the minimax sample complexity) \Cref{thm: MMDUpper} has a likely sub-optimal dependence on $\alpha, \delta$. Second, it might be possible to improve our lower bound to a more natural form by replacing $\|\lambda\|_{2,J_{\epsilon}^\star}$ by $\|\lambda\|_2$ and removing the constraint that the top eigenfunction has to be constant. Third, it remains open to extend our theory to include data-dependent $K$, as opposed to fixed $K$. 

Empirically, our proposal \Cref{alg:learn_deep_kernel} can be inefficient in Phase 2 (prior works such as \cite{liu2020learning} have used permutation-based arguments for a more efficient estimate), which we adopted due to its simplicity and universality in all benchmarks. Moreover, one might hope that $\LF$/$\mLF$ can be extended to more complex applications, such as text data or videos. Such questions are important to investigate as a future direction.

\bibliography{bib}